\documentclass{article} 
\usepackage{iclr2023_conference,times}


\usepackage{amsmath,amsfonts,bm}









\def\eqref#1{equation~\ref{#1}}









\def\1{\bm{1}}










\DeclareMathAlphabet{\mathsfit}{\encodingdefault}{\sfdefault}{m}{sl}
\SetMathAlphabet{\mathsfit}{bold}{\encodingdefault}{\sfdefault}{bx}{n}













\usepackage[utf8]{inputenc} 
\usepackage[T1]{fontenc}    
\usepackage{hyperref}       
\usepackage{url}            
\usepackage{booktabs}       
\usepackage{amsfonts}       
\usepackage{nicefrac}       
\usepackage{microtype}      
\usepackage{xcolor}         
\usepackage{amsmath}
\usepackage{adjustbox}
\usepackage[noend]{algpseudocode}
\usepackage{algorithmicx,algorithm}
\usepackage{graphicx}
\usepackage{wrapfig}
\usepackage{subfigure}
\usepackage{multirow}
\usepackage{float}
\usepackage{footnote}
\usepackage{verbatim}
\usepackage{amsthm}
\usepackage{threeparttable}
\usepackage{bm}
\usepackage{amssymb}
\usepackage{soul}
\usepackage{xspace} 
\usepackage{enumitem}
\usepackage{chngcntr}
\usepackage{apptools}
\usepackage{ulem}

\newtheorem{definition}{Definition}
\newtheorem{assumption}{Assumption}

\newtheorem{theorem}{Theorem}

\newtheorem{lemma}{Lemma}

\newtheorem{remark}{Remark}
\AtAppendix{\counterwithin{lemma}{section}}

\newcommand{\name}[0]{NGC\xspace}

\title{How Powerful is Implicit Denoising in Graph Neural Networks}

\iclrfinalcopy
\author{Songtao Liu\textsuperscript{1}, Rex Ying\textsuperscript{2}, Hanze Dong\textsuperscript{3}, Lu Lin\textsuperscript{1}, Jinghui Chen\textsuperscript{1}, Dinghao Wu\textsuperscript{1} \\
The Pennsylvania State University$^1$, Yale University$^2$,\\The Hong Kong University of Science and Technology$^3$\\
\texttt{\{skl5761,lulin,jzc5917,dinghao\}@psu.edu}\\\texttt{rex.ying@yale.edu}, \texttt{hdongaj@ust.hk}\\
}

%

\begin{document}

\maketitle

\begin{abstract}
Graph Neural Networks (GNNs), which aggregate features from neighbors, are widely used for graph-structured data processing due to their powerful representation learning capabilities. It is generally believed that GNNs can implicitly remove the non-predictive noises. However, the analysis of implicit denoising effect in graph neural networks remains open. In this work, we conduct a comprehensive theoretical study and analyze when and why the implicit denoising happens in GNNs. Specifically, we study the convergence properties of noise matrix. Our theoretical analysis suggests that the implicit denoising largely depends on the connectivity, the graph size, and GNN architectures. Moreover, we formally define and propose the adversarial graph signal denoising (AGSD) problem by extending graph signal denoising problem. By solving such a problem, we derive a robust graph convolution, where the smoothness of the node representations and the implicit denoising effect can be enhanced. Extensive empirical evaluations verify our theoretical analyses and the effectiveness of our proposed model. 

\end{abstract}

\section{Introduction}
Graph Neural Networks (GNNs)~\citep{kipf2017semi,velivckovic2018graph,hamilton2017inductive} have been widely used in graph learning and achieved remarkable performance on graph-based tasks, such as traffic prediction~\citep{guo2019attention}, drug discovery~\citep{dai2019retrosynthesis}, and recommendation system~\citep{ying2018graph}. A general principle behind Graph Neural Networks (GNNs)~\citep{kipf2017semi,velivckovic2018graph,hamilton2017inductive} is to perform a message passing operation that aggregates node features over neighborhoods, such that the smoothness of learned node representations on the graph is enhanced. 

By promoting graph smoothness, the message passing and aggregation mechanism naturally leads to GNN models whose predictions are not only dependent on the feature of one specific node, but also the features from a set of neighboring nodes. Therefore, this mechanism can, to a certain extent, protect GNN models from noises: real-world graphs are usually noisy, e.g., Gaussian white noise exists on node features~\citep{zhou2021graph}, however, the influence of feature noises on the model's output could be counteracted by the feature aggregation operation in GNNs. We term this effect as \textit{implicit denoising}.

While many works have been conducted in the empirical exploration of GNNs, relatively fewer advances have been achieved in theoretically studying this denoising effect. Early GNN models, such as the vanilla GCN~\citep{kipf2017semi}, GAT~\citep{velivckovic2018graph} and GraphSAGE~\citep{hamilton2017inductive}, propose different designs of aggregation functions, but the denoising effect is not discussed in these works. Some recent attempts~\citep{ma2021unified} are made to mathematically establish the connection between a variety of GNNs and the graph signal denoising problem (GSD)~\citep{chen2014signal}:
\begin{equation}
\label{graph signal denoising}
    q(\mathbf{F}) = \min _{\mathbf{F}} \|\mathbf{F}-\mathbf{X}\|_{F}^{2} + \lambda \operatorname{tr}\left(\mathbf{F}^{\top} \widetilde{\mathbf{L}} \mathbf{F}\right),
\end{equation}
where $\mathbf{X}=\mathbf{X}^{*}+\bm{\eta}$ is the observed noisy feature matrix, $\bm{\eta} \in \mathbb{R}^{n \times d}$ is the noise matrix, $\mathbf{X}^{*}$ is the clean feature matrix, and $\widetilde{\mathbf{L}}$ is the graph Laplacian. The second term encourages the smoothness of the filtered feature matrix $\mathbf{F}$ over the graph., i.e., nearby vertices should have similar vertex features. By regarding the feature aggregation process in GNNs as solving a GSD problem, more advanced GNNs are proposed, such as GLP~\citep{li2019label}, S$^2$GC~\citep{zhu2021simple}, and IRLS~\citep{yang2021graph}.
Despite these prior attempts, little efforts have been made to rigorously study the denoising effect of message passing and aggregation operation. This urges us to think about a fundamental but not clearly answered question:
\begin{center}
    \textit{Why and when implicit denoising happens in GNNs?}
\end{center}

In this work, we focus on the non-predictive stochasticity of noise in GNNs' aggregated features and analyze its properties. We prove that with the increase in graph size and graph connectivity factor, the stochasticity tends to diminish, which is called the ``denoising effect'' in our work. We will address this question using the tools from concentration inequalities and matrix theories, which are concerned with the study of the convergence of noise matrix. It offers a new framework to study the properties of graphs and GNNs in terms of the denoising effect. In order to facilitate our theoretical analysis, we derive Neumann Graph Convolution (NGC) from GSD. Specifically, to study the convergence rate, we introduce an insightful measurement on the convolution operator, termed \textit{high-order graph connectivity factor}, which reveals how uniform the nodes are distributed in the neighborhood and reflects the strength of information diluted on a single neighboring node during the feature aggregation step. Intuitively, as the General Hoeffding Inequality~\citep{hoeffding1994probability} (Lemma.~\ref{hoeffding}) suggests, a larger high-order graph connectivity factor, i.e., nodes are more uniformly distributed in the neighborhood, accelerates the convergence of the noise matrix and a larger graph size leads to faster convergence. Besides, GNN architectures also affect the convergence rate. Deeper GNNs can have a faster convergence rate.

To further strengthen the denoising effect, inspired by the adversarial training method~\citep{madry2018towards}, we propose the adversarial graph signal denoising problem (AGSD). By solving such a problem, we derive a robust graph convolution model based on the correlation of node feature and graph structure to increase the high-order graph connectivity factor, which helps us improve the denoising performance. Extensive experimental results on standard graph learning tasks verify our theoretical analyses and the effectiveness of our derived robust graph convolution model.

\paragraph{Notations.} Let $\mathcal{G}=(\mathcal{V}, \mathcal{E})$ represent a undirected graph, where $\mathcal{V}$ is the set of vertices $\left\{v_{1}, \cdots, v_{n}\right\}$ with $|\mathcal{V}|=n$ and $\mathcal{E}$ is the set of edges. The adjacency matrix is defined as $\mathbf{A} \in \{0,1\}^{n \times n}$, and $\mathbf{A}_{i,j}=1$ if and only if $\left(v_{i}, v_{j}\right) \in \mathcal{E}$. Let $\mathcal{N}_{i}=\{v_j | \mathbf{A}_{i,j} = 1\}$ denote the neighborhood of node $v_i$ and $\mathbf{D}$ denote the diagonal degree matrix, where $\mathbf{D}_{i,i}=\sum_{j=1}^{n} \mathbf{A}_{i,j}$. The feature matrix is denoted as $\mathbf{X} \in \mathbb{R}^{n \times d}$ where each node $v_i$ is associated with a $d$-dimensional feature vector $\mathbf{X}_i$. $\mathbf{Y} \in \{0,1\}^{n \times c}$ denotes the matrix, where $\mathbf{Y}_i \in \{0, 1\}^{c}$ is a one-hot vector and $\sum_{j=1}^{c} \mathbf{Y}_{i,j}=1$ for any $v_i \in V$.

\section{A Simple Unifying Framework: Neumann Graph Convolution}
\label{sec:NGC}
\paragraph{A General Framework.} In this section, we discuss a simple yet general framework for solving graph signal denoising problem, namely Neumann Graph Convolution (\name). Note that \name is not a new GNN architecture. There also exist similar GNN architectures, such as GLP~\citep{li2019label}, S$^2$GC~\citep{zhu2021simple}, and GaussianMRF~\citep{jia2022unifying}. We focus on the theoretical analysis of the denoising effect in GNNs in this work. \name can facilitate our theoretical analysis. By taking the derivative $\nabla q\left(\mathbf{F}\right)=2 \widetilde{\mathbf{L}} \mathbf{F}+2(\mathbf{F}-\mathbf{X})$ to zero, we obtain the solution of GSD optimization problem as follows:
\begin{equation}\label{eq:inverse}
    \mathbf{F} = (\mathbf{I}+\lambda\widetilde{\mathbf{L}})^{-1} \mathbf{X}.
\end{equation}
To avoid the expensive computation of the inverse matrix, we can use Neumann series~\citep{stewart1998matrix} expansion to approximate Eq.~(\ref{eq:inverse}) up to up to $S$-th order:
\begin{equation}
\label{Neum Series}
    \left(\mathbf{I}+\lambda \widetilde{\mathbf{L}}\right)^{-1} =\frac{1}{\lambda+1}\left(\mathbf{I}-\frac{\lambda}{\lambda+1}\widetilde{\bm{\mathcal{A}}}\right)^{-1}\approx\frac{1}{\lambda+1}\sum_{s=0}^{S} \left(\frac{\lambda}{\lambda+1}\widetilde{\bm{\mathcal{A}}}\right)^{s},
\end{equation}
where $\widetilde{\bm{\mathcal{A}}}$ can take the form of $\widetilde{\bm{\mathcal{A}}}=\widetilde{\mathbf{D}}^{-\frac{1}{2}}\widetilde{\mathbf{A}} \widetilde{\mathbf{D}}^{-\frac{1}{2}}$ or $\widetilde{\bm{\mathcal{A}}}=\widetilde{\mathbf{D}}^{-1}\widetilde{\mathbf{A}}$, and the proof can be found in Appendix~\ref{appendix:neumann series}. Based on the Neumann series expansion of the solution of GSD, we introduce a general graph convolution model -- Neumann Graph Convolution defined as the following expansion: 

\begin{equation}
\label{NGC filter}
    \mathbf{H}=\widetilde{\bm{\mathcal{A}}}_{S}\mathbf{X}\mathbf{W} =\frac{1}{\lambda+1}\sum_{s=0}^{S} \left(\frac{\lambda}{\lambda+1}\widetilde{\bm{\mathcal{A}}}\right)^{s}\mathbf{X}\mathbf{W},
\end{equation}
where $\widetilde{\bm{\mathcal{A}}}_{S}=\frac{1}{\lambda+1}\sum_{s=0}^{S} \left(\frac{\lambda}{\lambda+1}\widetilde{\bm{\mathcal{A}}}\right)^{s}$ and $\mathbf{W}$ is the weight matrix. Our spectral convolution $\widetilde{\bm{\mathcal{A}}}_{S}\mathbf{X}$ on graphs is a multi-scale graph convolution~\citep{abu2020n,liao2019lanczosnet}, which covers the single-scale graph convolution models such as SGC~\citep{wu2019simplifying} since the graph convolution of SGC is $\widetilde{\bm{\mathcal{A}}}^2\mathbf{X}$ and $\widetilde{\bm{\mathcal{A}}}^2$ is the third term of $\widetilde{\bm{\mathcal{A}}}_{S}$. Besides, if we remove the non-linear functions in GCN~\citep{kipf2017semi}, it also can be covered by our model. Therefore, we can draw the conclusion that our proposed NGC is a general framework.

\paragraph{High-order Graph Connectivity Factor.} Based on NGC, we obtain the filtered graph signal via $\mathbf{F}=\widetilde{\bm{\mathcal{A}}}_{S}\mathbf{X}$. Intuitively, $\widetilde{\bm{\mathcal{A}}}_{S}$ captures not only the connectivity of the graph structure (represented by $\widetilde{\bm{\mathcal{A}}}$), but also the higher order connectivity (represented by $\widetilde{\bm{\mathcal{A}}}^2, \widetilde{\bm{\mathcal{A}}}^3, \ldots, \widetilde{\bm{\mathcal{A}}}^S$). As will be discussed in Sec.~\ref{sec:main theory}, larger high order graph connectivity can accelerate the convergence of the noise feature matrix. To formally quantify the high order graph connectivity, we give the following definition:
\begin{definition}[High-order Graph Connectivity Factor]
We define the high-order graph connectivity factor $\tau$ as
\begin{equation}
\label{evenness factor}
    \tau = \max_i \tau_i, \ \ \text{ where } \tau_i = {n\sum_{j=1}^{n}\left[\widetilde{\bm{\mathcal{A}}}_{S}\right]_{ij}^2}\Bigg/{\left(1-\left(\frac{\lambda}{\lambda+1}\right)^{S+1}\right)^2}. 
\end{equation}
\end{definition}
\begin{remark}
Here we give some intuitions about why Eq.~(\ref{evenness factor}) represents high-order graph connectivity. Note that each element in $\widetilde{\bm{\mathcal{A}}}_{S}$ is non-negative and each row sum satisfies\footnote{Note that this result is obtained by using $\widetilde{\bm{\mathcal{A}}}=\widetilde{\mathbf{D}}^{-1}\widetilde{\mathbf{A}}$ for the ease of theoretical analysis while in experiments we adopt more commonly used $\widetilde{\bm{\mathcal{A}}}=\widetilde{\mathbf{D}}^{-\frac{1}{2}}\widetilde{\mathbf{A}} \widetilde{\mathbf{D}}^{-\frac{1}{2}}$. The proof can be found in Appendix~\ref{appendix:row sum}.} 
\begin{equation}
\label{row sum}
    \sum_{j=1}^{n}\left[\widetilde{\bm{\mathcal{A}}}_{S}\right]_{ij} 
    =1-\left(\frac{\lambda}{\lambda+1}\right)^{S+1}.
\end{equation}
Based on Eq.~(\ref{row sum}), the sum of squares of elements in each row satisfy:
\begin{equation}
\label{range}
   {\left(1-\left(\frac{\lambda}{\lambda+1}\right)^{S+1}\right)^2}\bigg/{n} \leq \sum_{j=1}^{n}\left[\widetilde{\bm{\mathcal{A}}}_{S}\right]_{ij}^2 \leq \left(1-\left(\frac{\lambda}{\lambda+1}\right)^{S+1}\right)^2.
\end{equation}
When the high-order graph has a high connectivity, i.e., the elements in row $i$ of $\widetilde{\bm{\mathcal{A}}}_{S}$ are more uniformly distributed, Eq.~(\ref{range}) reaches its lower bound. Meanwhile, if the graph is not connected and there is only one element whose value is larger than $0$ in row $i$, Eq.~(\ref{range}) reaches its upper bound. Therefore, the value of $\tau \in [1, n]$ is determined as follows: when the high-order graph connectivity is high, $\tau \to 1$ and when the graph is less connected, $\tau \to n$.
\end{remark}

\section{Main Theory}
\label{sec:main theory}
In this section, we analyze the denoising effect of \name. Before we present our main theory, we first present our aggregation on noisy feature matrix and formulate four assumptions, which are necessary to construct our theory.

For the convenience of theoretical analysis, we adopt MSE loss\footnote{We consider MSE loss since it gives easier form of gradient and it can be extended to other losses satisfying certain conditions.} for our main theory.  
Consider $\widetilde{\bm{\mathcal{A}}}_{S}$ as our aggregation scheme, the NGC training based on Eq.~(\ref{NGC filter}) can be formulated as
\begin{equation}
\label{loss function}
\min_{\mathbf{W}} f(\mathbf{W}) =  \left\|\widetilde{\bm{\mathcal{A}}}_{S}\mathbf{X}\mathbf{W} - \mathbf{Y}\right\|_{F}^{2} = \left\|\widetilde{\bm{\mathcal{A}}}_{S}(\mathbf{X}^{*} + \bm{\eta} )\mathbf{W} - \mathbf{Y}\right\|_{F}^{2},
\end{equation}
where $\mathbf{X}^{*}$ is the clean feature matrix, $\bm{\eta}$ denotes the noise added on $\mathbf{X}^{*}$, and $\mathbf{X} = \mathbf{X}^{*} + \bm{\eta}$ is the observed data matrix. Intuitively, if $\widetilde{\bm{\mathcal{A}}}_{S}\bm{\eta}$ is small enough, the added noise will not change the optimization direction on which the parameter is updated under the clean feature matrix $\mathbf{X}^{*}$. Before we present our main theory, we give four assumptions about noise $\bm{\eta}$, $\widetilde{\bm{\mathcal{A}}}_{S}$, and parameters $\mathbf{W}$.

\begin{assumption}
\label{sub-Gaussian assumption}
Each entry of the noise matrix $\bm{\eta}$, i.e., $[\bm{\eta}]_{ij}$ is i.i.d sub-Gaussian random variable with variance $\sigma$ and mean $\mu=0$, i.e., 
\begin{equation}
    \mathbb{E}\left[e^{\lambda([\bm{\eta}]_{ij}-\mu)}\right] \leq e^{\sigma^{2} \lambda^{2} / 2} \quad \text { for all } \lambda \in \mathbb{R}.
\end{equation}
\end{assumption}
Note that it is common to assume that the noise follows Gaussian distribution~\citep{zhou2021graph,chen2021graph,zhang2022graphless}, which is also covered by our sub-Gaussian assumption.

\begin{assumption}
\label{evenness factor assumption}
The high-order graph connectivity factor $\tau$ 
is $\mathcal{O}\left(n\right)$, i.e.,
$
    \lim_{n\rightarrow \infty}\frac{\tau}{n}=0.
$
\end{assumption}
As we have discussed in Sec.~\ref{sec:NGC}, $\tau$ depends on the graph structure. In a well-connected graph, $\tau$ is usually relatively small compared with $n$. Only if all nodes of a graph are isolated, $\tau$ reaches its upper bound $n$.
\begin{assumption}
\label{parameter assumption}
The Frobenius norm of the parameter matrix $\mathbf{W}$ is bounded by a constant. There exists $C>0$ such that
$
    \left\|\mathbf{W}\right\|_{F} \leq C,
$
which is unrelated to $n$.
\end{assumption}
We assume that the Frobenius norm of $\mathbf{W}$ is bounded by a constant. This is reasonable since recent advances in Neural Tangent Kernel~\citep{jacot2018neural} indicate that over-parameterized network weights lie in the neighborhood of the small random initialization, which justifies Assumption~\ref{parameter assumption}. 

\begin{assumption}
\label{loss function assumption}
The loss function in Eq.~(\ref{loss function}) is $L$-smooth, 
\begin{equation}
    \|\nabla f(\mathbf{W}_1)-\nabla f(\mathbf{W}_2)\|_{2} \leq L\|\mathbf{W}_1-\mathbf{W}_2\|_2 \quad \text { for all  } \mathbf{W}_1, \mathbf{W}_2 \in \mathbb{R}^{d\times c}.
\end{equation}
\end{assumption}

The $L$-smoothness of $f$ depends on the largest singular value of $\widetilde{\bm{\mathcal{A}}}_{S}\mathbf{X}$. For conciseness, we start with the smooth case. As the core part of our proof, we first derive the upper bound of the Frobenius norm of $\widetilde{\bm{\mathcal{A}}}_{S}\bm{\eta}$.

\begin{lemma}
\label{upper bound of aggregated noised matirx}
Suppose we choose $t=2\tau\left(1-\left(\frac{\lambda}{\lambda+1}\right)^S\right)^2\left(4\log n+\log{2d}\right)/n$. Then under Assumptions \ref{sub-Gaussian assumption} and \ref{evenness factor assumption}, with a high probability $1-1/d$, we have 
\begin{equation}
\label{concentration bound of aggregated noises}
    \left\|\widetilde{\bm{\mathcal{A}}}_S\bm{\eta}\right\|_{F}^{2}\leq \frac{2\tau\left(1-\left(\frac{\lambda}{\lambda+1}\right)^{S+1}\right)^2\sigma^2\left(4\log n+\log{2d}\right)}{n},
\end{equation}
where the proof can be found in Appendix~\ref{appendix:upper bound with hoeffding}.
\end{lemma}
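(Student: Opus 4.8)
The statement is a concentration bound, so the plan is to reduce it to a tail estimate for a single weighted sum of independent sub-Gaussian variables and then union-bound over coordinates. Write $Z_{ij}:=[\widetilde{\bm{\mathcal{A}}}_S\bm\eta]_{ij}=\sum_{k=1}^{n}[\widetilde{\bm{\mathcal{A}}}_S]_{ik}\,[\bm\eta]_{kj}$, so that $\|\widetilde{\bm{\mathcal{A}}}_S\bm\eta\|_F^{2}=\sum_{i=1}^{n}\sum_{j=1}^{d}Z_{ij}^{2}$. For each fixed pair $(i,j)$, Assumption~\ref{sub-Gaussian assumption} makes the summands $\{[\widetilde{\bm{\mathcal{A}}}_S]_{ik}[\bm\eta]_{kj}\}_{k}$ independent, mean-zero and sub-Gaussian, with deterministic weights equal to the $i$-th row of $\widetilde{\bm{\mathcal{A}}}_S$. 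The General Hoeffding Inequality (Lemma~\ref{hoeffding}) then gives, for every $s>0$,
\[
  \Pr\!\big[\,|Z_{ij}|\ge s\,\big]\;\le\;2\exp\!\Big(-\tfrac{s^{2}}{2\sigma^{2}\,\|[\widetilde{\bm{\mathcal{A}}}_S]_{i,:}\|_2^{2}}\Big).
\]

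The second ingredient is a purely geometric bound on the row norms $\|[\widetilde{\bm{\mathcal{A}}}_S]_{i,:}\|_2^{2}$ in terms of the high-order graph connectivity factor. Unfolding the truncated Neumann-series form of the NGC filter from Section~\ref{sec:NGC} — namely that $\widetilde{\bm{\mathcal{A}}}_S$ has nonnegative entries with common row-sum $1-\big(\tfrac{\lambda}{\lambda+1}\big)^{S+1}$ — the factor $\tau$ is exactly the quantity that records how far each (row-normalized) row of $\widetilde{\bm{\mathcal{A}}}_S$ is from being uniform over its reachable set, so that, by the definition of $\tau$,
\[
  \max_i\;\|[\widetilde{\bm{\mathcal{A}}}_S]_{i,:}\|_2^{2}\;\le\;\tfrac{\tau}{n}\,\Big(1-\big(\tfrac{\lambda}{\lambda+1}\big)^{S+1}\Big)^{2}.
\]
A well-connected graph spreads the row mass over $\Theta(n)$ vertices and makes this $\Theta(1/n)$; isolated nodes saturate it, consistent with the range of $\tau$ discussed after Assumption~\ref{evenness factor assumption}. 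Substituting this into the Hoeffding bound removes the dependence on $i$.

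Finally, take a union bound over the $nd$ coordinates. Calibrating the threshold at $s^{2}$ of the order of $t=2\tau(1-(\tfrac{\lambda}{\lambda+1})^{S})^{2}(4\log n+\log 2d)/n$ makes the exponent exceed $\log(2nd^{2})$, which we in turn upper bound by $4\log n+\log 2d$ in the mild regime $d=\mathrm{poly}(n)$; this drives the total failure probability below $1/d$, with Assumption~\ref{evenness factor assumption} guaranteeing that the resulting prefactor $\tau/n$ is genuinely decaying. On the complementary event every $Z_{ij}^{2}$ is controlled, and collecting the coordinates yields the claimed bound~(\ref{concentration bound of aggregated noises}). The routine parts are the sub-Gaussian tail and the union bound; the substantive step is the geometric one — extracting $\max_i\|[\widetilde{\bm{\mathcal{A}}}_S]_{i,:}\|_2$ from the Neumann structure and identifying it with $\tau/n$ times the squared row-sum — since this is precisely where graph size, connectivity, and the propagation depth $S$ enter the rate, and one must take care not to lose the $1/n$ factor when passing from the entrywise estimates back to $\|\widetilde{\bm{\mathcal{A}}}_S\bm\eta\|_F^{2}$.
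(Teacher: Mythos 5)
Your plan follows the same route as the paper's proof: an entrywise application of the General Hoeffding Inequality with weights given by the $i$-th row of $\widetilde{\bm{\mathcal{A}}}_S$, the identification $\max_i\|[\widetilde{\bm{\mathcal{A}}}_S]_{i,:}\|_2^2=\tau\bigl(1-\bigl(\tfrac{\lambda}{\lambda+1}\bigr)^{S+1}\bigr)^2/n$ (which is exactly the paper's definition $\tau=\max_i\tau_i$ with $\tau_i=n\sum_j[\widetilde{\bm{\mathcal{A}}}_S]_{ij}^2/\bigl(1-\bigl(\tfrac{\lambda}{\lambda+1}\bigr)^{S+1}\bigr)^2$), and then a union bound with $t\asymp \tau\bigl(1-\bigl(\tfrac{\lambda}{\lambda+1}\bigr)^{S+1}\bigr)^2(4\log n+\log 2d)/n$. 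Those two ingredients are right and match the paper.

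The genuine gap is the last step, ``on the complementary event every $Z_{ij}^2$ is controlled, and collecting the coordinates yields the claimed bound.'' If you control each of the $nd$ entries at the level $s^2\approx t\sigma^2$ at which you calibrated the tails, then summing gives only $\left\|\widetilde{\bm{\mathcal{A}}}_S\bm{\eta}\right\|_F^2\le nd\, s^2$, which exceeds the right-hand side of Eq.~(\ref{concentration bound of aggregated noises}) by a factor $nd$; if instead you shrink the per-entry level to $s^2\approx t\sigma^2/(nd)$ so that the sum comes out right, the Hoeffding exponent is divided by $nd$ and the stated choice of $t$ no longer yields failure probability $1/d$. You flag this danger (``take care not to lose the $1/n$ factor'') but provide no mechanism to avoid it, so as written your argument proves a bound that is weaker by a factor $nd$. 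The paper bridges exactly this point differently: it never sums entrywise bounds, but instead passes through the chain $\mathbb{P}\bigl(\|\widetilde{\bm{\mathcal{A}}}_S\bm{\eta}\|_F^2\ge t\bigr)\le\sum_{i,j}\mathbb{P}\bigl(\|\widetilde{\bm{\mathcal{A}}}_S\bm{\eta}\|_{\infty,\infty}\ge\sqrt{t}\bigr)\le 2n^4\exp\bigl\{-nt/\bigl(2\tau(1-(\tfrac{\lambda}{\lambda+1})^{S+1})^2\sigma^2\bigr)\bigr\}$, i.e.\ it keeps the per-entry threshold at $\sqrt{t}$ and absorbs the multiplicity into the polynomial prefactor that the $4\log n$ term in $t$ kills. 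To obtain the lemma as stated you must reproduce (and justify) a step of that form relating the Frobenius-norm event to max-entry events at level $\sqrt{t}$; the entrywise-control-then-sum reduction you describe does not deliver it. (Minor: the $S$ versus $S+1$ exponent and the missing $\sigma^2$ in the displayed $t$ are typos in the statement; both your calibration and the paper's calculation effectively use $S+1$.)
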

Lemma~\ref{upper bound of aggregated noised matirx} implies that the norm of the aggregated noise matrix $\widetilde{\bm{\mathcal{A}}}_S\bm{\eta}$ is bounded by three terms: the number of nodes of a graph $n$, the expansion order $S$, the high-order graph connectivity factor $\tau$. Intuitively, as the concentration bounds suggest, if we extract enough samples from the same sub-Gaussian variable, the average of these samples will converge to zero with a high probability. This requires our graph to be large enough and the sum of squares of the elements in the row of $\widetilde{\bm{\mathcal{A}}}_{S}$ to be small enough, which depends on the graph structure. 

Now we start to present our main theorem for graph denoising. In order to demonstrate the effect of graph denoising, we further consider another loss function $g(\cdot)$ with the clean feature matrix:
\begin{equation}
\label{GNN loss}
\begin{split}
    &g(\mathbf{W}) =\left\|\widetilde{\bm{\mathcal{A}}}_{S}\mathbf{X}^{*}\mathbf{W}-\mathbf{Y}\right\|_{F}^{2}.
    \end{split}
\end{equation}

Let $\mathbf{W}_{g}^{*} = \arg\min_{\mathbf{W}} g(\mathbf{W})$ be the minimizer of clean loss $g$, we aim to demonstrate that the learned model (from gradient descent on the noisy data $\mathbf{X}$) has essentially the same performance as $\mathbf{W}_{g}^{*}$ which is the optimal solution for the clean loss $g$.

\begin{theorem}
\label{theorem:main}
Under Assumptions~\ref{sub-Gaussian assumption}, \ref{evenness factor assumption}, \ref{parameter assumption}, \ref{loss function assumption} and Lemma~\ref{upper bound of aggregated noised matirx}, 
let $\mathbf{W}_{f}^{(k)}$ denote the $k$-th step gradient descent solution for $\min_{\mathbf{W}} f(\mathbf{W})$ with step size $\alpha \leq 1 / L$, with probability $1-1/d$ we have
\begin{equation}\label{eq:main}
\begin{split}
    g\left(\mathbf{W}_{f}^{(k)}\right)-g\left(\mathbf{W}_{g}^{*}\right)
    &\leq \mathcal{O}\left(\frac{1}{2k\alpha}\right) + \mathcal{O}\left(\frac{\tau \log n}{n}\right),
\end{split}
\end{equation}
where $\mathbf{W}_{g}^{*} = \arg\min_{\mathbf{W}} g(\mathbf{W})$ is the optimal solution of the clean loss function $g(\mathbf{W})$, $\tau$ is the high-order graph connectivity factor, and $n$ is the number of nodes of a graph. 
\end{theorem}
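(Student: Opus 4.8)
The plan is to split the target gap $g(\mathbf{W}_{f}^{(k)})-g(\mathbf{W}_{g}^{*})$ into an optimization error, controlled by textbook convex analysis, and a ``denoising gap'' between $f$ and $g$, controlled by Lemma~\ref{upper bound of aggregated noised matirx}. The starting point is the elementary algebraic identity, valid for every $\mathbf{W}$,
\[
f(\mathbf{W}) - g(\mathbf{W}) = 2\big\langle \widetilde{\bm{\mathcal{A}}}_{S}\mathbf{X}^{*}\mathbf{W}-\mathbf{Y},\ \widetilde{\bm{\mathcal{A}}}_{S}\bm{\eta}\mathbf{W}\big\rangle_{F} + \big\|\widetilde{\bm{\mathcal{A}}}_{S}\bm{\eta}\mathbf{W}\big\|_{F}^{2},
\]
obtained by expanding $\|(\widetilde{\bm{\mathcal{A}}}_{S}\mathbf{X}^{*}\mathbf{W}-\mathbf{Y})+\widetilde{\bm{\mathcal{A}}}_{S}\bm{\eta}\mathbf{W}\|_{F}^{2}$. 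The quadratic term is immediate: by submultiplicativity $\|\widetilde{\bm{\mathcal{A}}}_{S}\bm{\eta}\mathbf{W}\|_{F}\le\|\widetilde{\bm{\mathcal{A}}}_{S}\bm{\eta}\|_{F}\|\mathbf{W}\|_{F}$, so Assumption~\ref{parameter assumption} together with Lemma~\ref{upper bound of aggregated noised matirx} gives $\|\widetilde{\bm{\mathcal{A}}}_{S}\bm{\eta}\mathbf{W}\|_{F}^{2}\le C^{2}\|\widetilde{\bm{\mathcal{A}}}_{S}\bm{\eta}\|_{F}^{2}=\mathcal{O}(\tau\log n/n)$ with probability $1-1/d$, with Assumption~\ref{evenness factor assumption} being what makes this bound vanish as $n\to\infty$.

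Next I would use the decomposition
\[
g(\mathbf{W}_{f}^{(k)})-g(\mathbf{W}_{g}^{*}) = \underbrace{\big(g(\mathbf{W}_{f}^{(k)})-f(\mathbf{W}_{f}^{(k)})\big)}_{\text{gap at }\mathbf{W}_{f}^{(k)}} + \underbrace{\big(f(\mathbf{W}_{f}^{(k)})-f(\mathbf{W}_{f}^{*})\big)}_{\text{optimization error}} + \underbrace{\big(f(\mathbf{W}_{f}^{*})-f(\mathbf{W}_{g}^{*})\big)}_{\le\, 0} + \underbrace{\big(f(\mathbf{W}_{g}^{*})-g(\mathbf{W}_{g}^{*})\big)}_{\text{gap at }\mathbf{W}_{g}^{*}},
\]
where $\mathbf{W}_{f}^{*}=\arg\min_{\mathbf{W}}f(\mathbf{W})$ and the third bracket is nonpositive by optimality of $\mathbf{W}_{f}^{*}$. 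For the optimization-error bracket, $f$ is a convex quadratic and, by Assumption~\ref{loss function assumption}, $L$-smooth, so the standard gradient-descent guarantee with step size $\alpha\le 1/L$ yields $f(\mathbf{W}_{f}^{(k)})-f(\mathbf{W}_{f}^{*})\le\|\mathbf{W}_{f}^{(0)}-\mathbf{W}_{f}^{*}\|_{F}^{2}/(2k\alpha)$, and by Assumption~\ref{parameter assumption} the numerator is a constant independent of $n$, giving $\mathcal{O}(1/(2k\alpha))$. The remaining two brackets are instances of $f(\mathbf{W})-g(\mathbf{W})$ at $\mathbf{W}\in\{\mathbf{W}_{f}^{(k)},\mathbf{W}_{g}^{*}\}$, so they are handled by the identity above.

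The hard part will be the cross term $\langle \widetilde{\bm{\mathcal{A}}}_{S}\mathbf{X}^{*}\mathbf{W}-\mathbf{Y},\ \widetilde{\bm{\mathcal{A}}}_{S}\bm{\eta}\mathbf{W}\rangle_{F}$: a crude Cauchy--Schwarz bound $2\|\widetilde{\bm{\mathcal{A}}}_{S}\mathbf{X}^{*}\mathbf{W}-\mathbf{Y}\|_{F}\|\widetilde{\bm{\mathcal{A}}}_{S}\bm{\eta}\mathbf{W}\|_{F}$ brings in a $\sqrt{g(\mathbf{W})}$ factor and only yields an $\mathcal{O}(\sqrt{\tau\log n/n})$ rate, so to reach the stated $\mathcal{O}(\tau\log n/n)$ one must exploit that $\bm{\eta}$ is mean-zero with i.i.d.\ sub-Gaussian entries (Assumption~\ref{sub-Gaussian assumption}). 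Concretely I would rewrite the cross term as a linear functional $\langle\mathbf{M},\bm{\eta}\rangle_{F}$ with $\mathbf{M}=\widetilde{\bm{\mathcal{A}}}_{S}^{\top}(\widetilde{\bm{\mathcal{A}}}_{S}\mathbf{X}^{*}\mathbf{W}-\mathbf{Y})\mathbf{W}^{\top}$ and apply a Hoeffding-type tail bound (in the spirit of Lemma~\ref{hoeffding}) to show $|\langle\mathbf{M},\bm{\eta}\rangle_{F}|$ is, with probability $1-1/d$, of the same order $\mathcal{O}(\tau\log n/n)$ as the quadratic term (alternatively, an AM--GM split $2ab\le\epsilon a^{2}+b^{2}/\epsilon$ absorbs the residual norm into $g$ and leaves an $\mathcal{O}(\tau\log n/n)$ remainder). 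Finally I would union-bound the finitely many high-probability events so the overall failure probability is at most $1/d$, add the surviving pieces — $\mathcal{O}(1/(2k\alpha))$ from the optimization error and $\mathcal{O}(\tau\log n/n)$ from the two gap brackets — and conclude Eq.~(\ref{eq:main}).
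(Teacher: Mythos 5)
Your plan is essentially the paper's own proof: the same identity $f(\mathbf{W})-g(\mathbf{W})=2\langle \widetilde{\bm{\mathcal{A}}}_{S}\mathbf{X}^{*}\mathbf{W}-\mathbf{Y},\widetilde{\bm{\mathcal{A}}}_{S}\bm{\eta}\mathbf{W}\rangle+\|\widetilde{\bm{\mathcal{A}}}_{S}\bm{\eta}\mathbf{W}\|_{F}^{2}$, the same standard smooth-convex gradient-descent bound for the optimization error, and the same use of Lemma~\ref{upper bound of aggregated noised matirx} with Assumption~\ref{parameter assumption} to make the noise terms $\mathcal{O}(\tau\log n/n)$. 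The only bookkeeping difference is that the paper runs the convexity/telescoping argument for $f$ directly against the comparator $\mathbf{W}_{g}^{*}$, obtaining $f(\mathbf{W}_{f}^{(k)})-f(\mathbf{W}_{g}^{*})\le \|\mathbf{W}_{f}^{(0)}-\mathbf{W}_{g}^{*}\|_{F}^{2}/(2k\alpha)$, so it never needs $\mathbf{W}_{f}^{*}$ or your nonpositive bracket; both routes are fine. Where you genuinely diverge is the cross term: the paper does not introduce any extra concentration step, it simply bounds the two inner products by norms of $\widetilde{\bm{\mathcal{A}}}_{S}\bm{\eta}$ times the residual and weight norms (treated as constants inside the $\mathcal{O}(\cdot)$), which is exactly the coarse absorption you were trying to improve on; your AM--GM variant amounts to the same thing. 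Be careful with your preferred Hoeffding fix, though: at $\mathbf{W}=\mathbf{W}_{f}^{(k)}$ the matrix $\mathbf{M}$ depends on $\bm{\eta}$ through the gradient-descent trajectory, so $\langle\mathbf{M},\bm{\eta}\rangle_{F}$ is not a sum of independent terms and Lemma~\ref{hoeffding} cannot be applied directly without a uniform (covering/union over $\mathbf{W}$) argument; the paper avoids this issue precisely by falling back on the crude norm bound.
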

The proof of Theorem~\ref{theorem:main} can be found in Appendix~\ref{appendix:upper bound with optimization}.
\begin{remark}{\textbf{Denoising Effect.}}
Theorem~\ref{theorem:main} suggests that the $k$-th step gradient descent solution $\mathbf{W}_{f}^{(k)}$ which is trained using the noisy feature matrix $\mathbf{X}$ enjoys a similar performance  as the actual clean loss minimizer $\mathbf{W}_{g}^{*}$ with large enough $k$ and $n$. This implies the denoising effect of our proposed solution in Eq.~(\ref{NGC filter}).
\end{remark}

\begin{remark}\label{remark:graph_on_denoising}
\textbf{Effect of graph structure on denoising.}
Note that the second term in Eq.~(\ref{eq:main}) suggests that the denoising effect is linear with respect to $\tau$, which is directly related to the dataset graph structure. Specifically, as will be shown in Sec.~\ref{graph structure}, refer to Sec.~\ref{graph structure} on how graph structure affects the value of $\tau$, and thus the denoising effect. A large well-connected graph tend to have a better denoising performance since $n$ is large and $\tau$ is close to 1. 
\end{remark}

\begin{figure}[t] 
\centering 
\includegraphics[width=0.75\textwidth]{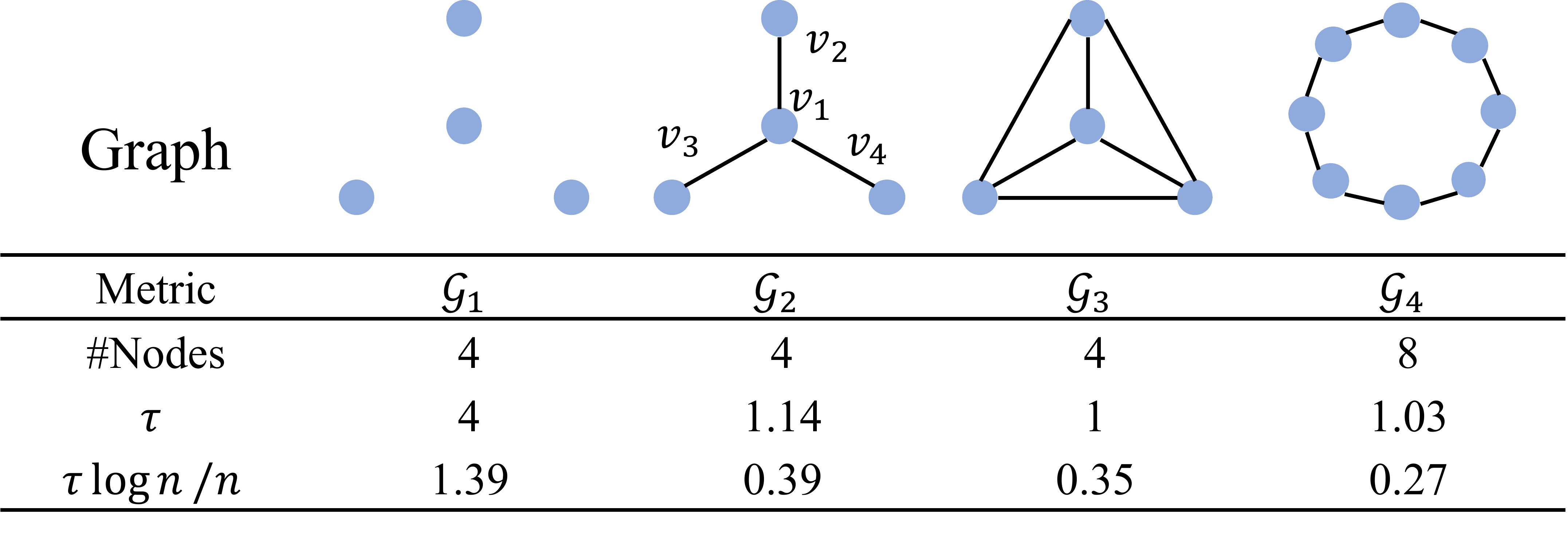} 
\caption{An illustration of the graph structures on the implicit denoising performances. $\mathcal{G}_1$: nodes are isolated; $\mathcal{G}_2$: a star graph with 4 nodes; $\mathcal{G}_3$: a complete graph with 4 nodes; $\mathcal{G}_4$: a ring graph with 8 nodes. For computing $\tau$, $\lambda$ and $S$ are set to be 64.
} 
\label{fig:graph structure} 
\end{figure}

\paragraph{Case Study: the Influence of Graph Structure on Implicit Denoising.}
\label{graph structure}
Remark \ref{remark:graph_on_denoising} suggests that $\tau$ plays an important role on the implicit denoising effect. In this case study, we give four illustration samples in Figure~\ref{fig:graph structure}. $G_1$, $G_2$, and $G_3$ have the same number of nodes. But the nodes on $G_1$ are isolated. $G_2$ has only one connected component and has a center node $v_1$ on the graph. $G_3$ is a complete graph such that there is an edge between any two nodes. In addition, we give a larger illustration graph $G_4$ to understand the influence of graph size. 

From Figure~\ref{fig:graph structure}, we can extract the following insights: 1) There is no denoising effect (the value of $\tau\log n /n $ is quite large) on $G_1$ since the nodes are isolated. And GNNs will degrade to MLP under such graph structure, leading to no aggregation. 2) The complete graph $G_3$ has the best denoising effect among the graphs of the same size if we only consider the influence of graph structure. Since the values of elements in each row are distributed uniformly, leading to the lower bound of $\tau$. 3) Although $G_2$ has only one connected component, there is a center node $v_1$ on the graph. The existence of the center node makes the value of elements in each row imbalanced, which means that $\tau$ tends to have a larger value compared with $G_3$. 4) The decentralized graph like $G_4$ also can get a smaller $\tau$. 5) In terms of graph size, the graph with a larger size has a better denoising effect.

\section{Robust Neumann Graph Convolution} 
\label{sec:RNGC}
In this section, we propose a new graph signal denoising problem - adversarial graph signal denoising (AGSD) problem to improve the denoising performance by deriving a robust graph convolution model.
\subsection{Adversarial Graph Signal Denoising Problem} 
Note that the second term in the GSD problem (Eq.~(\ref{graph signal denoising})) which controls the smoothness of the feature matrix over graphs, is related to both the graph Laplacian and the node features. Therefore, the slight changes in the graph Laplacian matrix could lead to an unstable denoising effect. Inspired by the recent studies in adversarial training~\citep{madry2018towards},
we formulate the adversarial graph signal denoising problem as a min-max optimization problem:
\begin{equation}
\label{adv gsd}
    \min_{\mathbf{F}} \left[\left\|\mathbf{F}-\mathbf{X}\right\|_{F}^{2}+\lambda \cdot \max_{\mathbf{L}^{\prime}} \  \operatorname{tr}\left(\mathbf{F}^{\top} \mathbf{L}^{\prime} \mathbf{F}\right)\right] \quad \operatorname{s.t.} \quad \left\|\mathbf{L}^{\prime}-\widetilde{\mathbf{L}}\right\|_{F} \leq \varepsilon.
\end{equation}
Intuitively, the inner maximization on the Laplacian $\mathbf{L}'$ generates perturbations on the graph structure\footnote{Here we do not need exact graph structure perturbations as in graph adversarial attacks~\citep{zugner2018adversarial,zugner2019adversarial} but a virtual perturbation that could lead to small changes in the Laplacian.}, and enlarges the distance between the node representations of connected neighbors. 
Such maximization finds the worst case perturbations on the graph Laplacian that  hinders the global smoothness of $\mathbf{F}$ over the graph. Therefore, by training on those worse case Laplacian perturbations, one could obtain a robust graph signal denoising solution. Ideally, through solving Eq.~(\ref{adv gsd}), the smoothness of the node representations as well as the implicit denoising effect can be enhanced.

\subsection{Minimization of the Optimization Problem}
\label{Minimization of the Optimization Problem}
The min-max formulation in Eq.~(\ref{adv gsd}) also makes the adversarial graph signal denoising problem much harder to solve. Fortunately, unlike adversarial training~\citep{madry2017towards} where we need to first adopt PGD to solve the inner maximization problem before we solve the outer minimization problem, here inner maximization problem is simple and has a closed form solution. In other words, we do not need to add random perturbations on the graph structure at each training epoch and can find the largest perturbation which maximizes the inner adversarial loss function. Denote the perturbation as $\bm{\delta}$, and $\mathbf{L}'=\widetilde{\mathbf{L}} + \bm{\delta}$. Directly solving\footnote{More details on how to solve the inner maximization problem can be found in Appendix~\ref{appendix:how to solve the optimization problem}.} the inner maximization problem, we get  $\bm{\delta}=\varepsilon\nabla h(\bm{\delta})=\frac{\varepsilon\mathbf{F}\mathbf{F}^{\top}}{\left\|\mathbf{F}\mathbf{F}^{\top}\right\|_{F}}$. Plugging this solution into Eq.~(\ref{adv gsd}), we can rewrite the outer optimization problem as follows:
\begin{equation}
         \rho(\mathbf{F})=\min_{\mathbf{F}} \left[\left\|\mathbf{F}-\mathbf{X}\right\|_{F}^{2}+\lambda \max \operatorname{tr}\left(\mathbf{F}^{\top} \widetilde{\mathbf{L}} \mathbf{F}\right)+\lambda\varepsilon\operatorname{tr}\frac{\mathbf{F}^{\top}\mathbf{F}\mathbf{F}^{\top}\mathbf{F}}{\left\|\mathbf{F}\mathbf{F}^{\top}\right\|_{F}}\right].
\end{equation}
Taking the gradient of $\rho(\mathbf{F})$ to zero, we get the solution of the outer optimization problem as follows:
\begin{equation}\label{eq:advF}
    \mathbf{F} = \left(\mathbf{I}+\lambda\widetilde{\mathbf{L}}+\lambda\varepsilon\frac{\mathbf{F}\mathbf{F}^{\top}}{\left\|\mathbf{F}\mathbf{F}^{\top}\right\|_{F}}\right)^{-1}\mathbf{X}.
\end{equation}
Both sides of Eq.~(\ref{eq:advF}) contains $\mathbf{F}$, directly computing the solution is difficult. Note that in Eq.~(\ref{adv gsd}) we also require $\mathbf{F}$ to be close to $\mathbf{X}$, we can approximate Eq.~(\ref{eq:advF}) by replacing the $\mathbf{F}$ with $\mathbf{X}$ in the inverse matrix on the right hand side. With the Neumann series expansion of the inverse matrix, we get the final approximate solution as
\begin{equation}
\label{RNGC filter}
    \mathbf{H} \approx \frac{1}{\lambda+1}\sum_{s=0}^{S}\left[\frac{\lambda}{\lambda+1}\left(\widetilde{\bm{\mathcal{A}}}-\frac{\varepsilon\mathbf{X}\mathbf{X}^{\top}}{\left\|\mathbf{X}\mathbf{X}^{\top}\right\|_{F}}\right)\right]^{s}\mathbf{X}\mathbf{W}.
\end{equation}
The difference between Eq.~(\ref{RNGC filter}) and Eq.~(\ref{NGC filter}) is that there is one more term in Eq.~(\ref{RNGC filter}) derived from solving the inner optimization problem of Eq.~(\ref{adv gsd}). Based on this, we proposed our robust Neumann graph convolution (RNGC). 

\paragraph{Scalability.}{Although RNGC introduces extra computational burdens for large graphs due to the $\mathbf{X} \mathbf{X}^{\top}$ term, if the feature matrix is sparse, the extra computational effort is minimal as the $\mathbf{X} \mathbf{X}^{\top}$ term can also be sparse. For the scalability of RNGC on large graphs with dense feature matrix, we only compute the inner product of feature vectors ($\mathbf{X}_i, \mathbf{X}_{j|j\in\mathcal{N}_{i}}$) between adjacent neighbors like masked attention in GAT. Compared with \name, the additional computation cost is $\mathcal{O}(|\mathcal{E}|)$.} 

\section{Experiments}
\label{sec:experiments}
In this section, we conduct a comprehensive empirical study to understand the influence of different factors on the denoising effect of various models. To quantify the denoising effect, we test the model accuracy on noisy data on various GNN architectures and MLP for standard node classification tasks, where the noisy data is synthesized by mixing Gaussian noise with the original feature matrix. {We also synthesize noisy data by flipping individual feature with a small Bernoulli probability on three citation datasets with binary features.}

\begin{figure}[t]
    \begin{center}
        \includegraphics[width=0.325\textwidth]{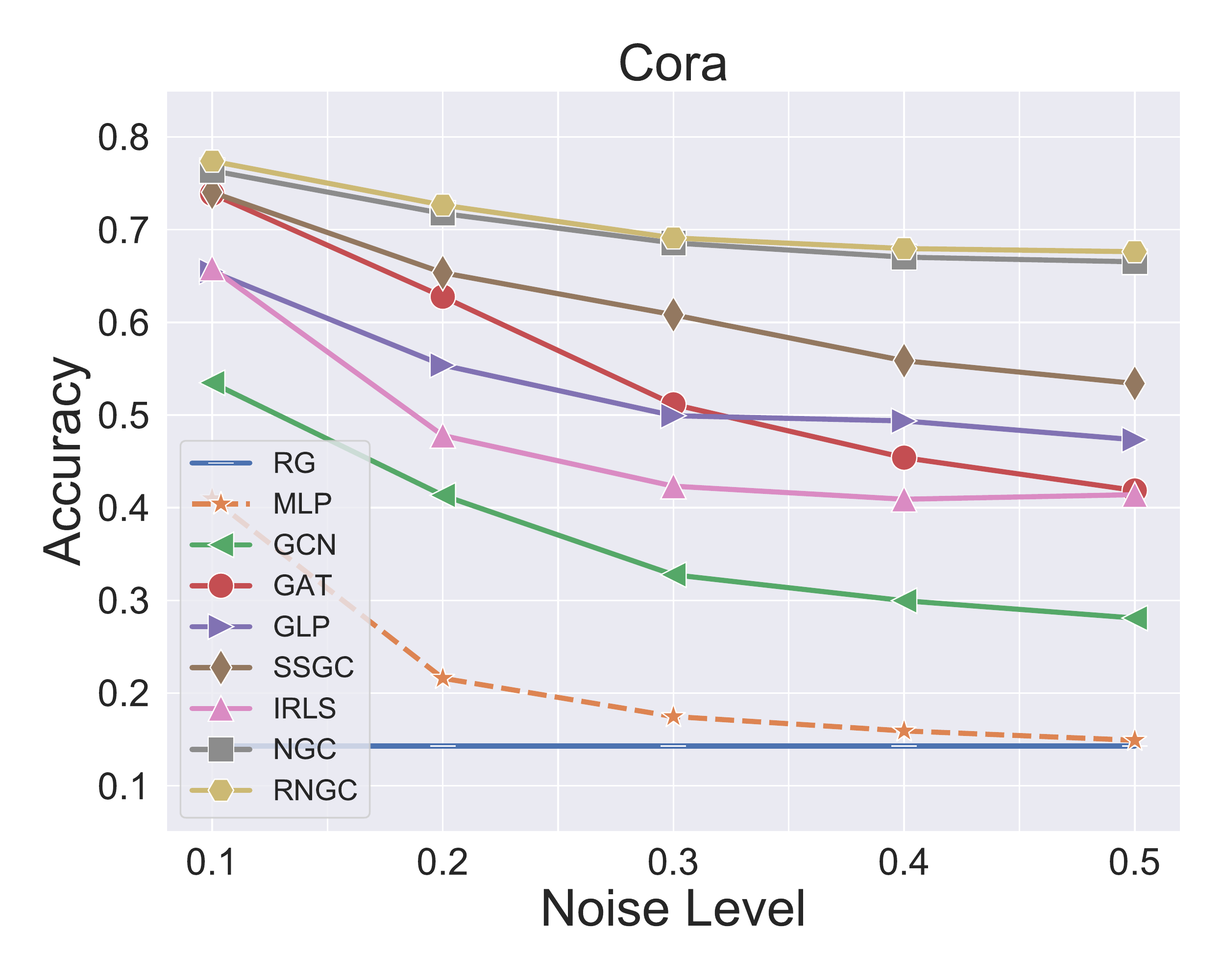}
        \includegraphics[width=0.325\textwidth]{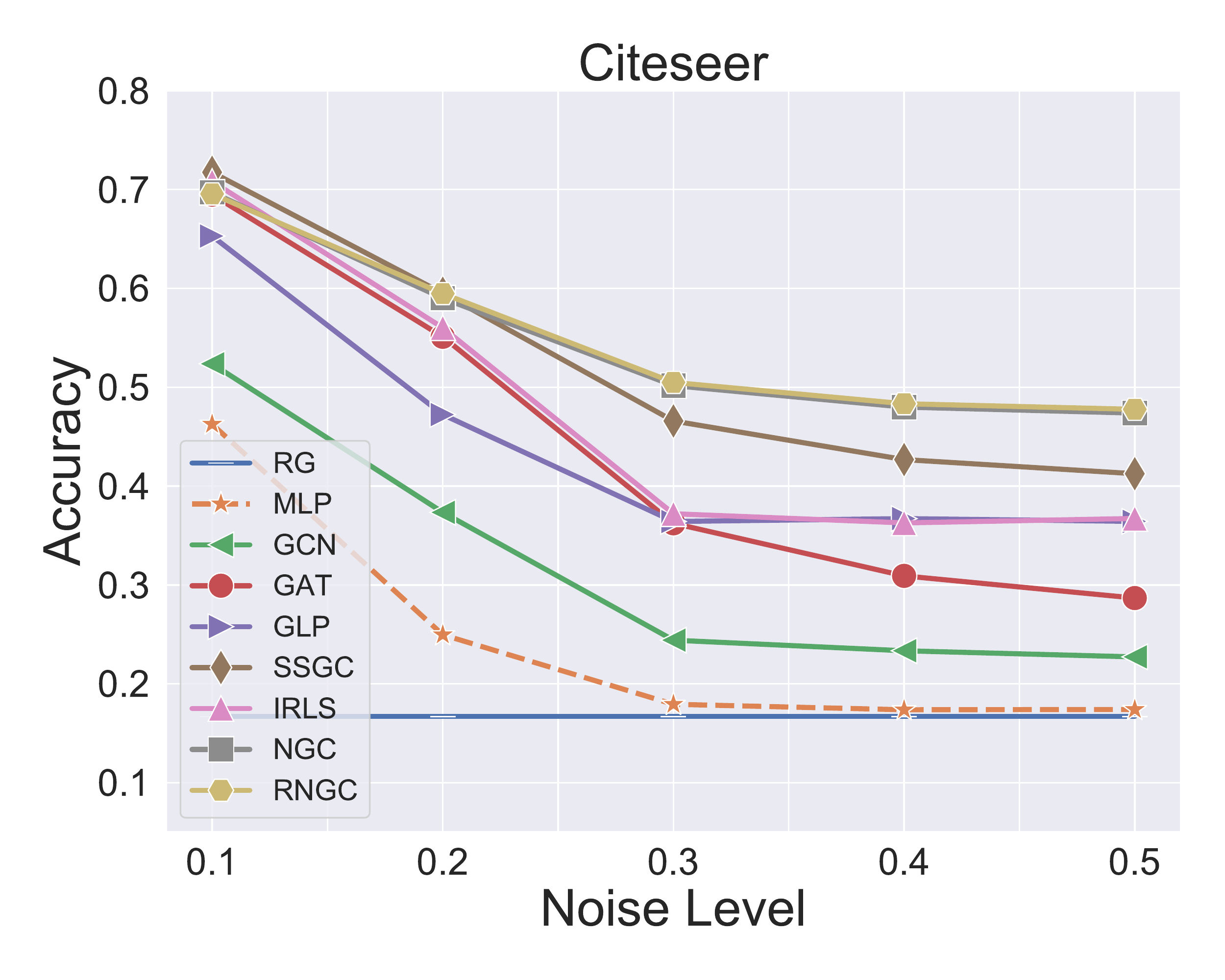}
        \includegraphics[width=0.325\textwidth]{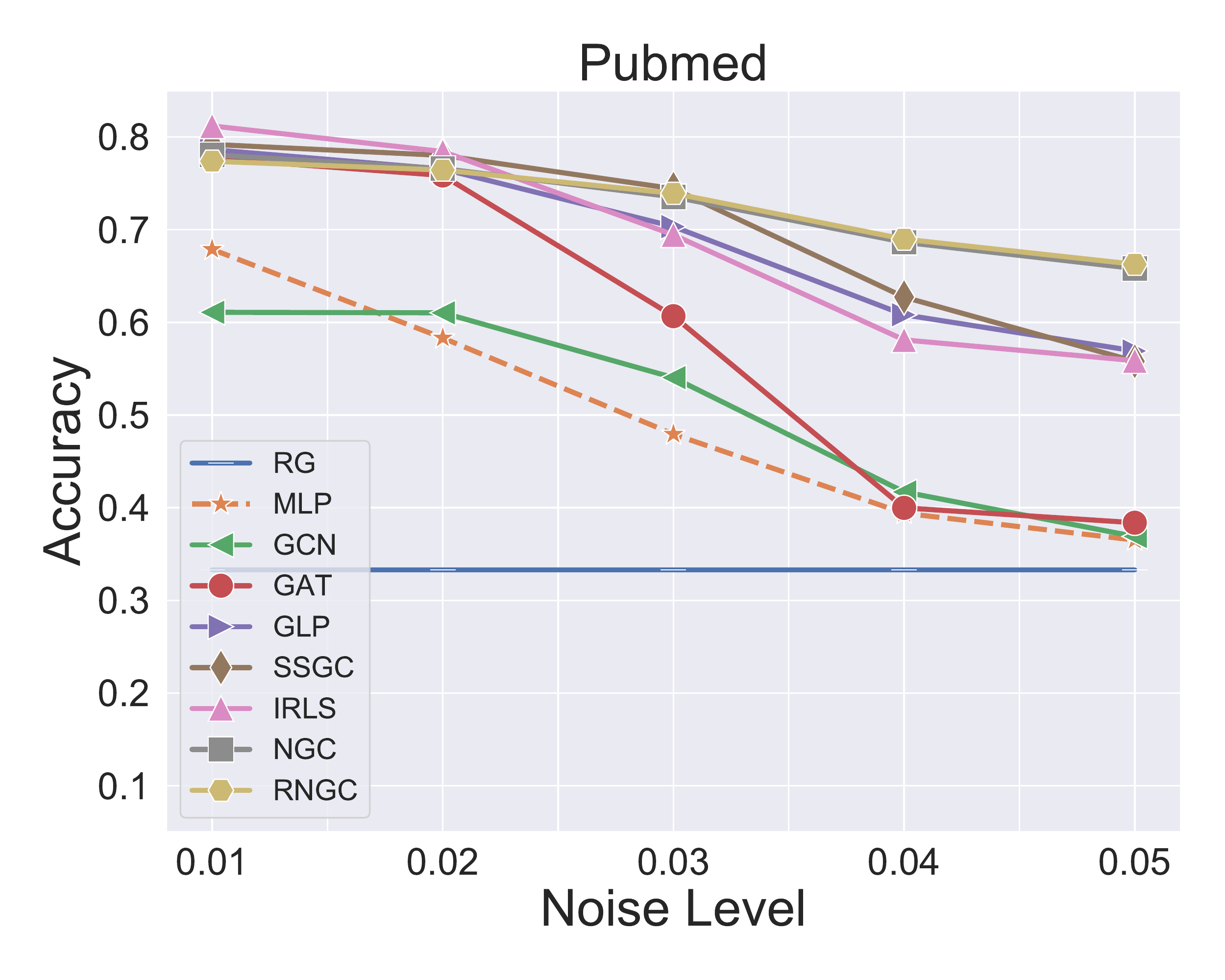}
    \end{center}
    \caption{Comparison of  classification accuracy v.s. noise level for semi-supervised node classification tasks. The noise level $\xi$ controls the magnitude of the Gaussian noise we add to the feature matrix: $\mathbf{X}+\xi\bm{\eta}$ where $\bm{\eta}$ is sampled from standard i.i.d., Gaussian distribution.}
    \label{fig:noise}
 \end{figure}

\subsection{Denoising Effectiveness Comparison of Various GNN Models}
In this section, we compare the denoising effectiveness of different GNN models through their test accuracy by training on the noisy feature matrix with Gaussian noise. 

\paragraph{Datasets.} In our experiments, we utilize three public citation network datasets Cora, Citeseer, and Pubmed~\citep{sen2008collective} which are homophily graphs for semi-supervised node classification. For the semi-supervised learning experimental setup, we follow the standard fixed splits employed in~\citep{yang2016revisiting}, with 20 nodes per class for training, 500 nodes for validation, and 1,000 nodes for testing. We also use four datasets: Cornell, Texas, Wisconsin, and Actor which are heterophily graphs for full-supervised node classification. For each dataset, we randomly split nodes into 60\%, 20\%, and 20\% for training, validation, and testing as suggested in \citep{pei2020geom}. Moreover, we utilize three large-scale graph datasets: Coauthor-CS, Coauthor-Phy~\citep{shchur2018pitfalls}, and ogbn-products~\citep{hu2020open} for evaluation. For Coauthor datasets, we split nodes into 60\%, 20\%, and 20\% for training, validation, and testing. For ogbn-products dataset, we follow the dataset split in OGB~\citep{hu2020open}.

\paragraph{Baselines.}
For the baselines, we consider graph neural networks derived from graph signal denoising, including GLP~\citep{li2019label}, S$^2$GC~\citep{zhu2021simple}, and IRLS~\citep{yang2021graph}; popular GNN architectures, such as GCN~\citep{kipf2017semi} and GAT~\citep{velivckovic2018graph}; and MLP which has no aggregation operation. 

\paragraph{Experimental Setup and Implementations.}
We assume that the original feature matrix is clean and do not have noise and we synthesize the noise from the standard Gaussian distribution and add them on the original feature matrix. By default, we apply row normalization for data after adding the Gaussian noise\footnote{We also perform an analysis on the effect of row normalization in noisy feature matrix in Appendix~\ref{appendix:row norm}.}, and train all the models based on these noisy feature matrix. For the hyper-parameters of each model, we follow the setting that reported in their original papers. To eliminate the effect of randomness, we repeat such experiment for 100 or 10 times and report the mean accuracy. Note that in each repeated run, we add different Gaussian noises. While for the same run, we apply the same noisy feature matrix for training all the models. For our \name and R\name model, the hyper-parameter details can be found in Appendix~\ref{appendix:hyperparameter}.

\begin{table}[t]
\caption{Summary of results (10 runs) on heterophily graphs in terms of classification accuracy (\%)}
\label{tab:heterophily}
\normalsize
\setlength{\tabcolsep}{3mm}
\begin{center}
\scalebox{0.75}{\begin{tabular}{ccccccccc}
\toprule
\multirow{2}{*}{Noise Level} & \multicolumn{2}{c}{Cornell} & \multicolumn{2}{c}{Texas} & \multicolumn{2}{c}{Wisconsin} & \multicolumn{2}{c}{Actor} \\
\cmidrule(r){2-3} \cmidrule(r){4-5} \cmidrule(r){6-7} \cmidrule(r){8-9} 
& 0.01 &  1  &  0.01  &  1  & 0.01 & 1   &  0.01  &  1 \\
\midrule
MLP & 69.7\scriptsize{$\pm$8.6} & 55.3\scriptsize{$\pm$7.6} & 69.7\scriptsize{$\pm$8.6} & 55.3\scriptsize{$\pm$7.6} & \textbf{78.6\scriptsize{$\pm$6.5}}  & 44.6\scriptsize{$\pm$6.2} & 33.3\scriptsize{$\pm$1.1} & \textbf{25.1\scriptsize{$\pm$1.0}}  \\
GCN & 56.9\scriptsize{$\pm$8.4} & 51.7\scriptsize{$\pm$14.9} & 56.6\scriptsize{$\pm$8.1} & 52.5\scriptsize{$\pm$11.7} & 48.0\scriptsize{$\pm$6.1}  & 41.2\scriptsize{$\pm$9.0} & 26.4\scriptsize{$\pm$1.0} & 23.8\scriptsize{$\pm$3.0}  \\
GAT & 55.8\scriptsize{$\pm$8.9} & 55.0\scriptsize{$\pm$7.5} & 56.4\scriptsize{$\pm$8.1} & 54.7\scriptsize{$\pm$8.2} & 53.4\scriptsize{$\pm$7.2}  & \textbf{48.2\scriptsize{$\pm$7.2}} & 27.3\scriptsize{$\pm$1.2} & 24.3\scriptsize{$\pm$0.7}  \\
GLP & 65.3\scriptsize{$\pm$8.6} & 54.2\scriptsize{$\pm$7.6} & 60.0\scriptsize{$\pm$9.3} & 52.8\scriptsize{$\pm$8.0} & 59.0\scriptsize{$\pm$5.4}  & 42.6\scriptsize{$\pm$5.4} & 31.0\scriptsize{$\pm$1.3} & \textbf{25.1\scriptsize{$\pm$0.8}}  \\
S$^2$GC & 60.6\scriptsize{$\pm$9.3} & 48.6\scriptsize{$\pm$10.4} & 56.4\scriptsize{$\pm$7.2} & 50.3\scriptsize{$\pm$8.0} & 47.4\scriptsize{$\pm$4.5}  & 37.2\scriptsize{$\pm$3.7} & 27.2\scriptsize{$\pm$1.1} & 23.5\scriptsize{$\pm$1.3}  \\
IRLS & 48.1\scriptsize{$\pm$8.5} & 46.7\scriptsize{$\pm$6.2} & 65.6\scriptsize{$\pm$8.4} & 42.8\scriptsize{$\pm$15.9} & 65.2\scriptsize{$\pm$6.0}  & 37.4\scriptsize{$\pm$8.5} & \textbf{36.1\scriptsize{$\pm$0.9}} & 21.5\scriptsize{$\pm$4.1}  \\
NGC & \textbf{72.8\scriptsize{$\pm$8.7}} & \textbf{56.4\scriptsize{$\pm$8.1}} & \textbf{73.9\scriptsize{$\pm$6.9}} & \textbf{56.4\scriptsize{$\pm$8.1}} & 74.8\scriptsize{$\pm$6.8}  & \textbf{46.8\scriptsize{$\pm$6.6}} & 34.0\scriptsize{$\pm$1.6} & \textbf{25.1\scriptsize{$\pm$1.0}}  \\
RNGC & \textbf{75.8\scriptsize{$\pm$7.9}} & \textbf{56.4\scriptsize{$\pm$8.1}} & \textbf{74.2\scriptsize{$\pm$6.1}} & \textbf{56.4\scriptsize{$\pm$8.1}} & \textbf{76.4\scriptsize{$\pm$5.3}}  & \textbf{46.8\scriptsize{$\pm$6.6}} & \textbf{34.3\scriptsize{$\pm$1.6}} & \textbf{25.1\scriptsize{$\pm$1.0}}  \\
\bottomrule
\end{tabular}}
\end{center}
\end{table}
\begin{table}[t]
 \begin{minipage}[t]{0.6\textwidth}
 \normalsize    \makeatletter\def\@captype{table}\makeatother\parbox{7.5cm}{
 \vskip -0.15in
 \caption{Summary of results (10 runs) on Coauthor-CS and Coauthor-Phy in terms of accuracy (\%)}\label{tab:coauthor}}
    \begin{center}
    \setlength{\tabcolsep}{3mm}
        \scalebox{0.75}{\begin{tabular}{ccccc}
        \toprule
        \multirow{2}{*}{Noise Level} & \multicolumn{2}{c}{Coauthor-CS} & \multicolumn{2}{c}{Coauthor-Phy}  \\
        \cmidrule(r){2-3} \cmidrule(r){4-5} 
        & 0.1 &  1  &  0.1  &  1    \\
        \midrule
        MLP & 82.5\scriptsize{$\pm$1.8} & 22.3\scriptsize{$\pm$0.1} & 81.6\scriptsize{$\pm$8.1} & 47.0\scriptsize{$\pm$10.0}  \\
        GCN & 87.3\scriptsize{$\pm$0.5} & 61.3\scriptsize{$\pm$14.3} & 94.2\scriptsize{$\pm$0.4} & 78.6\scriptsize{$\pm$10.6}   \\
        GAT & 86.8\scriptsize{$\pm$3.6} & 57.9\scriptsize{$\pm$20.2} & 94.0\scriptsize{$\pm$0.4} & 63.7\scriptsize{$\pm$16.7}  \\
        GLP & 91.3\scriptsize{$\pm$0.4} & 52.4\scriptsize{$\pm$17.3} & 93.3\scriptsize{$\pm$2.5} & 81.3\scriptsize{$\pm$10.6}  \\
        S$^2$GC & 86.1\scriptsize{$\pm$0.2} & 79.6\scriptsize{$\pm$10.2} & 92.6\scriptsize{$\pm$1.3} & 89.4\scriptsize{$\pm$4.3}  \\
        IRLS & 78.8\scriptsize{$\pm$5.1} & 62.1\scriptsize{$\pm$17.8} & 89.2\scriptsize{$\pm$3.4} & 87.0\scriptsize{$\pm$4.5}  \\
        NGC & \textbf{95.3\scriptsize{$\pm$}0.2} & \textbf{87.1\scriptsize{$\pm$}3.1} & \textbf{95.7\scriptsize{$\pm$}0.2} & \textbf{93.1\scriptsize{$\pm$}1.4} \\
        RNGC & \textbf{95.4\scriptsize{$\pm$}0.2} & \textbf{87.8\scriptsize{$\pm$}1.5} & \textbf{95.7\scriptsize{$\pm$}0.2} & \textbf{93.6\scriptsize{$\pm$}0.8}  \\
        \bottomrule
        \end{tabular}}
        \end{center}
  \end{minipage}
  \begin{minipage}[t]{0.3\textwidth}
  \normalsize
\makeatletter\def\@captype{table}\makeatother\parbox{5cm}{\caption{Summary of results (10 runs) on ogbn-products in terms of accuracy (\%)}\label{tab:ogb}}
        \setlength{\tabcolsep}{3mm}
        \begin{center}
         \scalebox{0.75}{\begin{tabular}{ccc}
            \toprule
            \multirow{2}{*}{Noise Level} & \multicolumn{2}{c}{ogbn-products}  \\
            \cmidrule(r){2-3}
             & 0.1 & 1   \\
            \midrule
            MLP & 59.68\scriptsize{$\pm$0.16}  & 38.08\scriptsize{$\pm$0.10} \\
            GCN & 75.60\scriptsize{$\pm$0.19} & 72.76\scriptsize{$\pm$0.20}  \\
            S$^2$GC &  74.95\scriptsize{$\pm$0.13}  & 63.17\scriptsize{$\pm$0.12}  \\
            NGC & \textbf{77.56\scriptsize{$\pm$0.15}} & \textbf{73.36\scriptsize{$\pm$0.11}}  \\
            RNGC & \textbf{77.54\scriptsize{$\pm$}0.15}  & \textbf{73.66\scriptsize{$\pm$}0.13}  \\
            \bottomrule
            \end{tabular}}
        \end{center}
   \end{minipage}
 \end{table}
\paragraph{Results on Supervised Node Classification.}
Figure~\ref{fig:noise} illustrates the comparison of classification accuracy against the various noise levels for semi-supervised node classification tasks. 
The noise level $\xi$ controls the magnitude of the Gaussian noise we add to the feature matrix: $\mathbf{X}+\xi\bm{\eta}$ where $\bm{\eta}$ is sampled from standard i.i.d., Gaussian distribution. For Cora and Citeseer, we test $\xi \in \{0.1, 0.2, 0.3, 0.4, 0.5\}$ and for Pubmed, we test $\xi \in \{0.01, 0.02, 0.03, 0.04, 0.05\}$. From Figure~\ref{fig:noise}, we can observe that the test accuracy of MLP is close to randomly guessing (RG) when the noise level is relatively large. This implies the weak denoising effect of MLP models. For shallow GNN models, such as GCN and GAT (which usually contain 2 layers), their denoising performance is limited especially on Pubmed since they do not aggregate information (features and noise) from higher-order neighbors. For models with deep layers{\footnote{We also perform an analysis on the denoising effect of depth in \name and R\name in Appendix~\ref{appendix:depth analysis}.}}, such as IRLS ($\geq 8$ layers), the denoising performance is much better compared to shallow models. Lastly, our \name and R\name model with 16 layers ($S=16$) achieve significantly better denoising performance compared with other baseline methods, which backup our theoretical analyses. In most cases, \name and R\name achieve very similar denoising performance but in general, R\name still slightly outperforms \name, suggesting that we indeed gain more benefits by solving the adversarial graph denoising problem. 

{Table~\ref{tab:heterophily} reports the comparison of classification accuracy against the various noise levels for full-supervised node classification tasks on heterophily graphs. The first- and second-highest accuracies are highlighted in bold. For these datasets, we test $\xi \in \{0.01, 1\}$. From Table~\ref{tab:heterophily}, we can observe that MLP is better than most GNN models in most cases due to the heterophily properties of these graphs. However, our proposed R\name achieves significantly better or matches denoising performance compared with other baseline methods, which demonstrates the superiority of our R\name.}

{For ogbn-products, we only choose MLP, GCN, and S$^2$GC as baselines, since the results are sensitive concerning model size and various tricks from the OGB leaderboard. For fair comparison, the size of parameters for these baselines and R\name is the same. We also use full-batch training for the baselines and our model. Table~\ref{tab:coauthor} and \ref{tab:ogb} report the comparison of classification accuracy against the various noise levels for full-supervised node classification tasks on large-scale graphs. The first- and second-highest accuracies are highlighted in bold. For these datasets, we test $\xi \in \{0.1, 1\}$. Compared with the above small datasets, the node degree on these three datasets is larger, which means they have better connectivity. From Table~\ref{tab:coauthor} and \ref{tab:ogb}, we can observe that the test accuracy of MLP is far lower than GCN and R\name. This implies the weak denoising effect of MLP. The test accuracy of GCN is slightly smaller than R\name on these datasets since they are well-connected and have a large graph size and we can achieve a good denoising performance with shallow-layer GNN models. For the scalability of R\name on large graphs such as ogbn-products, we use the acceleration method mentioned in Sec.~\ref{Minimization of the Optimization Problem}.}
 \begin{table}[t]
 \vskip -0.1in
 \begin{minipage}[t]{0.6\textwidth}
 \normalsize
    \makeatletter\def\@captype{table}\makeatother\parbox{8cm}{\caption{Denoising performance over 100 runs against flipping perturbation}\label{tab:flip}}
    \begin{center}
    \setlength{\tabcolsep}{1mm}
        \scalebox{0.75}{\begin{tabular}{ccccccccccc}
        \toprule
        \multirow{2}{*}{Flipping probability} & \multicolumn{3}{c}{Cora} & \multicolumn{3}{c}{Citeseer} & \multicolumn{3}{c}{Pubmed}   \\
        \cmidrule(r){2-4} \cmidrule(r){5-7}  \cmidrule(r){8-10} 
        & 0.1 &  0.2 & 0.4 & 0.1 &  0.2 & 0.4 & 0.1 &  0.2 & 0.4 \\
        \midrule
        MLP & 21.2 & 21.1 & 23.3 & 19.3  & 18.9 & 18.9 & 38.0 & 39.0 & 40.6 \\
        GCN & 22.9 & 19.0 & 19.0 & 18.6 & 18.6 & 18.5 & 37.8 & 38.1 & 37.6 \\
        GAT & 70.1 & 65.6 & 60.0 & 45.3 & 39.3 & 26.0 & 43.3 & 49.5 & 60.0 \\
        GLP & 32.3 & 30.8 & 29.0 & 19.7 & 18.9 & 18.8 & 42.1 & 41.5 & 40.7 \\
        S$^2$GC & 75.0 & 71.5 & 63.8 & 49.9 & 46.4 & 43.4 & 50.4 & 60.2 & 69.3 \\
        IRLS & 66.4 & 61.0 & 54.7 & 50.3 & 45.9 & 43.8 & 51.4 & 60.0 & 69.0 \\
        NGC & \textbf{77.5} & \textbf{75.3} & \textbf{65.7} & \textbf{54.9} & \textbf{51.9} & \textbf{48.5} & \textbf{53.0} & \textbf{62.3} & \textbf{70.4} \\
        RNGC & \textbf{77.6} & \textbf{75.2} & \textbf{72.8} & \textbf{55.0} & \textbf{51.8} & \textbf{48.7} & \textbf{54.3} & \textbf{63.9} & \textbf{71.6} \\
        \bottomrule
        \end{tabular}}
        \end{center}
  \end{minipage}
  \begin{minipage}[t]{0.35\textwidth}
  \normalsize
     \makeatletter\def\@captype{table}\makeatother\parbox{4.8cm}{\caption{Defense performance over 100 runs against structure attck}\label{tab:attack}}
        \setlength{\tabcolsep}{1mm}
        \begin{center}
         \scalebox{0.75}{\begin{tabular}{lccc}
            \hline
            Model	& Cora & Citeseer & Pubmed  \\
            \hline\hline
            GCN & 47.53 & 56.94 & 75.50 \\
            GAT & 54.78 & 61.85 & 65.41 \\
            RobustGCN & 50.51 & 55.35 & 67.95 \\
            GCN-Jaccard & 60.82 & 59.89 & 83.66 \\
            GCN-SVD & 52.06 & 57.18 & 82.72 \\
            S$^2$GC & 51.60 & 54.11 & 64.04 \\
            RNGC & \textbf{63.16} & \textbf{65.64} & \textbf{84.04} \\
            \hline
        \end{tabular}}
        \end{center}
   \end{minipage}
 \end{table}
\subsection{Denoising Performance on Feature Flipping Perturbation}
In this section, we compare the denoising effectiveness of different models through their test accuracy by training on the noisy feature matrix which is perturbated through flipping the individual feature with a small Bernoulli probability on three citation datasets.
\paragraph{Setting and Results.} We flip the individual feature on three citation datasets: Cora, Citeseer, and Pubmed as the noise. And we compare the denoising performance of R\name with MLP and GCN. From Table~\ref{tab:flip}, we can observe that the denoising performance of R\name is much better than baselines when the flip probability is 0.4. In fact, the added perturbations by flipping the individual feature approximately follow a Bernoulli distribution, which is also a Sub-Gaussian distribution. The results verify our theoretical analysis further.

\subsection{Defense Performance of R\name against Graph Structure Attack}
Although we do not perform actual graph structure perturbations as in graph adversarial attacks~\citep{zugner2018adversarial,zugner2019adversarial} but a virtual perturbation in the Laplacian. Therefore, it's not clear how much perturbations on the Laplacian correspond to the actual perturbations on graph structure. Nevertheless, we still conduct the experiments of R\name against graph structure meta-attack where the ptb rate is 25\%. As shown in the Table~\ref{tab:attack}, our R\name model outperforms than GCN, GAT, RobustGCN \citep{zugner2019robustgcn}, GCN-Jaccard \citep{wu2019adversarial}, GCN-SVD \citep{entezari2020all}, and S$^2$GC on Cora, Citeseer, and Pubmed.

\section{Related Work}
\paragraph{Implicit Denoising in GNNs.}
Existing graph denoising works are mainly based on the graph smoothing technique~\citep{chen2014signal,zhou2021graph}. It is well known that GNNs can increase the smoothness of node features through aggregating information from neighbors, thus the influence from noisy features can be counteracted in GNN's output. Some recent GNN models are derived from the perspective of signal denoising, such as S$^2$GC~\citep{zhu2021simple}, GLP~\citep{li2019label}, and IRLS~\citep{yang2021graph}. Moreover, Ma et al.~\citep{ma2021unified} builds the connection between signal denoising and existing popular GNNs by formulating message passing as a process of solving the GSD problem. The relationship between GSD and GCN can be briefly illustrated as follows~\citep{ma2021unified}. This suggests a possibility for us to understand the behavior of GNNs through the lens of signal denoising. To our best knowledge, we are the first to offer a theoretical analysis to understand the denoising effect of GNNs. Besides, there is a recent work~\citep{zhang2022graphless} to conduct the empirical study of the denoising effect in GNNs. In this work, we perform an extensive analysis to understand the denoising effect of GNNs from both theoretical and experimental perspectives.
\paragraph{Smoothing and Over-smoothing.}One key principle of GNNs is to improve the smoothness of node representations. But stacking graph layers can lead to over-smoothing~\citep{li2018deeper}, where the node representations can not be distinguishable. There are some recent works that have been proposed to address over-smoothing such as JKnet~\citep{xu2018representation}, GCNII~\citep{chen2020simple}, and RevGNN-Deep~\citep{pmlr-v139-li21o}. They add the output of shallow layers to the final layers with a residual-style design. In this work, we will show smoothing can help the denoising effect of GNNs.

\section{Conclusion}
Our work conducts a comprehensive study on the implicit denoising effect of graph neural networks. We theoretical show that the denoising effect of GNNs are largely influenced by the connectivity and the size of the graph structure, as well as the GNN architectures. Motivated by our analysis, we also propose a robust graph convolution model by solving the robust graph signal denoising problem which enhances the smoothness of node representations and the implicit denoising effect.

\newpage

\normalem
\bibliography{cite}

\begin{thebibliography}{41}
\providecommand{\natexlab}[1]{#1}
\providecommand{\url}[1]{\texttt{#1}}
\expandafter\ifx\csname urlstyle\endcsname\relax
  \providecommand{\doi}[1]{doi: #1}\else
  \providecommand{\doi}{doi: \begingroup \urlstyle{rm}\Url}\fi

\bibitem[Abu-El-Haija et~al.(2020)Abu-El-Haija, Kapoor, Perozzi, and
  Lee]{abu2020n}
Sami Abu-El-Haija, Amol Kapoor, Bryan Perozzi, and Joonseok Lee.
\newblock N-gcn: Multi-scale graph convolution for semi-supervised node
  classification.
\newblock In \emph{Uncertainty in Artificial Intelligence}, 2020.

\bibitem[Bhatia(2013)]{bhatia2013matrix}
Rajendra Bhatia.
\newblock \emph{Matrix analysis}, volume 169.
\newblock Springer Science \& Business Media, 2013.

\bibitem[Chen et~al.(2020)Chen, Wei, Huang, Ding, and Li]{chen2020simple}
Ming Chen, Zhewei Wei, Zengfeng Huang, Bolin Ding, and Yaliang Li.
\newblock Simple and deep graph convolutional networks.
\newblock In \emph{International Conference on Machine Learning}, 2020.

\bibitem[Chen et~al.(2014)Chen, Sandryhaila, Moura, and
  Kovacevic]{chen2014signal}
Siheng Chen, Aliaksei Sandryhaila, Jos{\'e}~MF Moura, and Jelena Kovacevic.
\newblock Signal denoising on graphs via graph filtering.
\newblock In \emph{2014 IEEE Global Conference on Signal and Information
  Processing (GlobalSIP)}, pp.\  872--876. IEEE, 2014.

\bibitem[Chen et~al.(2021)Chen, Eldar, and Zhao]{chen2021graph}
Siheng Chen, Yonina~C Eldar, and Lingxiao Zhao.
\newblock Graph unrolling networks: Interpretable neural networks for graph
  signal denoising.
\newblock \emph{IEEE Transactions on Signal Processing}, 69:\penalty0
  3699--3713, 2021.

\bibitem[Dai et~al.(2019)Dai, Li, Coley, Dai, and Song]{dai2019retrosynthesis}
Hanjun Dai, Chengtao Li, Connor~W Coley, Bo~Dai, and Le~Song.
\newblock Retrosynthesis prediction with conditional graph logic network.
\newblock In \emph{Advances in Neural Information Processing Systems}, 2019.

\bibitem[Entezari et~al.(2020)Entezari, Al-Sayouri, Darvishzadeh, and
  Papalexakis]{entezari2020all}
Negin Entezari, Saba~A Al-Sayouri, Amirali Darvishzadeh, and Evangelos~E
  Papalexakis.
\newblock All you need is low (rank) defending against adversarial attacks on
  graphs.
\newblock In \emph{Proceedings of the 13th International Conference on Web
  Search and Data Mining}, 2020.

\bibitem[Fey \& Lenssen(2019)Fey and Lenssen]{fey2019fast}
Matthias Fey and Jan~Eric Lenssen.
\newblock Fast graph representation learning with pytorch geometric.
\newblock \emph{arXiv preprint arXiv:1903.02428}, 2019.

\bibitem[Guo et~al.(2019)Guo, Lin, Feng, Song, and Wan]{guo2019attention}
Shengnan Guo, Youfang Lin, Ning Feng, Chao Song, and Huaiyu Wan.
\newblock Attention based spatial-temporal graph convolutional networks for
  traffic flow forecasting.
\newblock In \emph{Proceedings of the AAAI Conference on Artificial
  Intelligence}, 2019.

\bibitem[Hamilton et~al.(2017)Hamilton, Ying, and
  Leskovec]{hamilton2017inductive}
Will Hamilton, Zhitao Ying, and Jure Leskovec.
\newblock Inductive representation learning on large graphs.
\newblock In \emph{Advances in neural information processing systems}, 2017.

\bibitem[Hoeffding(1994)]{hoeffding1994probability}
Wassily Hoeffding.
\newblock Probability inequalities for sums of bounded random variables.
\newblock In \emph{The collected works of Wassily Hoeffding}, pp.\  409--426.
  Springer, 1994.

\bibitem[Hu et~al.(2020)Hu, Fey, Zitnik, Dong, Ren, Liu, Catasta, and
  Leskovec]{hu2020open}
Weihua Hu, Matthias Fey, Marinka Zitnik, Yuxiao Dong, Hongyu Ren, Bowen Liu,
  Michele Catasta, and Jure Leskovec.
\newblock Open graph benchmark: Datasets for machine learning on graphs.
\newblock \emph{arXiv preprint arXiv:2005.00687}, 2020.

\bibitem[Jacot et~al.(2018)Jacot, Gabriel, and Hongler]{jacot2018neural}
Arthur Jacot, Franck Gabriel, and Cl{\'e}ment Hongler.
\newblock Neural tangent kernel: Convergence and generalization in neural
  networks.
\newblock In \emph{Advances in Neural Information Processing Systems}, 2018.

\bibitem[Jia \& Benson(2022)Jia and Benson]{jia2022unifying}
Junteng Jia and Austin~R Benson.
\newblock A unifying generative model for graph learning algorithms: Label
  propagation, graph convolutions, and combinations.
\newblock \emph{SIAM Journal on Mathematics of Data Science}, 4\penalty0
  (1):\penalty0 100--125, 2022.

\bibitem[Kipf \& Welling(2017)Kipf and Welling]{kipf2017semi}
Thomas~N. Kipf and Max Welling.
\newblock Semi-supervised classification with graph convolutional networks.
\newblock In \emph{International Conference on Learning Representation}, 2017.

\bibitem[Li et~al.(2021)Li, M{\"u}ller, Ghanem, and Koltun]{pmlr-v139-li21o}
Guohao Li, Matthias M{\"u}ller, Bernard Ghanem, and Vladlen Koltun.
\newblock Training graph neural networks with 1000 layers.
\newblock In \emph{International Conference on Machine Learning}, 2021.

\bibitem[Li et~al.(2018)Li, Han, and Wu]{li2018deeper}
Qimai Li, Zhichao Han, and Xiao-Ming Wu.
\newblock Deeper insights into graph convolutional networks for semi-supervised
  learning.
\newblock In \emph{Proceedings of the AAAI Conference on Artificial
  Intelligence}, 2018.

\bibitem[Li et~al.(2019)Li, Wu, Liu, Zhang, and Guan]{li2019label}
Qimai Li, Xiao-Ming Wu, Han Liu, Xiaotong Zhang, and Zhichao Guan.
\newblock Label efficient semi-supervised learning via graph filtering.
\newblock In \emph{Proceedings of the IEEE/CVF Conference on Computer Vision
  and Pattern Recognition}, 2019.

\bibitem[Liao et~al.(2019)Liao, Zhao, Urtasun, and Zemel]{liao2019lanczosnet}
Renjie Liao, Zhizhen Zhao, Raquel Urtasun, and Richard~S Zemel.
\newblock Lanczosnet: Multi-scale deep graph convolutional networks.
\newblock In \emph{International Conference on Learning Representations}, 2019.

\bibitem[Ma et~al.(2021)Ma, Liu, Zhao, Liu, Tang, and Shah]{ma2021unified}
Yao Ma, Xiaorui Liu, Tong Zhao, Yozen Liu, Jiliang Tang, and Neil Shah.
\newblock A unified view on graph neural networks as graph signal denoising.
\newblock In \emph{Proceedings of the 30th ACM International Conference on
  Information \& Knowledge Management}, 2021.

\bibitem[Madry et~al.(2017)Madry, Makelov, Schmidt, Tsipras, and
  Vladu]{madry2017towards}
Aleksander Madry, Aleksandar Makelov, Ludwig Schmidt, Dimitris Tsipras, and
  Adrian Vladu.
\newblock Towards deep learning models resistant to adversarial attacks.
\newblock \emph{arXiv preprint arXiv:1706.06083}, 2017.

\bibitem[Madry et~al.(2018)Madry, Makelov, Schmidt, Tsipras, and
  Vladu]{madry2018towards}
Aleksander Madry, Aleksandar Makelov, Ludwig Schmidt, Dimitris Tsipras, and
  Adrian Vladu.
\newblock Towards deep learning models resistant to adversarial attacks.
\newblock In \emph{International Conference on Learning Representations}, 2018.

\bibitem[Paszke et~al.(2019)Paszke, Gross, Massa, Lerer, Bradbury, Chanan,
  Killeen, Lin, Gimelshein, Antiga, et~al.]{paszke2019pytorch}
Adam Paszke, Sam Gross, Francisco Massa, Adam Lerer, James Bradbury, Gregory
  Chanan, Trevor Killeen, Zeming Lin, Natalia Gimelshein, Luca Antiga, et~al.
\newblock Pytorch: An imperative style, high-performance deep learning library.
\newblock In \emph{Advances in Neural Information Processing Systems}, 2019.

\bibitem[Pei et~al.(2020)Pei, Wei, Chang, Lei, and Yang]{pei2020geom}
Hongbin Pei, Bingzhe Wei, Kevin Chen-Chuan Chang, Yu~Lei, and Bo~Yang.
\newblock Geom-gcn: Geometric graph convolutional networks.
\newblock In \emph{International Conference on Learning Representations}, 2020.

\bibitem[Sen et~al.(2008)Sen, Namata, Bilgic, Getoor, Galligher, and
  Eliassi-Rad]{sen2008collective}
Prithviraj Sen, Galileo Namata, Mustafa Bilgic, Lise Getoor, Brian Galligher,
  and Tina Eliassi-Rad.
\newblock Collective classification in network data.
\newblock \emph{AI magazine}, 2008.

\bibitem[Shchur et~al.(2018)Shchur, Mumme, Bojchevski, and
  G{\"u}nnemann]{shchur2018pitfalls}
Oleksandr Shchur, Maximilian Mumme, Aleksandar Bojchevski, and Stephan
  G{\"u}nnemann.
\newblock Pitfalls of graph neural network evaluation.
\newblock \emph{arXiv preprint arXiv:1811.05868}, 2018.

\bibitem[Stewart(1998)]{stewart1998matrix}
Gilbert~W Stewart.
\newblock \emph{Matrix algorithms: volume 1: basic decompositions}.
\newblock SIAM, 1998.

\bibitem[Veli{\v{c}}kovi{\'c} et~al.(2018)Veli{\v{c}}kovi{\'c}, Cucurull,
  Casanova, Romero, Lio, and Bengio]{velivckovic2018graph}
Petar Veli{\v{c}}kovi{\'c}, Guillem Cucurull, Arantxa Casanova, Adriana Romero,
  Pietro Lio, and Yoshua Bengio.
\newblock Graph attention networks.
\newblock In \emph{International Conference on Learning Representations}, 2018.

\bibitem[Vershynin(2010)]{vershynin2010introduction}
Roman Vershynin.
\newblock Introduction to the non-asymptotic analysis of random matrices.
\newblock \emph{arXiv preprint arXiv:1011.3027}, 2010.

\bibitem[Wu et~al.(2019{\natexlab{a}})Wu, Souza, Zhang, Fifty, Yu, and
  Weinberger]{wu2019simplifying}
Felix Wu, Amauri Souza, Tianyi Zhang, Christopher Fifty, Tao Yu, and Kilian
  Weinberger.
\newblock Simplifying graph convolutional networks.
\newblock In \emph{International Conference on Machine Learning},
  2019{\natexlab{a}}.

\bibitem[Wu et~al.(2019{\natexlab{b}})Wu, Wang, Tyshetskiy, Docherty, Lu, and
  Zhu]{wu2019adversarial}
Huijun Wu, Chen Wang, Yuriy Tyshetskiy, Andrew Docherty, Kai Lu, and Liming
  Zhu.
\newblock Adversarial examples for graph data: Deep insights into attack and
  defense.
\newblock In \emph{Proceedings of the Twenty-Eighth International Joint
  Conference on Artificial Intelligence}, 2019{\natexlab{b}}.

\bibitem[Xu et~al.(2018)Xu, Li, Tian, Sonobe, Kawarabayashi, and
  Jegelka]{xu2018representation}
Keyulu Xu, Chengtao Li, Yonglong Tian, Tomohiro Sonobe, Ken-ichi Kawarabayashi,
  and Stefanie Jegelka.
\newblock Representation learning on graphs with jumping knowledge networks.
\newblock In \emph{International Conference on Machine Learning}, 2018.

\bibitem[Yang et~al.(2021)Yang, Liu, Wang, Zhou, Gan, Wei, Zhang, Huang, and
  Wipf]{yang2021graph}
Yongyi Yang, Tang Liu, Yangkun Wang, Jinjing Zhou, Quan Gan, Zhewei Wei, Zheng
  Zhang, Zengfeng Huang, and David Wipf.
\newblock Graph neural networks inspired by classical iterative algorithms.
\newblock In \emph{International Conference on Machine Learning}, 2021.

\bibitem[Yang et~al.(2016)Yang, Cohen, and Salakhudinov]{yang2016revisiting}
Zhilin Yang, William Cohen, and Ruslan Salakhudinov.
\newblock Revisiting semi-supervised learning with graph embeddings.
\newblock In \emph{International Conference on Machine Learning}, 2016.

\bibitem[Ying et~al.(2018)Ying, He, Chen, Eksombatchai, Hamilton, and
  Leskovec]{ying2018graph}
Rex Ying, Ruining He, Kaifeng Chen, Pong Eksombatchai, William~L Hamilton, and
  Jure Leskovec.
\newblock Graph convolutional neural networks for web-scale recommender
  systems.
\newblock In \emph{Proceedings of the 24th ACM SIGKDD International Conference
  on Knowledge Discovery \& Data Mining}, 2018.

\bibitem[Zhang et~al.(2022)Zhang, Liu, Sun, and Shah]{zhang2022graphless}
Shichang Zhang, Yozen Liu, Yizhou Sun, and Neil Shah.
\newblock Graph-less neural networks: Teaching old {MLP}s new tricks via
  distillation.
\newblock In \emph{International Conference on Learning Representations}, 2022.

\bibitem[Zhou et~al.(2021)Zhou, Li, Zheng, Wang, and Gao]{zhou2021graph}
Bingxin Zhou, Ruikun Li, Xuebin Zheng, Yu~Guang Wang, and Junbin Gao.
\newblock Graph denoising with framelet regularizer.
\newblock \emph{arXiv preprint arXiv:2111.03264}, 2021.

\bibitem[Zhu \& Koniusz(2021)Zhu and Koniusz]{zhu2021simple}
Hao Zhu and Piotr Koniusz.
\newblock Simple spectral graph convolution.
\newblock In \emph{International Conference on Learning Representations}, 2021.

\bibitem[Z{\"u}gner \& G{\"u}nnemann(2019{\natexlab{a}})Z{\"u}gner and
  G{\"u}nnemann]{zugner2019adversarial}
Daniel Z{\"u}gner and Stephan G{\"u}nnemann.
\newblock Adversarial attacks on graph neural networks via meta learning.
\newblock In \emph{International Conference on Learning Representations},
  2019{\natexlab{a}}.

\bibitem[Z{\"u}gner \& G{\"u}nnemann(2019{\natexlab{b}})Z{\"u}gner and
  G{\"u}nnemann]{zugner2019robustgcn}
Daniel Z{\"u}gner and Stephan G{\"u}nnemann.
\newblock Certifiable robustness and robust training for graph convolutional
  networks.
\newblock In \emph{Proceedings of the 25th ACM SIGKDD International Conference
  on Knowledge Discovery \& Data Mining}, 2019{\natexlab{b}}.

\bibitem[Z{\"u}gner et~al.(2018)Z{\"u}gner, Akbarnejad, and
  G{\"u}nnemann]{zugner2018adversarial}
Daniel Z{\"u}gner, Amir Akbarnejad, and Stephan G{\"u}nnemann.
\newblock Adversarial attacks on neural networks for graph data.
\newblock In \emph{Proceedings of the 24th ACM SIGKDD International Conference
  on Knowledge Discovery \& Data Mining}, 2018.

\end{thebibliography}
\bibliographystyle{iclr2023_conference}

\newpage

\appendix
\section{The details on how to solve the inner maximization problem in Sec.~\ref{Minimization of the Optimization Problem}}
\label{appendix:how to solve the optimization problem}
Different from the non-concave inner maximization problem in the adversarial attack, our inner maximization problem is indeed a convex optimization problem. Hence, we do not need to add random perturbations on the graph structure at each training epoch and can find the largest perturbation which maximizes the inner adversarial loss function. Denote the perturbation as $\bm{\delta}$, and $\mathbf{L}'=\widetilde{\mathbf{L}} + \bm{\delta}$. We can rewrite the inner maximization problem as
\begin{equation}
    \max_{\mathbf{L}^{\prime}} \operatorname{tr}\left(\mathbf{F}^{\top} \mathbf{L}^{\prime} \mathbf{F}\right) =\langle\widetilde{\mathbf{L}}, \mathbf{F}^{\top}\mathbf{F}\rangle + \max_{\bm{\delta}}\langle\bm{\delta}, \mathbf{F}^{\top}\mathbf{F}\rangle \quad \operatorname{s.t.} \quad \left\|\bm{\delta}\right\|_{F} \leq \varepsilon.
\end{equation}
We denote $h(\bm{\delta})=\langle\bm{\delta}, \mathbf{F}^{\top}\mathbf{F}\rangle$. Obviously, $h(\bm{\delta})$ reaches the largest value when $\bm{\delta}$ has the same direction with the gradient of $h(\bm{\delta})$, e.g. $\bm{\delta}=\varepsilon\nabla h(\bm{\delta})=\frac{\varepsilon\mathbf{F}\mathbf{F}^{\top}}{\left\|\mathbf{F}\mathbf{F}^{\top}\right\|_{F}}$, which is illustrated in Fig.~\ref{figure:adv inner problem}.
\begin{figure}[htbp]
 \begin{center}
\includegraphics[width=.45\columnwidth]{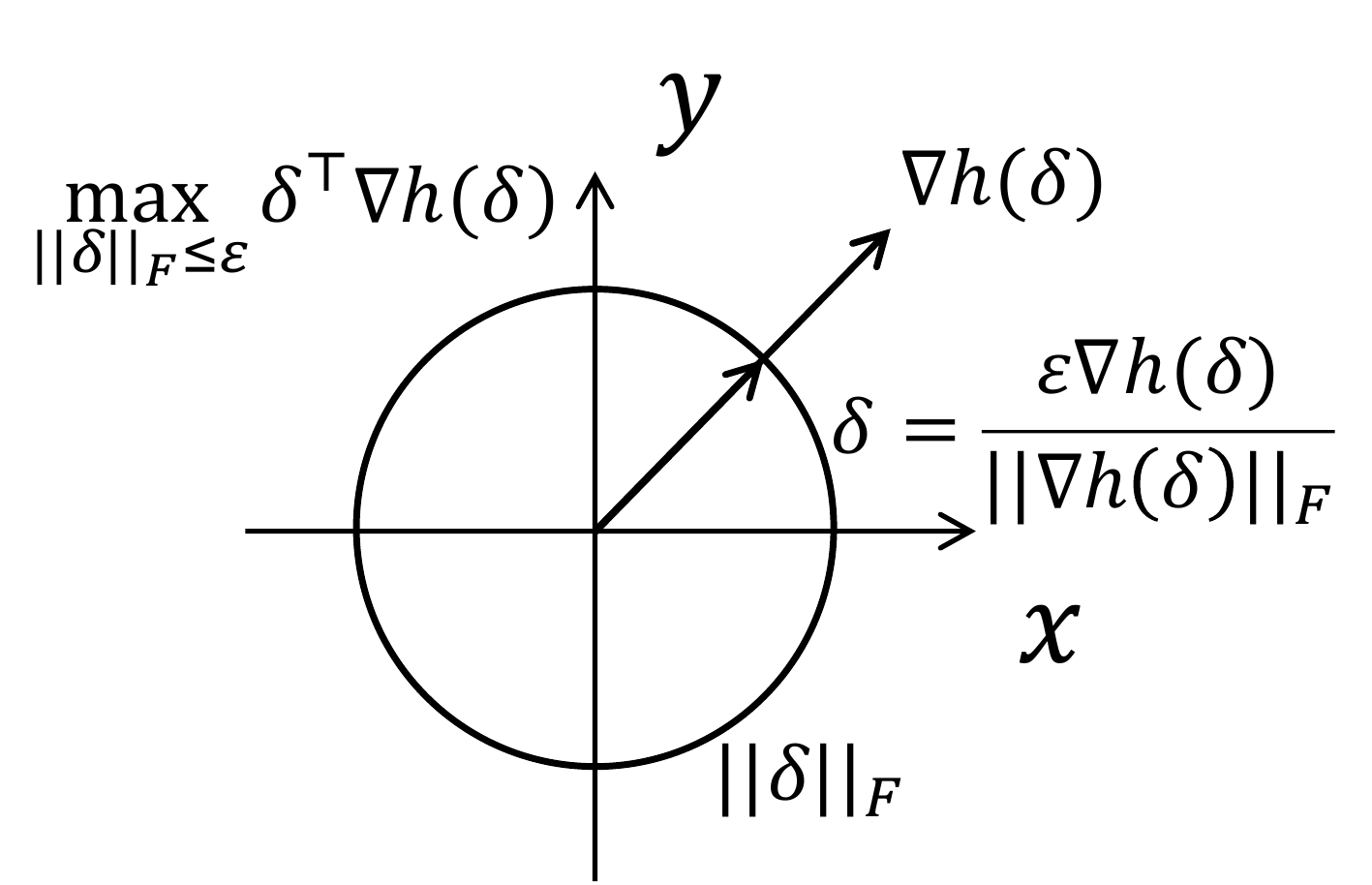}
 \end{center}
\caption{The illustration of the inner maximization problem. The adversarial loss function reaches the largest value when the direction of $\bm{\delta}$ is the same with $\nabla h(\bm{\delta})$}
\label{figure:adv inner problem}
\end{figure}

\section{Additional Details on the Neumann Series}
\label{appendix:neumann series}
We provide additional details and derivations on how to obtain the Neumann Series which leads to our Neumann Graph Convolution (NGC) method. Before we derive the Neumann Series, we first introduce the following lemmas which are crucial to the derivation of the Neumann Series.

\begin{lemma} \rm {\textbf{(Gelfand formula)~\citep{bhatia2013matrix}} } 
\label{Gelfand formula}
\emph{Given any matrix norm $\||\cdot|\|$, then $\rho(\mathbf{A})=\lim\limits_{k \rightarrow \infty}\||\mathbf{A}^{k}|\|^{1 / k}=\inf\limits_{k \geq 1}\||\mathbf{A}^k|\|^{1/k}\leq \||\mathbf{A}|\|$}.
\end{lemma}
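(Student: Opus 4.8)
The plan is to establish the two inequalities $\rho(\mathbf{A})\le\inf_{k\ge1}\||\mathbf{A}^k|\|^{1/k}$ and $\limsup_{k\to\infty}\||\mathbf{A}^k|\|^{1/k}\le\rho(\mathbf{A})$, and then observe that together they squeeze the sequence $\||\mathbf{A}^k|\|^{1/k}$ so that its limit and its infimum both coincide with $\rho(\mathbf{A})$; the final bound $\rho(\mathbf{A})\le\||\mathbf{A}|\|$ is just the case $k=1$ of the first inequality. Throughout, ``matrix norm'' means a submultiplicative norm on $n\times n$ matrices.

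First I would prove the elementary fact that $\rho(\mathbf{B})\le\||\mathbf{B}|\|$ for every square matrix $\mathbf{B}$. If $\mathbf{B}\mathbf{v}=\mu\mathbf{v}$ with $\mathbf{v}\ne 0$, let $\mathbf{V}$ be the matrix whose columns are all copies of $\mathbf{v}$, so $\mathbf{B}\mathbf{V}=\mu\mathbf{V}$ and $\||\mathbf{V}|\|\ne0$; submultiplicativity gives $|\mu|\,\||\mathbf{V}|\|=\||\mathbf{B}\mathbf{V}|\|\le\||\mathbf{B}|\|\,\||\mathbf{V}|\|$, hence $|\mu|\le\||\mathbf{B}|\|$, and maximizing over eigenvalues yields $\rho(\mathbf{B})\le\||\mathbf{B}|\|$. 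Applying this to $\mathbf{B}=\mathbf{A}^k$ and using that the eigenvalues of $\mathbf{A}^k$ are the $k$-th powers of those of $\mathbf{A}$ (so $\rho(\mathbf{A}^k)=\rho(\mathbf{A})^k$), I get $\rho(\mathbf{A})^k\le\||\mathbf{A}^k|\|$, i.e. $\rho(\mathbf{A})\le\||\mathbf{A}^k|\|^{1/k}$ for every $k\ge1$; taking the infimum over $k$ gives $\rho(\mathbf{A})\le\inf_{k\ge1}\||\mathbf{A}^k|\|^{1/k}$.

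For the reverse inequality, fix $\eps>0$ and set $\mathbf{A}_\eps=\mathbf{A}/(\rho(\mathbf{A})+\eps)$, which has $\rho(\mathbf{A}_\eps)<1$. The key input is that a matrix with spectral radius strictly less than $1$ has powers tending to zero, $\mathbf{A}_\eps^k\to\mathbf{0}$; this I would establish via the Jordan canonical form, since a Jordan block with eigenvalue $\mu$, $|\mu|<1$, has $k$-th power whose entries are of the form $\binom{k}{j}\mu^{k-j}\to0$. As all norms on the finite-dimensional space of matrices are equivalent, $\mathbf{A}_\eps^k\to\mathbf{0}$ forces $\||\mathbf{A}_\eps^k|\|\to0$, so there is $N$ with $\||\mathbf{A}_\eps^k|\|\le1$ for $k\ge N$, that is $\||\mathbf{A}^k|\|\le(\rho(\mathbf{A})+\eps)^k$, i.e. $\||\mathbf{A}^k|\|^{1/k}\le\rho(\mathbf{A})+\eps$. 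Hence $\limsup_{k\to\infty}\||\mathbf{A}^k|\|^{1/k}\le\rho(\mathbf{A})+\eps$, and letting $\eps\downarrow0$ yields $\limsup_{k\to\infty}\||\mathbf{A}^k|\|^{1/k}\le\rho(\mathbf{A})$. Combining the two parts, $\rho(\mathbf{A})\le\inf_k\||\mathbf{A}^k|\|^{1/k}\le\liminf_k\||\mathbf{A}^k|\|^{1/k}\le\limsup_k\||\mathbf{A}^k|\|^{1/k}\le\rho(\mathbf{A})$, so every inequality is an equality, the limit exists, and it equals the infimum and $\rho(\mathbf{A})$.

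The main obstacle is the implication $\rho(\mathbf{A}_\eps)<1\Rightarrow\mathbf{A}_\eps^k\to\mathbf{0}$: it genuinely requires structural information about $\mathbf{A}_\eps$ (Jordan form, or an equivalent fact such as the existence of an operator norm with $\||\mathbf{A}_\eps|\|<1$), together with the easy but worth-stating equivalence of norms on a finite-dimensional space to pass from entrywise convergence to convergence in $\||\cdot|\|$. One should also dispatch the boundary case $\rho(\mathbf{A})=0$: then $\mathbf{A}$ is nilpotent, $\mathbf{A}^k=\mathbf{0}$ for all large $k$, so both the limit and the infimum of $\||\mathbf{A}^k|\|^{1/k}$ equal $0=\rho(\mathbf{A})$, and the $\mathbf{A}_\eps$ argument above in fact still applies verbatim.
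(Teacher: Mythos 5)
Your proposal is correct, but note that the paper itself does not prove this lemma at all: it is quoted verbatim from \citet{bhatia2013matrix} as a known result, and the text only remarks on what it says before using it in Lemma~\ref{convergence of neumann series} and Lemma~\ref{Neumann derivation}. What you have written is the standard textbook proof of Gelfand's spectral radius formula, and it is sound: the eigenvector-block trick (taking $\mathbf{V}$ with all columns equal to an eigenvector) correctly yields $\rho(\mathbf{B})\leq \||\mathbf{B}|\|$ for any submultiplicative norm, the spectral mapping identity $\rho(\mathbf{A}^k)=\rho(\mathbf{A})^k$ gives the lower bound $\rho(\mathbf{A})\leq\inf_{k\geq1}\||\mathbf{A}^k|\|^{1/k}$, and the rescaling $\mathbf{A}_{\varepsilon}=\mathbf{A}/(\rho(\mathbf{A})+\varepsilon)$ combined with the Jordan-form argument that powers of a matrix with spectral radius below one vanish (plus norm equivalence in finite dimensions) gives the matching $\limsup$ bound; the squeeze then identifies the limit with the infimum and with $\rho(\mathbf{A})$, and the $k=1$ case gives the final inequality. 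You also correctly flag the only real technical content (the implication $\rho<1\Rightarrow$ powers tend to zero) and dispose of the nilpotent case. The one point worth making explicit if you write this out in full is that the eigenvector $\mathbf{v}$ may be complex even when $\mathbf{A}$ is real, so the norm should be taken (or extended) on $\mathbb{C}^{n\times n}$, which is the setting the paper implicitly adopts (compare the hypothesis $\mathbf{A}\in\mathbb{C}^{n\times n}$ in Lemma~\ref{convergence of neumann series}); this is a routine fix and not a gap. Relative to the paper, your contribution is simply that you supply the self-contained proof the paper delegates to the reference.
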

Lemma~\ref{Gelfand formula} describes the relationship between the spectral radius of a matrix and its matrix norm, $i.e.$ $\rho(\mathbf{A})=\lim\limits_{k \rightarrow \infty}\||\mathbf{A}^{k}|\|^{1 / k}$.

\begin{lemma}
\label{convergence of neumann series}
Let $\mathbf{A} \in \mathbb{C}^{n \times n}$, the spectral radius $\rho(\mathbf{A})=\max (\operatorname{abs}(\operatorname{spec}(\mathbf{A})))$, if $\rho(\mathbf{A})<1$, then $\sum_{k=0}^{\infty} \mathbf{A}^{k}$ converges to $(\mathbf{I}-\mathbf{A})^{-1}$.
\end{lemma}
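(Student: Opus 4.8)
\textbf{Proof Plan for Lemma~\ref{convergence of neumann series}.} The plan is to establish the convergence of the Neumann series $\sum_{k=0}^\infty \mathbf{A}^k$ under the hypothesis $\rho(\mathbf{A}) < 1$, and to identify the limit as $(\mathbf{I}-\mathbf{A})^{-1}$. This is a classical result, so the approach follows the standard route: first show the partial sums form a Cauchy (hence convergent) sequence in a complete normed space of matrices, then use an algebraic telescoping identity to pin down the limit.

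\textbf{Step 1: Set up a submultiplicative norm and exploit Gelfand's formula.} First I would fix any submultiplicative matrix norm $\||\cdot|\|$ on $\mathbb{C}^{n\times n}$ (for concreteness the operator norm induced by the Euclidean vector norm). By Lemma~\ref{Gelfand formula}, $\rho(\mathbf{A}) = \lim_{k\to\infty}\||\mathbf{A}^k|\|^{1/k}$. Since $\rho(\mathbf{A}) < 1$, pick any $r$ with $\rho(\mathbf{A}) < r < 1$. Then there exists $N$ such that $\||\mathbf{A}^k|\|^{1/k} \le r$ for all $k \ge N$, i.e. $\||\mathbf{A}^k|\| \le r^k$ for all $k \ge N$. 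This is the crucial quantitative consequence of the spectral-radius hypothesis; without Gelfand's formula one would only have the weaker $\||\mathbf{A}^k|\| \le \||\mathbf{A}|\|^k$, which is not enough when $\||\mathbf{A}|\| \ge 1$.

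\textbf{Step 2: Cauchy sequence of partial sums.} Let $\mathbf{S}_m = \sum_{k=0}^m \mathbf{A}^k$. For $p > q \ge N$ I would estimate
\begin{equation}
\||\mathbf{S}_p - \mathbf{S}_q|\| = \Bigl\||\sum_{k=q+1}^p \mathbf{A}^k|\Bigr\| \le \sum_{k=q+1}^p \||\mathbf{A}^k|\| \le \sum_{k=q+1}^p r^k \le \frac{r^{q+1}}{1-r}.
\end{equation}
Since $r < 1$, the right side tends to $0$ as $q \to \infty$, so $(\mathbf{S}_m)$ is Cauchy. Because $\mathbb{C}^{n\times n}$ with any norm is a finite-dimensional (hence complete) normed space, $\mathbf{S}_m$ converges to some matrix $\mathbf{S}$.

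\textbf{Step 3: Identify the limit.} For the telescoping identity, note $(\mathbf{I}-\mathbf{A})\mathbf{S}_m = \mathbf{S}_m(\mathbf{I}-\mathbf{A}) = \mathbf{I} - \mathbf{A}^{m+1}$. Since $\rho(\mathbf{A})<1$ forces $\||\mathbf{A}^{m+1}|\| \le r^{m+1} \to 0$, we have $\mathbf{A}^{m+1} \to \mathbf{0}$, so letting $m\to\infty$ and using continuity of matrix multiplication gives $(\mathbf{I}-\mathbf{A})\mathbf{S} = \mathbf{S}(\mathbf{I}-\mathbf{A}) = \mathbf{I}$. Hence $\mathbf{I}-\mathbf{A}$ is invertible with inverse $\mathbf{S} = \sum_{k=0}^\infty \mathbf{A}^k$, as claimed. (As a side remark, invertibility of $\mathbf{I}-\mathbf{A}$ is also immediate from the fact that $1$ is not an eigenvalue of $\mathbf{A}$ when $\rho(\mathbf{A})<1$, but deriving it from the series is cleaner here.)

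\textbf{Main obstacle.} The only genuinely non-routine point is Step 1: the passage from "$\rho(\mathbf{A}) < 1$'' to a geometric decay bound $\||\mathbf{A}^k|\| \le r^k$. This is exactly what Gelfand's formula (Lemma~\ref{Gelfand formula}) supplies, so once that lemma is in hand the remainder is standard analysis of a geometric series in a Banach space. If one wanted to avoid Gelfand's formula entirely, an alternative is to put $\mathbf{A}$ in Jordan form and bound powers of Jordan blocks directly, but that is more computational and the Gelfand route is the natural one given the lemmas already established.
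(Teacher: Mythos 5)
Your proof is correct and follows essentially the same route as the paper's: invoke Gelfand's formula (Lemma~\ref{Gelfand formula}) to force $\mathbf{A}^{k}\to\mathbf{0}$, then use the telescoping identity $(\mathbf{I}-\mathbf{A})\mathbf{S}_m=\mathbf{I}-\mathbf{A}^{m+1}$ to identify the limit as $(\mathbf{I}-\mathbf{A})^{-1}$. If anything, your version is slightly more careful than the paper's, since you explicitly establish convergence of the partial sums via a Cauchy/geometric-series estimate (the paper instead proves invertibility of $\mathbf{I}-\mathbf{A}$ from the eigenvalue condition and passes to the limit without justifying existence of the limit separately), but this is a refinement of the same argument rather than a different approach.
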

\begin{proof}
We first prove that $(\mathbf{I}-\mathbf{A})^{-1}$ exists as follows: Based on the definition of eigenvalues of $\mathbf{A}$, we have $|\lambda \mathbf{I} - \mathbf{A}| = 0$ and the solution is the eigenvalue of $\mathbf{A}$. Since $\rho(\mathbf{A}) < 1$, if $\lambda \geq 1$, then $|\lambda \mathbf{I} - \mathbf{A}| \neq 0$, so $|\mathbf{I} - \mathbf{A}| \neq 0$, which means $(\mathbf{I}-\mathbf{A})^{-1}$ exists.
 
Since $\rho(\mathbf{A}) < 1$ and by Lemma~\ref{Gelfand formula}, we have $\lim\limits_{k \rightarrow \infty}\||\mathbf{A}^{k}|\|=\rho(\mathbf{A})^k=0$. Let $\mathbf{S}_k$ = $\mathbf{A}^0 + \mathbf{A}^1 + \cdots + \mathbf{A}^k$, then we have
\begin{equation*}
    \begin{split}
        \lim_{k \rightarrow \infty}(\mathbf{S}^k-\mathbf{A}\mathbf{S}^k) &= \lim\limits_{k \rightarrow \infty}(\mathbf{I}-\mathbf{A})\mathbf{S}^k \\
        &=\lim_{k \rightarrow \infty}(\mathbf{I}-\mathbf{A}^{k+1}) \\
        &= \mathbf{I}
    \end{split}
\end{equation*}
Since $(\mathbf{I}-\mathbf{A})^{-1}$ exists, so we have $(\mathbf{I}-\mathbf{A})\lim\limits_{k \rightarrow \infty}\mathbf{S}^k=\mathbf{I}$, and $\lim\limits_{k \rightarrow \infty}\mathbf{S}^k=(\mathbf{I}-\mathbf{A})^{-1}$, which finishes the proof.
\end{proof}

Lemma~\ref{convergence of neumann series} describes the convergence of Neumann Series and the condition to get the convergence.
\begin{lemma} \rm {\textbf{(Gerschgorin Disc)~\citep{bhatia2013matrix}}} 
\label{Gerschgorin Disc}
\emph{Let $\mathbf{A} \in \mathbb{C}^{n \times n}$, with entries $a_{ij}$. For any eigenvalue $\lambda$, there exits $i$ and the corresponding Gerschgorin disc $D\left(a_{i i}, R_{i}\right) \subseteq \mathbb{C}$ such that $\lambda$ lies in this disc, i.e.}
\begin{equation*}
    |\lambda-a_{ii}| \leq \sum_{j\neq i}^{n}|a_{ij}|.
\end{equation*}
\end{lemma}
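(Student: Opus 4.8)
The plan is to prove Gerschgorin's theorem by the standard eigenvector-coordinate argument: localize an arbitrary eigenvalue by inspecting the row of the eigenvalue equation in which the eigenvector attains its largest component. First I would fix an arbitrary eigenvalue $\lambda$ of $\mathbf{A}$ together with an associated nonzero eigenvector $\mathbf{v} = (v_1,\dots,v_n)^{\top}$, so that $\mathbf{A}\mathbf{v} = \lambda \mathbf{v}$. Since $\mathbf{v} \neq \mathbf{0}$, I may choose an index $i$ with $|v_i| = \max_{1 \le k \le n} |v_k| > 0$; this selection of the dominant coordinate is the crux of the whole argument.

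Next I would write out the $i$-th coordinate of the identity $\mathbf{A}\mathbf{v} = \lambda \mathbf{v}$, namely $\sum_{j=1}^{n} a_{ij} v_j = \lambda v_i$, and move the diagonal contribution to the right-hand side to obtain $\sum_{j \neq i} a_{ij} v_j = (\lambda - a_{ii}) v_i$. Taking absolute values and applying the triangle inequality gives $|\lambda - a_{ii}|\,|v_i| \le \sum_{j \neq i} |a_{ij}|\,|v_j|$. The maximality $|v_j| \le |v_i|$ for every $j$ then lets me bound the right-hand side by $\left(\sum_{j\neq i}|a_{ij}|\right)|v_i|$.

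Finally, dividing through by $|v_i|$, which is legitimate precisely because $|v_i| > 0$, yields $|\lambda - a_{ii}| \le \sum_{j \neq i} |a_{ij}| = R_i$, i.e. $\lambda$ lies in the Gerschgorin disc $D(a_{ii}, R_i)$. There is no genuine analytic obstacle here; the only point requiring care is the choice of the index $i$ as the one where the eigenvector is largest in modulus, since this is what simultaneously guarantees that the coordinate is nonzero (so the division is valid) and makes each ratio $|v_j|/|v_i|$ at most $1$ (so the triangle inequality closes). The statement asserts only the existence of some such $i$, which is exactly the index produced by this maximal-coordinate selection.
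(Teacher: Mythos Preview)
Your proof is correct and is exactly the classical maximal-coordinate argument for Gerschgorin's theorem. The paper itself does not supply a proof of this lemma at all; it is stated with a citation to \citet{bhatia2013matrix} and then invoked as a tool in the proof of the subsequent lemma on the Neumann expansion, so there is nothing to compare against.
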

Lemma~\ref{Gerschgorin Disc} describes the estimated range of eigenvalues. Now we start to derive the Neumann Series expansion of the solution of GSD as follows.

\begin{lemma}
\label{Neumann derivation}
Let $\mathbf{A} \in\{0,1\}^{n \times n}$ be the adjacency matrix of a graph and $\widetilde{\bm{\mathcal{A}}}=\widetilde{\mathbf{D}}^{-\frac{1}{2}} \widetilde{\mathbf{A}} \widetilde{\mathbf{D}}^{-\frac{1}{2}}$ or $\widetilde{\bm{\mathcal{A}}} = \widetilde{\mathbf{D}}^{-1} \widetilde{\mathbf{A}}$, then
\begin{equation*}
    (\mathbf{I}-\frac{\lambda}{\lambda+1}\widetilde{\bm{\mathcal{A}}})^{-1}=\sum_{k=0}^{\infty} \left(\frac{\lambda}{\lambda+1}\widetilde{\bm{\mathcal{A}}}\right)^{k}.
\end{equation*}
\end{lemma}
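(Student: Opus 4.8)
The plan is to reduce the claimed Neumann identity to a spectral‑radius bound and then invoke Lemma~\ref{convergence of neumann series}. Indeed, Lemma~\ref{convergence of neumann series} applied with the matrix $\frac{\lambda}{\lambda+1}\widetilde{\bm{\mathcal{A}}}$ in place of $\mathbf{A}$ immediately yields
\[
    \Bigl(\mathbf{I}-\tfrac{\lambda}{\lambda+1}\widetilde{\bm{\mathcal{A}}}\Bigr)^{-1}=\sum_{k=0}^{\infty}\Bigl(\tfrac{\lambda}{\lambda+1}\widetilde{\bm{\mathcal{A}}}\Bigr)^{k},
\]
provided we can verify the hypothesis $\rho\bigl(\tfrac{\lambda}{\lambda+1}\widetilde{\bm{\mathcal{A}}}\bigr)<1$. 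Since $\rho$ is homogeneous, $\rho\bigl(\tfrac{\lambda}{\lambda+1}\widetilde{\bm{\mathcal{A}}}\bigr)=\tfrac{\lambda}{\lambda+1}\,\rho(\widetilde{\bm{\mathcal{A}}})$, and because $\lambda>0$ is a regularization parameter we have $\tfrac{\lambda}{\lambda+1}\in(0,1)$. Hence it suffices to prove $\rho(\widetilde{\bm{\mathcal{A}}})\le 1$ for each of the two normalizations.

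For the random‑walk normalization $\widetilde{\bm{\mathcal{A}}}=\widetilde{\mathbf{D}}^{-1}\widetilde{\mathbf{A}}$, the idea is to apply the Gerschgorin disc theorem (Lemma~\ref{Gerschgorin Disc}). Writing $\widetilde{\mathbf{A}}=\mathbf{A}+\mathbf{I}$ and $\widetilde{d}_i=\mathbf{D}_{ii}+1$, the entries of $\widetilde{\bm{\mathcal{A}}}$ are nonnegative with diagonal entry $a_{ii}=1/\widetilde{d}_i$ and off‑diagonal row sum $R_i=\sum_{j\ne i}|a_{ij}|=(\widetilde{d}_i-1)/\widetilde{d}_i$; consequently each eigenvalue $\mu$ lies in some disc $D(a_{ii},R_i)$, giving $|\mu|\le a_{ii}+R_i=1$, i.e. $\rho(\widetilde{\bm{\mathcal{A}}})\le 1$. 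For the symmetric normalization $\widetilde{\bm{\mathcal{A}}}=\widetilde{\mathbf{D}}^{-1/2}\widetilde{\mathbf{A}}\widetilde{\mathbf{D}}^{-1/2}$, I would note that it is similar to the random‑walk matrix via conjugation by $\widetilde{\mathbf{D}}^{1/2}$, namely $\widetilde{\mathbf{D}}^{1/2}\bigl(\widetilde{\mathbf{D}}^{-1}\widetilde{\mathbf{A}}\bigr)\widetilde{\mathbf{D}}^{-1/2}=\widetilde{\mathbf{D}}^{-1/2}\widetilde{\mathbf{A}}\widetilde{\mathbf{D}}^{-1/2}$, so the two matrices share the same spectrum and the bound carries over. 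Combining, $\rho\bigl(\tfrac{\lambda}{\lambda+1}\widetilde{\bm{\mathcal{A}}}\bigr)\le\tfrac{\lambda}{\lambda+1}<1$, and the lemma follows.

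The main obstacle is the spectral‑radius estimate itself: in the random‑walk case $\widetilde{\bm{\mathcal{A}}}$ is generally not symmetric, so one cannot simply identify $\rho$ with the operator $2$‑norm and must argue via Gerschgorin (or via the stochasticity of the rows); and in the symmetric case the cleanest route is the similarity transform above rather than a direct norm computation. Everything else is bookkeeping once Lemma~\ref{convergence of neumann series} is in hand.
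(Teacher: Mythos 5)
Your proposal is correct and takes essentially the same route as the paper's own proof: establish $\rho(\widetilde{\bm{\mathcal{A}}})\le 1$ by applying the Gerschgorin disc lemma to $\widetilde{\mathbf{D}}^{-1}\widetilde{\mathbf{A}}$ (whose rows are nonnegative and sum to one), transfer the bound to the symmetric normalization via the similarity $\widetilde{\mathbf{D}}^{1/2}\bigl(\widetilde{\mathbf{D}}^{-1}\widetilde{\mathbf{A}}\bigr)\widetilde{\mathbf{D}}^{-1/2}=\widetilde{\mathbf{D}}^{-1/2}\widetilde{\mathbf{A}}\widetilde{\mathbf{D}}^{-1/2}$, and then invoke Lemma~\ref{convergence of neumann series} with $\rho\bigl(\tfrac{\lambda}{\lambda+1}\widetilde{\bm{\mathcal{A}}}\bigr)\le\tfrac{\lambda}{\lambda+1}<1$. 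No gaps.
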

\begin{proof}
We first prove that $\rho(\widetilde{\bm{\mathcal{A}}})\leq1$ where $\widetilde{\bm{\mathcal{A}}}=\widetilde{\mathbf{D}}^{-\frac{1}{2}} \widetilde{\mathbf{A}} \widetilde{\mathbf{D}}^{-\frac{1}{2}}$.
Let $\lambda$ be the eigenvalue of $\widetilde{\bm{\mathcal{A}}}$, and $\mathbf{v}$ be the corresponding eigenvector. Then we have
\begin{equation*}
\begin{split}
    \left(\widetilde{\mathbf{D}}^{-\frac{1}{2}} \widetilde{\mathbf{A}} \widetilde{\mathbf{D}}^{-\frac{1}{2}}\right)\mathbf{v}=\lambda\mathbf{v} &\Longrightarrow \widetilde{\mathbf{D}}^{-\frac{1}{2}} \left(\widetilde{\mathbf{D}}^{-\frac{1}{2}} \widetilde{\mathbf{A}} \widetilde{\mathbf{D}}^{-\frac{1}{2}}\right)\mathbf{v}=\lambda\widetilde{\mathbf{D}}^{-\frac{1}{2}}\mathbf{v}\\
    &\Longrightarrow \left(\widetilde{\mathbf{D}}^{-1} \widetilde{\mathbf{A}}\right) \widetilde{\mathbf{D}}^{-\frac{1}{2}}\mathbf{v} = \lambda\widetilde{\mathbf{D}}^{-\frac{1}{2}}\mathbf{v},
\end{split}
\end{equation*}
which means $(\lambda, \widetilde{\mathbf{D}}^{-\frac{1}{2}}\mathbf{v})$ is the eigen-pair of $\widetilde{\mathbf{D}}^{-1}\mathbf{A}$. By Lemma~\ref{Gerschgorin Disc}, there exists $i$, such that
\begin{equation*}
\begin{split}
    &\left|\lambda-\left(\widetilde{\mathbf{D}}^{-1}\widetilde{\mathbf{A}}\right)_{ii}\right|\leq \sum_{j\neq i}\left|\left(\widetilde{\mathbf{D}}^{-1}\widetilde{\mathbf{A}}\right)_{ij}\right|\\
    &\Longrightarrow \left(\widetilde{\mathbf{D}}^{-1}\widetilde{\mathbf{A}}\right)_{ii} - \sum_{j\neq i}\left|\left(\widetilde{\mathbf{D}}^{-1}\widetilde{\mathbf{A}}\right)_{ij}\right| \leq 
    \lambda \leq \left(\widetilde{\mathbf{D}}^{-1}\widetilde{\mathbf{A}}\right)_{ii} + \sum_{j\neq i}\left|\left(\widetilde{\mathbf{D}}^{-1}\widetilde{\mathbf{A}}\right)_{ij}\right|.
\end{split}
\end{equation*}
Since $\left(\widetilde{\mathbf{D}}^{-1}\widetilde{\mathbf{A}}\right)_{ij}>0$ and $\sum_{j}\left|\left(\widetilde{\mathbf{D}}^{-1}\widetilde{\mathbf{A}}\right)_{ij}\right|=\sum_{j}\left(\widetilde{\mathbf{D}}^{-1}\widetilde{\mathbf{A}}\right)_{ij}=1$, obviously
\begin{equation*}
    -1<\left(\widetilde{\mathbf{D}}^{-1}\widetilde{\mathbf{A}}\right)_{ii} - \sum_{j\neq i}\left|\left(\widetilde{\mathbf{D}}^{-1}\widetilde{\mathbf{A}}\right)_{ij}\right| \leq 
    \lambda \leq \left(\widetilde{\mathbf{D}}^{-1}\widetilde{\mathbf{A}}\right)_{ii} + \sum_{j\neq i}\left|\left(\widetilde{\mathbf{D}}^{-1}\widetilde{\mathbf{A}}\right)_{ij}\right|=1.
\end{equation*}
So if $\widetilde{\bm{\mathcal{A}}}=\widetilde{\mathbf{D}}^{-\frac{1}{2}} \widetilde{\mathbf{A}} \widetilde{\mathbf{D}}^{-\frac{1}{2}}$, we have $\rho(\widetilde{\bm{\mathcal{A}}})\leq1$. When $\widetilde{\bm{\mathcal{A}}}=\widetilde{\mathbf{D}}^{-1}\widetilde{\mathbf{A}}$, we denote $(\lambda, \mathbf{v})$ as the eigen-pair of $\mathbf{\widetilde{\mathbf{D}}^{-1}\mathbf{A}}$. Similarly, by Lemma~\ref{Gerschgorin Disc}, there exists $i$, such that
\begin{equation*}
\begin{split}
    &\left|\lambda-\left(\widetilde{\mathbf{D}}^{-1}\widetilde{\mathbf{A}}\right)_{ii}\right|\leq \sum_{j\neq i}\left|\left(\widetilde{\mathbf{D}}^{-1}\widetilde{\mathbf{A}}\right)_{ij}\right|\\
    &\Longrightarrow \left(\widetilde{\mathbf{D}}^{-1}\widetilde{\mathbf{A}}\right)_{ii} - \sum_{j\neq i}\left|\left(\widetilde{\mathbf{D}}^{-1}\widetilde{\mathbf{A}}\right)_{ij}\right| \leq 
    \lambda \leq \left(\widetilde{\mathbf{D}}^{-1}\widetilde{\mathbf{A}}\right)_{ii} + \sum_{j\neq i}\left|\left(\widetilde{\mathbf{D}}^{-1}\widetilde{\mathbf{A}}\right)_{ij}\right|.
\end{split}
\end{equation*}
Obviously, we can get the same conclusion for $\widetilde{\bm{\mathcal{A}}}=\widetilde{\mathbf{D}}^{-1}\widetilde{\mathbf{A}}$. So it is true for $\rho\left(\frac{\lambda}{\lambda+1}\widetilde{\bm{\mathcal{A}}}\right) \leq \frac{\lambda}{\lambda+1}<1$ By Lemma~\ref{convergence of neumann series}, we get the result $(\mathbf{I}-\frac{\lambda}{\lambda+1}\widetilde{\bm{\mathcal{A}}})^{-1}=\sum_{k=0}^{\infty} \left(\frac{\lambda}{\lambda+1}\widetilde{\bm{\mathcal{A}}}\right)^{k}$, which finishes the proof.
\end{proof}

By Lemma~\ref{Neumann derivation}, we approximate the inverse matrix $(\mathbf{I}+\lambda \tilde{\mathbf{L}})^{-1}$ up to $S$-th order with
\begin{equation*}
    \left(\mathbf{I}+\lambda \widetilde{\mathbf{L}}\right)^{-1} =\frac{1}{\lambda+1}\left(\mathbf{I}-\frac{\lambda}{\lambda+1}\widetilde{\bm{\mathcal{A}}}\right)^{-1}\approx\frac{1}{\lambda+1}\sum_{s=0}^{S} \left(\frac{\lambda}{\lambda+1}\widetilde{\bm{\mathcal{A}}}\right)^{s}.
\end{equation*}

\section{The Row Summation of the Neumann Series}
\label{appendix:row sum}
We provide the derivations of the row sum of $\widetilde{\bm{\mathcal{A}}}_{S}$ in this section. Before we derive the row summation of $\widetilde{\bm{\mathcal{A}}}_{S}$, we first derive the row summation of $\widetilde{\bm{\mathcal{A}}}^{k}$.
\begin{lemma}
\label{transition matrix}
Consider a probability matrix $\mathbf{P} \in \mathbb{R}^{n \times n}$, where $\mathbf{P}_{ij}\geq0$. Besides, for all $i$, we have $\sum_{j=1}^n\mathbf{P}_{ij}=1$. Then for any $s \in \mathbb{Z}_{+}$, we have $\sum_{j=1}^n\mathbf{P}^s_{ij}=1$,
\end{lemma}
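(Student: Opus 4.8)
The plan is to prove this by induction on $s$, using the fact that $\mathbf{P}$ being a row-stochastic (nonnegative, row sums equal to one) matrix is preserved under matrix multiplication. For the base case $s=1$ the claim is exactly the hypothesis $\sum_{j=1}^n \mathbf{P}_{ij} = 1$ for all $i$. For the inductive step, suppose $\sum_{j=1}^n (\mathbf{P}^s)_{ij} = 1$ for all $i$; I would write $(\mathbf{P}^{s+1})_{ij} = \sum_{k=1}^n (\mathbf{P}^s)_{ik}\,\mathbf{P}_{kj}$ and then compute
\begin{equation*}
\sum_{j=1}^n (\mathbf{P}^{s+1})_{ij} = \sum_{j=1}^n \sum_{k=1}^n (\mathbf{P}^s)_{ik}\,\mathbf{P}_{kj} = \sum_{k=1}^n (\mathbf{P}^s)_{ik} \sum_{j=1}^n \mathbf{P}_{kj} = \sum_{k=1}^n (\mathbf{P}^s)_{ik} \cdot 1 = 1,
\end{equation*}
where the swap of summation order is justified by finiteness (or nonnegativity) of all terms, and the last two equalities use the hypothesis on $\mathbf{P}$ and then the inductive hypothesis.

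I would also briefly note that nonnegativity is preserved — $(\mathbf{P}^{s+1})_{ij} = \sum_k (\mathbf{P}^s)_{ik}\mathbf{P}_{kj} \geq 0$ since it is a sum of products of nonnegative numbers — so that the statement "$\mathbf{P}^s$ is again a probability matrix" makes sense, though strictly only the row-sum identity is asserted in the lemma. This nonnegativity is what lets me freely reorder the double sum without worrying about convergence issues, although here everything is finite anyway.

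The argument is entirely routine; there is no real obstacle. The only mild point of care is the interchange of the two finite sums, which is immediate, and making sure the indexing in the definition of matrix power $(\mathbf{P}^{s+1})_{ij} = \sum_k (\mathbf{P}^s)_{ik}(\mathbf{P})_{kj}$ is written consistently. This lemma will then be applied with $\mathbf{P} = \widetilde{\bm{\mathcal{A}}} = \widetilde{\mathbf{D}}^{-1}\widetilde{\mathbf{A}}$ (which is row-stochastic), and, by linearity of the row-sum functional, extended to the convex-combination operator $\widetilde{\bm{\mathcal{A}}}_S = \frac{1}{\lambda+1}\sum_{s=0}^S \bigl(\frac{\lambda}{\lambda+1}\bigr)^s \widetilde{\bm{\mathcal{A}}}^s$ in the subsequent derivation of the row summation of the Neumann series.
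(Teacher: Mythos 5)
Your proposal is correct and follows essentially the same route as the paper: induction on the power, with the inductive step writing $(\mathbf{P}^{s+1})_{ij}=\sum_k (\mathbf{P}^s)_{ik}\mathbf{P}_{kj}$, exchanging the two finite sums, and applying the row-sum hypothesis followed by the inductive hypothesis. The extra remark on preserved nonnegativity and the intended application to $\widetilde{\bm{\mathcal{A}}}_{S}$ match how the paper uses the lemma.
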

\begin{proof}
We give a proof by induction on $k$.\\
\textbf{Base case:} When $k=1$, the case is true.\\
\textbf{Inductive step:} Assume the induction hypothesis that for a particular $k$, the single case n = k holds, meaning $\mathbf{P}^k$ is true:
\begin{equation*}
    \forall i, \sum_{j=1}^n \mathbf{P}_{ij}^k =1.
\end{equation*}
As $\mathbf{P}^{k+1}=\mathbf{P}^{k}\mathbf{P}$, so we have
\begin{equation*}
    \sum_{j=1}^n\mathbf{P}^{k+1}_{ij}=\sum_{j=1}^n\sum_{k=1}^n\mathbf{P}^{k}_{ik}\mathbf{P}_{kj} = \sum_{k=1}^n\sum_{j=1}^n\mathbf{P}^{k}_{ik}\mathbf{P}_{kj} = \sum_{k=1}^n\mathbf{P}^{k}_{ik}\left(\sum_{j=1}^n\mathbf{P}_{kj}\right) = \sum_{k=1}^n\mathbf{P}^{k}_{ik} = 1,
\end{equation*}
which finishes the proof.
\end{proof}

Lemma~\ref{transition matrix} describes the row summation of $\widetilde{\bm{\mathcal{A}}}^{k}$ is 1. Now we can obtain the row summation for $\widetilde{\bm{\mathcal{A}}}_{S}$.

Then for any $i$, we have
\begin{equation}
\begin{split}
    \sum_{j=1}^{n}\left[\widetilde{\bm{\mathcal{A}}}_{S}\right]_{ij} 
    &=\frac{1}{\lambda+1}\sum_{s=0}^{S} \left(\frac{\lambda}{\lambda+1}\left[\widetilde{\bm{\mathcal{A}}}\right]_{ij}\right)^{s}\\
    &=\frac{1}{\lambda+1}\sum_{s=0}^{S}\left(\frac{\lambda}{\lambda+1}\right)^{s} \\
    &=1-\left(\frac{\lambda}{\lambda+1}\right)^{S+1}.
\end{split}
\end{equation}

\section{Proof of Lemma 1}
\label{appendix:upper bound with hoeffding}
We provide the details of proof of Lemma 1. We first introduce the General Hoeffding Inequality~\citep{hoeffding1994probability}, which is essential for bounding $\left\|\widetilde{\bm{\mathcal{A}}}_S\bm{\eta}\right\|_{F}^{2}$.

\begin{lemma}
\label{hoeffding}
\rm{\textbf{(General Hoeffding Inequality~\citep{hoeffding1994probability})}} \emph{Suppose that the variables $X_{1}, \cdots, X_{n}$ are independent, and $X_i$ has mean $\mu_{i}$ and sub-Gaussian parameter $\sigma_{i}$. Then for all $t\geq0$, we have}
\begin{equation}
 \mathbb{P}\left[\sum_{i=1}^{n}\left(X_{i}-\mu_{i}\right) \geq t\right] \leq \exp \left\{-\frac{t^{2}}{2 \sum_{i=1}^{n} \sigma_{i}^{2}}\right\}.
\end{equation}
\end{lemma}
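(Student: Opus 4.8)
The plan is to prove the bound by the classical Chernoff (exponential-moment) method, which converts the tail probability into a moment generating function estimate that the sub-Gaussian hypothesis controls directly. Writing $S = \sum_{i=1}^{n}(X_i - \mu_i)$, the starting observation is that for any $\lambda > 0$ the event $\{S \geq t\}$ coincides with $\{e^{\lambda S} \geq e^{\lambda t}\}$, so applying Markov's inequality to the nonnegative random variable $e^{\lambda S}$ gives
\begin{equation*}
\mathbb{P}\left[S \geq t\right] \leq e^{-\lambda t}\,\mathbb{E}\left[e^{\lambda S}\right].
\end{equation*}

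First I would factor the moment generating function using independence: since $X_1,\dots,X_n$ (hence the centered variables $X_i - \mu_i$) are independent, $\mathbb{E}[e^{\lambda S}] = \prod_{i=1}^{n}\mathbb{E}[e^{\lambda(X_i - \mu_i)}]$. Then I would invoke the defining property of the sub-Gaussian parameter $\sigma_i$ for each factor — the moment bound $\mathbb{E}[e^{\lambda(X_i - \mu_i)}] \leq e^{\sigma_i^2 \lambda^2 / 2}$, which is exactly the form stated in Assumption~\ref{sub-Gaussian assumption} applied with parameter $\sigma_i$ — to obtain, for every $\lambda > 0$,
\begin{equation*}
\mathbb{P}\left[S \geq t\right] \leq \exp\left(-\lambda t + \frac{\lambda^2}{2}\sum_{i=1}^{n}\sigma_i^2\right).
\end{equation*}

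Finally I would optimize the free parameter $\lambda$. The exponent is a convex quadratic in $\lambda$, minimized at $\lambda^\star = t / \sum_{i=1}^{n}\sigma_i^2$, which is positive whenever $t \geq 0$ and the $\sigma_i$ are not all zero; substituting $\lambda^\star$ collapses the exponent to $-t^2 / \left(2\sum_{i=1}^{n}\sigma_i^2\right)$, yielding the claimed inequality. The degenerate case $\sum_{i=1}^{n}\sigma_i^2 = 0$ (all variables deterministic) together with $t = 0$ is handled separately by direct inspection. There is no genuine obstacle here: the only nontrivial ingredient is the sub-Gaussian moment generating function bound, which is part of the hypothesis, and the remaining work is the routine tensorize-then-tune argument. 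The single point worth stating carefully is that the optimization over $\lambda$ is precisely what produces the sharp Gaussian-type exponent, rather than the weaker bound obtained for any fixed $\lambda$.
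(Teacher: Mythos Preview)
Your argument is the standard and correct Chernoff-method proof of the sub-Gaussian Hoeffding inequality; every step is sound, including the handling of the degenerate case. The paper itself does not supply a proof of this lemma at all: it is quoted as a known result with a citation to \citet{hoeffding1994probability} and then immediately applied, so there is no paper-proof to compare against beyond noting that your derivation fills in exactly what the citation stands for.
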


Now let's prove Lemma~\ref{upper bound of aggregated noised matirx}.
\begin{proof}[Proof of Lemma~\ref{upper bound of aggregated noised matirx}.]
For any entry $\left[\widetilde{\bm{\mathcal{A}}}_S\bm{\eta}\right]_{ij}=\sum_{p=1}^{n}\left(\widetilde{\bm{\mathcal{A}}}_S\right)_{ip}\bm{\eta}_{pj}$, where $\bm{\eta}_{pj}$ is a sub-Gaussian variable with parameter $\sigma^{2}$. By the General Hoeffding inequality~\ref{hoeffding}, we have
\begin{equation}
\mathbb{P}\left(\left|\left[ \frac{1}{\lambda+1}\sum_{s=0}^{S}\left(\frac{\lambda}{\lambda+1}\widetilde{\bm{\mathcal{A}}}_{S}\right)^{s}\bm{\eta}\right]_{ij}\right|\geq t\right) \leq 2\exp \left\{-\frac{nt^{2}}{2\tau\left(1-\left(\frac{\lambda}{\lambda+1}\right)^{S+1}\right)^2\sigma^2}\right\}.
\end{equation}
where $\tau = \max_i \tau_i$ and $ \tau_i = {n\sum_{j=1}^{n}\left[\widetilde{\bm{\mathcal{A}}}_{S}\right]_{ij}^2}\Bigg/{\left(1-\left(\frac{\lambda}{\lambda+1}\right)^{S+1}\right)^2}$. 

Applying union bound~\citep{vershynin2010introduction} to all possible pairs of $i \in [n]$, $j \in [n]$, we get
\begin{equation}
 \mathbb{P}\left(\left\|\widetilde{\bm{\mathcal{A}}}_S\bm{\eta}\right\|_{\infty, \infty}\geq t\right) \leq \sum_{i,j}\mathbb{P}\left(\left[\widetilde{\bm{\mathcal{A}}}_S\bm{\eta}\right]_{ij}\geq t\right) \leq 2n^2\exp \left\{-\frac{nt^{2}}{2\tau\left(1-\left(\frac{\lambda}{\lambda+1}\right)^{S+1}\right)^2\sigma^2}\right\}.  
\end{equation}
Applying union bound again, we have
\begin{equation}
   \mathbb{P}\left(\left\|\widetilde{\bm{\mathcal{A}}}_S\bm{\eta}\right\|_{F}^{2}\geq t\right) \leq \sum_{i,j}\mathbb{P}\left(\left\|\widetilde{\bm{\mathcal{A}}}_S\bm{\eta}\right\|_{\infty, \infty}\geq \sqrt{t}\right)\leq2n^4\exp \left\{-\frac{nt}{2\tau\left(1-\left(\frac{\lambda}{\lambda+1}\right)^{S+1}\right)^2\sigma^2}\right\}.
\end{equation}
Choose $t=2\tau\left(1-\left(\frac{\lambda}{\lambda+1}\right)^{S+1}\right)^2\left(4\log n+\log{2d}\right)/n$ and with probability $1-1/d$, we have 
\begin{equation}
    \left\|\widetilde{\bm{\mathcal{A}}}_S\bm{\eta}\right\|_{F}^{2}\leq \frac{2\tau\left(1-\left(\frac{\lambda}{\lambda+1}\right)^{S+1}\right)^2\sigma^2\left(4\log n+\log{2d}\right)}{n},
\end{equation}
which finishes the proof.
\end{proof}

\section{Proof of the Main Theorem~\ref{theorem:main}}
\label{appendix:upper bound with optimization}
We provide the details of proof of main theorem~\ref{theorem:main}.\\
\textbf{[Restatement of Theorem~\ref{theorem:main}] }
\emph{Under Assumptions 1,2,3,4, let $\mathbf{W}_{f}^{(k)}$ denote the $k$-th step gradient descent solution for $\min_{\mathbf{W}} f(\mathbf{W})$ with step size $\alpha \leq 1 / L$, with probability $1-1/d$ we have}
\begin{equation}
\begin{split}
    g\left(\mathbf{W}_{f}^{(k)}\right)-g\left(\mathbf{W}_{g}^{*}\right)
    &\leq \mathcal{O}\left(\frac{1}{2k\alpha}\right) + \mathcal{O}\left(\frac{\tau \log n}{n}\right),
\end{split}
\end{equation}
where $\mathbf{W}_{g}^{*} = \arg\min_{\mathbf{W}} g(\mathbf{W})$ is the optimal solution of the clean loss function $g(\mathbf{W})$, $\tau$ is the high-order graph connectivity factor, and $n$ is the number of nodes of a graph.

\begin{proof}
By the definition of $L$-smooth, we can obtain the following inequality:
\begin{equation}
    f(\mathbf{W}_{f}^{'}) \leq f(\mathbf{W}_{f})+\langle\nabla f(\mathbf{W}_{f}), \mathbf{W}_{f}^{'}-\mathbf{W}_{f}\rangle+\frac{1}{2} L\|\mathbf{W}_{f}^{'}-\mathbf{W}_{f}\|_{F}^{2}.
\end{equation}
Let's use the gradient descent algorithm with $\mathbf{W}_{f}^{'} = \mathbf{W}_{f}^{+} = \mathbf{W}_{f} - \alpha \nabla f(\mathbf{W}_{f})$. We then get:
\begin{equation}
\label{part 1}
    \begin{split}
        f\left(\mathbf{W}_{f}^{+}\right) &\leq f(\mathbf{W}_{f})+\langle \nabla f(\mathbf{W}_{f}), \mathbf{W}_{f}^{+}-\mathbf{W}_{f}\rangle+\frac{1}{2} L\left\|\mathbf{W}_{f}^{+}-\mathbf{W}_{f}\right\|_{F}^{2}\\
        &=f(\mathbf{W}_{f})+\langle \nabla f(\mathbf{W}_{f}), \mathbf{W}_{f}-\alpha \nabla f(\mathbf{W}_{f})-\mathbf{W}_{f}\rangle+\frac{1}{2} L\|\mathbf{W}_{f}-\alpha \nabla f(\mathbf{W}_{f})-\mathbf{W}_{f}\|_{F}^{2}\\
        &=f(\mathbf{W}_{f})-\langle \nabla f(\mathbf{W}_{f}), \alpha \nabla f(\mathbf{W}_{f}\rangle+\frac{1}{2} L\|\alpha \nabla f(\mathbf{W}_{f})\|_{F}^{2}\\
        &=f(\mathbf{W}_{f})-\alpha\|\nabla f(\mathbf{W}_{f})\|_{F}^{2}+\frac{1}{2} L \alpha^{2}\|\nabla f(\mathbf{W}_{f})\|_{F}^{2}\\
        &=f(\mathbf{W}_{f})-\left(1-\frac{1}{2} L \alpha\right) \alpha\|\nabla f(\mathbf{W}_{f})\|_{F}^{2}.
    \end{split}
\end{equation}
With the fixed step size $\alpha\leq 1/L$, we know that $-(1-\frac{1}{2}L\alpha)=\frac{1}{2}L\alpha-1\leq\frac{1}{2}L(1/L)-1=\frac{1}{2}-1=-\frac{1}{2}$. Plugging this into Eq. (\ref{part 1}), we have the following inequality:
\begin{equation}
\label{part 2}
    f\left(\mathbf{W}_{f}^{+}\right) \leq f(\mathbf{W}_{f})-\frac{1}{2} \alpha\|\nabla f(\mathbf{W}_{f})\|_{F}^{2}.
\end{equation}
If we choose $t$ to be small enough such that $t\leq 1/L$, this inequality implies that the loss function value strictly decreases under each iteration of gradient descent since $\|\nabla f(\mathbf{W}_{f})\|$ is positive unless $\nabla f(\mathbf{W}_{f})=0$ e.g. $\mathbf{W}_{f} = \mathbf{W}_{f}^{*}$, where $\mathbf{W}_{f}$ reaches $\mathbf{W}_{f}^{*}$.

Now, let's bound the loss function value $f(\mathbf{W}_{f}^{+})$. Since $f$ is convex, we can write
\begin{equation}
\label{strctly decrease}
f(\mathbf{W}_{f}) \leq f\left(\mathbf{W}_{g}^{*}\right)+\langle \nabla f(\mathbf{W}_{f}), \mathbf{W}_{f}-\mathbf{W}_{g}^{*}\rangle.
\end{equation}
Introducing this inequality into Eq. (\ref{part 2}),
we can obtain the following:
\begin{equation}
\label{part 3}
\begin{split}
f\left(\mathbf{W}_{f}^{+}\right) - f\left(\mathbf{W}_{g}^{*}\right) &\leq \langle \nabla f(\mathbf{W}_{f}), \mathbf{W}_{f}-\mathbf{W}_{g}^{*}\rangle-\frac{\alpha}{2}\|\nabla f(\mathbf{W}_{f})\|_{F}^{2} \\
&\leq \frac{1}{2\alpha}\left(2\alpha \langle \nabla f(\mathbf{W}_{f}), \mathbf{W}_{f}-\mathbf{W}_{g}^{*}\rangle-\alpha^{2}\|\nabla f(\mathbf{W}_{f})\|_{F}^{2}\right) \\
&\leq \frac{1}{2\alpha}\left(2\alpha\langle \nabla f(\mathbf{W}_{f}), \mathbf{W}_{f}-\mathbf{W}_{g}^{*}\rangle-\alpha^{2}\|\nabla f(\mathbf{W}_{f})\|_{F}^{2}-\left\|\mathbf{W}_{f}-\mathbf{W}_{g}^{*}\right\|_{F}^{2}\right)\\
&\quad+\frac{1}{2\alpha}\left\|\mathbf{W}_{f}-\mathbf{W}_{g}^{*}\right\|_{F}^{2} \\
& \leq \frac{1}{2\alpha}\left(\left\|\mathbf{W}_{f}-\mathbf{W}_{g}^{*}\right\|_{F}^{2}-\left\|\mathbf{W}_{f}-\alpha \nabla f(\mathbf{W}_{f})-\mathbf{W}_{g}^{*}\right\|_{F}^{2}\right).
\end{split}
\end{equation}
Notice that by the definition of gradient descent update, we have $\mathbf{W}_{f}^{+}=\mathbf{W}_{f}-\alpha \nabla f(\mathbf{W}_{f})$. Plugging this into the final inequality of Eq. (\ref{part 3}), we can get:
\begin{equation}
    f\left(\mathbf{W}_{f}^{+}\right)-f\left(\mathbf{W}_{g}^{*}\right) \leq \frac{1}{2\alpha}\left(\left\|\mathbf{W}_{f}-\mathbf{W}_{g}^{*}\right\|_{F}^{2}-\left\|\mathbf{W}_{f}^{+}-\mathbf{W}_{g}^{*}\right\|_{F}^{2}\right).
\end{equation}
This inequality holds for $\mathbf{W}_{f}^{+}$ on every iteration of gradient descent. Summing over iterations, we get:
\begin{equation}
\begin{split}
\sum_{i=1}^{k} \left[f\left(\mathbf{W}_{f}^{(i)}\right)-f\left(\mathbf{W}_{g}^{*}\right)\right]& \leq \sum_{i=1}^{k} \frac{1}{2 \alpha}\left(\left\|\mathbf{W}_{f}^{(i-1)}-\mathbf{W}_{g}^{*}\right\|_{F}^{2}-\left\|\mathbf{W}_{f}^{(i)}-\mathbf{W}_{g}^{*}\right\|_{F}^{2}\right) \\
&=\frac{1}{2\alpha}\left(\left\|\mathbf{W}_{f}^{(0)}-\mathbf{W}_{g}^{*}\right\|_{F}^{2}-\left\|\mathbf{W}_{f}^{(k)}-\mathbf{W}_{g}^{*}\right\|_{F}^{2}\right) \\
& \leq \frac{1}{2\alpha}\left(\left\|\mathbf{W}_{f}^{(0)}-\mathbf{W}_{g}^{*}\right\|_{F}^{2}\right).
\end{split}
\end{equation}
With the inequality of Eq. (\ref{strctly decrease}), we know that $f(\mathbf{W}_{f})$ strictly decreases over each iteration. So we have following:
\begin{equation}
\begin{split}
f\left(\mathbf{W}_{f}^{(k)}\right)-f\left(\mathbf{W}_{g}^{*}\right) &\leq \frac{1}{k} \left[\sum_{i=1}^{k} f\left(\mathbf{W}_{f}^{(i)}\right)-f\left(\mathbf{W}_{g}^{*}\right)\right]\\
& \leq \frac{1}{2k\alpha}\left(\left\|\mathbf{W}_{f}^{(0)}-\mathbf{W}_{g}^{*}\right\|_{F}^{2}\right)
\end{split}
\end{equation}
Equivalently, we have the inequality for the loss function $g(\mathbf{W}_{f})$:
\begin{equation}
\begin{split}
    g\left(\mathbf{W}_{f}^{(k)}\right)-g\left(\mathbf{W}_{g}^{*}\right)&=f\left(\mathbf{W}_{f}^{(k)}\right)-f\left(\mathbf{W}_{g}^{*}\right)\\
    &\quad +2\langle\widetilde{\bm{\mathcal{A}}}_{S}\bm{\eta}\mathbf{W}_{g}^{*}, \widetilde{\bm{\mathcal{A}}}_{S}\mathbf{X}^{*}\mathbf{W}_{g}^{*}-\mathbf{Y}\rangle+\langle\widetilde{\bm{\mathcal{A}}}_{S}\bm{\eta}\mathbf{W}_{g}^{*}, \widetilde{\bm{\mathcal{A}}}_{S}\bm{\eta}\mathbf{W}_{g}^{*}\rangle\\
    &\quad -2\langle\widetilde{\bm{\mathcal{A}}}_{S}\bm{\eta}\mathbf{W}_{f}^{(k)}, \widetilde{\bm{\mathcal{A}}}_{S}\mathbf{X}^{*}\mathbf{W}_{f}^{(k)}-\mathbf{Y}\rangle-\langle\widetilde{\bm{\mathcal{A}}}_{S}\bm{\eta}\mathbf{W}_{f}^{(k)}, \widetilde{\bm{\mathcal{A}}}_{S}\bm{\eta}\mathbf{W}_{f}^{(k)}\rangle\\
    &\leq \frac{1}{2k\alpha}\left(\left\|\mathbf{W}_{f}^{(0)}-\mathbf{W}_{g}^{*}\right\|_{F}^{2}\right)\\
    &\quad +\left\|\widetilde{\bm{\mathcal{A}}}_{S}\bm{\eta}\right\|_{F}^{2}\left\|\mathbf{W}_{g}^{*}\right\|_{F}^2\left(2\left\|\widetilde{\bm{\mathcal{A}}}_{S}\mathbf{X}^{*}\mathbf{W}_{g}^{*}-\mathbf{Y}\right\|_{F}^{2}+\left\| \widetilde{\bm{\mathcal{A}}}_{S}\bm{\eta}\right\|_{F}^{2}\left\|\mathbf{W}_{g}^{*}\right\|_{F}^{2}\right)\\
    &\quad +\left\|\widetilde{\bm{\mathcal{A}}}_{S}\bm{\eta}\right\|_{F}^{2}\left\|\mathbf{W}_{f}^{(k)}\right\|_{F}^2\left(2\left\|\widetilde{\bm{\mathcal{A}}}_{S}\mathbf{X}\mathbf{W}_{f}^{(k)}-\mathbf{Y}\right\|_{F}^{2}+\left\| \widetilde{\bm{\mathcal{A}}}_{S}\bm{\eta}\right\|_{F}^{2}\left\|\mathbf{W}_{f}^{(k)}\right\|_{F}^{2}\right)\\
    &\leq \mathcal{O}\left(\frac{1}{2k\alpha}\right) + \mathcal{O}\left(\frac{\tau \log n}{n}\right),
\end{split}
\end{equation}
which finishes the proof.
\end{proof}

\section{More Details on Equation~(\ref{graph signal denoising}).}
We provide more details on how to obtain Equation~(\ref{graph signal denoising}).

Note that if we set $\widetilde{\mathbf{L}}=\mathbf{I}-\widetilde{\mathbf{D}}^{-\frac{1}{2}}\widetilde{\mathbf{A}} \widetilde{\mathbf{D}}^{-\frac{1}{2}}$, we have $\operatorname{tr}\left(\mathbf{F}^{\top} \widetilde{\mathbf{L}} \mathbf{F}\right)=\operatorname{tr}\left(\mathbf{F}^{\top} (\mathbf{I}-\widetilde{\mathbf{D}}^{-\frac{1}{2}}\widetilde{\mathbf{A}} \widetilde{\mathbf{D}}^{-\frac{1}{2}}) \mathbf{F}\right)=\operatorname{tr}\left(\mathbf{F}^{\top}\mathbf{F}\right)-\operatorname{tr}\left(\mathbf{F}^{\top} \widetilde{\mathbf{D}}^{-\frac{1}{2}}\widetilde{\mathbf{A}} \widetilde{\mathbf{D}}^{-\frac{1}{2}} \mathbf{F}\right)=\operatorname{tr}\left(\mathbf{F}\mathbf{F}^{\top}\right)-\operatorname{tr}\left( \widetilde{\mathbf{D}}^{-\frac{1}{2}}\widetilde{\mathbf{A}} \widetilde{\mathbf{D}}^{-\frac{1}{2}} \mathbf{F}\mathbf{F}^{\top}\right)$. 
On the other hand, if we set $\widetilde{\mathbf{L}}=\mathbf{I}-\widetilde{\mathbf{D}}^{-1}\widetilde{\mathbf{A}}$, we have $\operatorname{tr}\left(\mathbf{F}^{\top} \widetilde{\mathbf{L}} \mathbf{F}\right)=\operatorname{tr}\left(\mathbf{F}^{\top} (\mathbf{I}-\widetilde{\mathbf{D}}^{-1}\widetilde{\mathbf{A}}) \mathbf{F}\right)=\operatorname{tr}\left(\mathbf{F}^{\top}\mathbf{F}\right)-\operatorname{tr}\left(\mathbf{F}^{\top} \widetilde{\mathbf{D}}^{-1}\widetilde{\mathbf{A}} \mathbf{F}\right)=\operatorname{tr}\left(\mathbf{F}\mathbf{F}^{\top}\right)-\operatorname{tr}\left( \widetilde{\mathbf{D}}^{-1}\widetilde{\mathbf{A}}\mathbf{F}\mathbf{F}^{\top}\right)$. We denote
$\mathbf{F}=\left[\begin{array}{c}
\mathbf{F}_{1} \\
\vdots \\
\mathbf{F}_{n} \\
\end{array}\right]$ and $\mathbf{F}^{\top}=\left[\mathbf{F}_{1}^{\top} \cdots \mathbf{F}_{n}^{\top}\right]$, where $\mathbf{F}_i=\left[\mathbf{F}_{i1} \cdots \mathbf{F}_{id}\right]$, then we have $\operatorname{tr}\left(\mathbf{F}\mathbf{F}^{\top}\right) = \sum_{i=1}^n \mathbf{F}_{i}\mathbf{F}^{\top}_{i}$. \\
When $\widetilde{\mathbf{L}}=\mathbf{I}-\widetilde{\mathbf{D}}^{-\frac{1}{2}}\widetilde{\mathbf{A}} \widetilde{\mathbf{D}}^{-\frac{1}{2}}$, we have
\begin{equation*}
\begin{split}
&\quad\operatorname{tr}\left( \widetilde{\mathbf{D}}^{-\frac{1}{2}}\widetilde{\mathbf{A}} \widetilde{\mathbf{D}}^{-\frac{1}{2}} \mathbf{F}\mathbf{F}^{\top}\right)\\
&=\operatorname{tr}\left(\left[\begin{array}{cccc}
\frac{\mathbf{A}_{11}}{\sqrt{d_{1}+1}\sqrt{d_{1}+1}} & \frac{\mathbf{A}_{12}}{\sqrt{d_{1}+1}\sqrt{d_{2}+1}}  & \cdots & \frac{\mathbf{A}_{1n}}{\sqrt{d_{1}+1}\sqrt{d_{n}+1}} \\
\frac{\mathbf{A}_{21}}{\sqrt{d_{2}+1}\sqrt{d_{1}+1}} & \frac{\mathbf{A}_{22}}{\sqrt{d_{2}+1}\sqrt{d_{2}+1}}  & \cdots & \frac{\mathbf{A}_{2n}}{\sqrt{d_{2}+1}\sqrt{d_{n}+1}} \\
\vdots  & \ddots  & \ddots & \vdots \\
\frac{\mathbf{A}_{n1}}{\sqrt{d_{n}+1}\sqrt{d_{1}+1}} & \frac{\mathbf{A}_{n2}}{\sqrt{d_{n}+1}\sqrt{d_{2}+1}} & \cdots & \frac{\mathbf{A}_{nn}}{\sqrt{d_{n}+1}\sqrt{d_{n}+1}}
\end{array}\right] \left[\begin{array}{cccc}
\mathbf{F}_{1}\mathbf{F}_{1}^{\top} & \mathbf{F}_{1}\mathbf{F}_{2}^{\top}  & \cdots & \mathbf{F}_{1}\mathbf{F}_{n}^{\top} \\
\mathbf{F}_{2}\mathbf{F}_{1}^{\top} & \mathbf{F}_{2}\mathbf{F}_{2}^{\top} & \cdots & \mathbf{F}_{2}\mathbf{F}_{n}^{\top} \\
\vdots  & \ddots  & \ddots & \vdots \\
\mathbf{F}_{n}\mathbf{F}_{1}^{\top} & \mathbf{F}_{n}\mathbf{F}_{2}^{\top} & \cdots & \mathbf{F}_{n}\mathbf{F}_{n}^{\top}
\end{array}\right]\right)\\
&=\sum_{i=1}^{n}\sum_{j=1}^{n}\frac{\mathbf{A}_{ij}}{\sqrt{d_{i}+1}\sqrt{d_{j}+1}}\mathbf{F}_{j}\mathbf{F}_{i}^{\top}.
\end{split}
\end{equation*}
On the other hand, when $\widetilde{\mathbf{L}}=\mathbf{I}-\widetilde{\mathbf{D}}^{-1}\widetilde{\mathbf{A}}$, we have
\begin{equation*}
\begin{split}
&\quad\operatorname{tr}\left( \widetilde{\mathbf{D}}^{-1}\widetilde{\mathbf{A}} \mathbf{F}\mathbf{F}^{\top}\right)\\
&=\operatorname{tr}\left(\left[\begin{array}{cccc}
\frac{\mathbf{A}_{11}}{d_{1}+1} & \frac{\mathbf{A}_{12}}{d_{1}+1}  & \cdots & \frac{\mathbf{A}_{1n}}{d_{1}+1} \\
\frac{\mathbf{A}_{21}}{d_{2}+1} & \frac{\mathbf{A}_{22}}{d_{2}+1} & \cdots & \frac{\mathbf{A}_{2n}}{d_{2}+1}\\
\vdots  & \ddots  & \ddots & \vdots \\
\frac{\mathbf{A}_{n1}}{d_{n}+1} & \frac{\mathbf{A}_{n2}}{d_{n}+1} & \cdots & \frac{\mathbf{A}_{nn}}{d_{n}+1}
\end{array}\right] \left[\begin{array}{cccc}
\mathbf{F}_{1}\mathbf{F}_{1}^{\top} & \mathbf{F}_{1}\mathbf{F}_{2}^{\top}  & \cdots & \mathbf{F}_{1}\mathbf{F}_{n}^{\top} \\
\mathbf{F}_{2}\mathbf{F}_{1}^{\top} & \mathbf{F}_{2}\mathbf{F}_{2}^{\top} & \cdots & \mathbf{F}_{2}\mathbf{F}_{n}^{\top} \\
\vdots  & \ddots  & \ddots & \vdots \\
\mathbf{F}_{n}\mathbf{F}_{1}^{\top} & \mathbf{F}_{n}\mathbf{F}_{2}^{\top} & \cdots & \mathbf{F}_{n}\mathbf{F}_{n}^{\top}
\end{array}\right]\right)\\
&=\sum_{i=1}^{n}\sum_{j=1}^{n}\frac{\mathbf{A}_{ij}}{d_{i}+1}\mathbf{F}_{j}\mathbf{F}_{i}^{\top}.
\end{split}
\end{equation*}
So when $\widetilde{\mathbf{L}}=\mathbf{I}-\widetilde{\mathbf{D}}^{-\frac{1}{2}}\widetilde{\mathbf{A}} \widetilde{\mathbf{D}}^{-\frac{1}{2}}$, we have 
\begin{equation*}
    \begin{split}
        &\quad\operatorname{tr}\left(\mathbf{F}^{\top} \widetilde{\mathbf{L}} \mathbf{F}\right)\quad \left(\widetilde{\mathbf{L}}=\mathbf{I}-\widetilde{\mathbf{D}}^{-\frac{1}{2}}\widetilde{\mathbf{A}} \widetilde{\mathbf{D}}^{-\frac{1}{2}}\right)\\
        &= \operatorname{tr}\left(\mathbf{F}^{\top} (\mathbf{I}-\widetilde{\mathbf{D}}^{-\frac{1}{2}}\widetilde{\mathbf{A}} \widetilde{\mathbf{D}}^{-\frac{1}{2}}) \mathbf{F}\right) \\
        &=\operatorname{tr}\left(\mathbf{F}\mathbf{F}^{\top}\right)-\operatorname{tr}\left( \widetilde{\mathbf{D}}^{-\frac{1}{2}}\widetilde{\mathbf{A}} \widetilde{\mathbf{D}}^{-\frac{1}{2}} \mathbf{F}\mathbf{F}^{\top}\right) \\
        &= \sum_{i=1}^n \mathbf{F}_{i}\mathbf{F}^{\top}_{i} -\sum_{i=1}^{n}\sum_{j=1}^{n}\frac{\mathbf{A}_{ij}}{\sqrt{d_{i}+1}\sqrt{d_{j}+1}}\mathbf{F}_{j}\mathbf{F}_{i}^{\top} \\
        &= \frac{1}{2}\sum_{i=1}^n \mathbf{F}_{i}\mathbf{F}^{\top}_{i} + \frac{1}{2}\sum_{j=1}^n \mathbf{F}_{j}\mathbf{F}^{\top}_{j} -\sum_{i=1}^{n}\sum_{j=1}^{n}\frac{\mathbf{A}_{ij}}{\sqrt{d_{i}+1}\sqrt{d_{j}+1}}\mathbf{F}_{j}\mathbf{F}_{i}^{\top} \\
        &=\frac{1}{2}\left(\sum_{i=1}^n \mathbf{F}_{i}\mathbf{F}^{\top}_{i} + \sum_{j=1}^n \mathbf{F}_{j}\mathbf{F}^{\top}_{j} -2\sum_{i=1}^{n}\sum_{j=1}^{n}\frac{\mathbf{A}_{ij}}{\sqrt{d_{i}+1}\sqrt{d_{j}+1}}\mathbf{F}_{j}\mathbf{F}_{i}^{\top}\right) \\
        &= \frac{1}{2}\left( \sum_{i=1}^n \sum_{j=1}^n \frac{\mathbf{A}_{ij}\mathbf{F}_{i}\mathbf{F}^{\top}_{i}}{d_i+1} +  \sum_{i=1}^n \sum_{j=1}^n \frac{\mathbf{A}_{ij}\mathbf{F}_{j}\mathbf{F}^{\top}_{j}}{d_j+1} -2\sum_{i=1}^{n}\sum_{j=1}^{n}\frac{\mathbf{A}_{ij}}{\sqrt{d_{i}+1}\sqrt{d_{j}+1}}\mathbf{F}_{j}\mathbf{F}_{i}^{\top}\right) \text{undirected graph}\\
        &=\frac{1}{2}\left(\sum_{i=1}^{n}\sum_{j=1}^{n}\left(\frac{\mathbf{A}_{ij}\mathbf{F}_{i}\mathbf{F}^{\top}_{i}}{d_i+1} +\frac{\mathbf{A}_{ij}\mathbf{F}_{j}\mathbf{F}^{\top}_{j}}{d_j+1}-\frac{\mathbf{A}_{ij}}{\sqrt{d_{i}+1}\sqrt{d_{j}+1}}\mathbf{F}_{j}\mathbf{F}_{i}^{\top}-\frac{\mathbf{A}_{ij}}{\sqrt{d_{i}+1}\sqrt{d_{j}+1}}\mathbf{F}_{i}\mathbf{F}_{j}^{\top}\right)\right)\\
        &=\frac{1}{2}\left(\sum_{i=1}^{n}\sum_{j=1}^{n}\mathbf{A}_{ij}\left(\frac{\mathbf{F}_{i}\mathbf{F}^{\top}_{i}}{d_i+1} +\frac{\mathbf{F}_{j}\mathbf{F}^{\top}_{j}}{d_j+1}-\frac{\mathbf{F}_{j}\mathbf{F}_{i}^{\top}}{\sqrt{d_{i}+1}\sqrt{d_{j}+1}}-\frac{\mathbf{F}_{i}\mathbf{F}_{j}^{\top}}{\sqrt{d_{i}+1}\sqrt{d_{j}+1}}\right)\right)\\
        &=\frac{1}{2}\left(\sum_{i=1}^{n}\sum_{j=1}^{n}\mathbf{A}_{ij}\left(\frac{\mathbf{F}_i}{\sqrt{d_i+1}}-\frac{\mathbf{F}_j}{\sqrt{d_j+1}}\right)\left(\frac{\mathbf{F}_i^{\top}}{\sqrt{d_i+1}}-\frac{\mathbf{F}_j^{\top}}{\sqrt{d_j+1}}\right)\right)\\
        &=\frac{1}{2}\left(\sum_{i=1}^{n}\sum_{j=1}^{n}\mathbf{A}_{ij}\left\|\frac{\mathbf{F}_{i}}{\sqrt{d_{i}+1}}-\frac{\mathbf{F}_{j}}{\sqrt{d_{j}+1}}\right\|_{2}^{2}\right) = \sum_{(i, j) \in \mathcal{E}} \mathbf{A}_{i j}\left\|\frac{\mathbf{F}_{i}}{\sqrt{d_{i}+1}}-\frac{\mathbf{F}_{j}}{\sqrt{d_{j}+1}}\right\|_{2}^{2}.
    \end{split}
\end{equation*}
On the other hand, when $\widetilde{\mathbf{L}}=\mathbf{I}-\widetilde{\mathbf{D}}^{-1}\widetilde{\mathbf{A}}$, we have
\begin{equation*}
    \begin{split}
        &\operatorname{tr}\left(\mathbf{F}^{\top} \widetilde{\mathbf{L}} \mathbf{F}\right)\quad \left(\widetilde{\mathbf{L}}=\mathbf{I}-\widetilde{\mathbf{D}}^{-1}\widetilde{\mathbf{A}} \right)\\
        &= \operatorname{tr}\left(\mathbf{F}^{\top} (\mathbf{I}-\widetilde{\mathbf{D}}^{-1}\widetilde{\mathbf{A}}) \mathbf{F}\right) \\
        &= \sum_{i=1}^n \mathbf{F}_{i}\mathbf{F}^{\top}_{i} -\sum_{i=1}^{n}\sum_{j=1}^{n}\frac{\mathbf{A}_{ij}}{d_{i}+1}\mathbf{F}_{j}\mathbf{F}_{i}^{\top} \\
        &= \frac{1}{2}\sum_{i=1}^n \mathbf{F}_{i}\mathbf{F}^{\top}_{i} + \frac{1}{2}\sum_{j=1}^n \mathbf{F}_{j}\mathbf{F}^{\top}_{j} -\sum_{i=1}^{n}\sum_{j=1}^{n}\frac{\mathbf{A}_{ij}}{d_{i}+1}\mathbf{F}_{j}\mathbf{F}_{i}^{\top} \\
        &= \frac{1}{2}\left( \sum_{i=1}^n \sum_{j=1}^n \frac{\mathbf{A}_{ij}\mathbf{F}_{i}\mathbf{F}^{\top}_{i}}{d_i+1} +  \sum_{i=1}^n \sum_{j=1}^n \frac{\mathbf{A}_{ij}\mathbf{F}_{j}\mathbf{F}^{\top}_{j}}{d_i+1} -2\sum_{i=1}^{n}\sum_{j=1}^{n}\frac{\mathbf{A}_{ij}}{\sqrt{d_{i}+1}\sqrt{d_{i}+1}}\mathbf{F}_{j}\mathbf{F}_{i}^{\top}\right) \text{undirected graph}\\
        &=\frac{1}{2}\left(\sum_{i=1}^{n}\sum_{j=1}^{n}\mathbf{A}_{ij}\left(\frac{\mathbf{F}_i}{\sqrt{d_i+1}}-\frac{\mathbf{F}_j}{\sqrt{d_i+1}}\right)\left(\frac{\mathbf{F}_i^{\top}}{\sqrt{d_i+1}}-\frac{\mathbf{F}_j^{\top}}{\sqrt{d_i+1}}\right)\right)\\
        &=\frac{1}{2}\left(\sum_{i=1}^{n}\sum_{j=1}^{n}\mathbf{A}_{ij}\left\|\frac{\mathbf{F}_{i}}{\sqrt{d_{i}+1}}-\frac{\mathbf{F}_{j}}{\sqrt{d_{i}+1}}\right\|_{2}^{2}\right) = \sum_{(i, j) \in \mathcal{E}} \mathbf{A}_{i j}\left\|\frac{\mathbf{F}_{i}}{\sqrt{d_{i}+1}}-\frac{\mathbf{F}_{j}}{\sqrt{d_{i}+1}}\right\|_{2}^{2}.
    \end{split}
\end{equation*}

\section{Datasets Details}
Cora, Citeseer, and Pubmed are standard citation network benchmark datasets~\citep{sen2008collective}. Coauthor-CS and Coauthor-Phy are extracted from Microsoft Academic Graph~\citep{shchur2018pitfalls}. Cornell, Texas, Wisconsin, and Actor are constructed by \citet{pei2020geom}. ogbn-products is a large-scale product, constructed by \citet{hu2020open}. 
\begin{table}[!hbtp]
\caption{Datasets statistics}
\label{tab:stat}
\setlength{\tabcolsep}{2mm}
\begin{center}
\begin{tabular}{lcccc}
\toprule
  \multicolumn{1}{l}{\textbf{Dataset}}  & \# Nodes    & \# Edges   & \# Features  & \# Classes \\
\midrule
Cora  &  2708   & 5429 & 1433 & 7       \\
Citeseer    & 3327 & 4732 & 3703 & 6 \\
Pubmed   & 19717 & 44338 & 500 & 3   \\
Cornell   & 183 & 295 & 1703 & 5   \\
Texas & 183 & 309 & 1703 & 5 \\
Wisconsin & 251 & 499 & 1703 & 5 \\
Actor & 7600 & 33544 & 931 & 5 \\
Coauthor-CS & 18333 & 81894 & 6805 & 15 \\
Coauthor-Phy & 34493 & 247962 & 8415 & 5 \\
ogbn-products & 2449029 & 61859140 & 100 & 42\\
\bottomrule
\end{tabular}
\end{center}
\end{table}

\section{Reproducibility}
\subsection{Implementation Details}
We use Pytorch~\citep{paszke2019pytorch} and PyG~\citep{fey2019fast} to implement \name and R\name. The codes of baselines are implemented referring to the implementation of MLP\footnote{https://github.com/tkipf/pygcn}\footnote{https://github.com/snap-stanford/ogb/blob/master/examples/nodeproppred/products/mlp.py}, GCN\footnote{https://github.com/tkipf/pygcn}\footnote{https://github.com/snap-stanford/ogb/blob/master/examples/nodeproppred/products/gnn.py}, GAT\footnote{https://github.com/pyg-team/pytorch\_geometric/blob/master/examples/gat.py}, GLP\footnote{https://github.com/liqimai/Efficient-SSL}, S$^2$GC\footnote{https://github.com/allenhaozhu/SSGC}, and IRLS\footnote{https://github.com/FFTYYY/TWIRLS}. All the experiments in this work are conducted on a single NVIDIA Tesla A100 with 80GB memory size. The software that we use for experiments are Python 3.6.8, pytorch 1.9.0, pytorch-scatter 2.0.9, pytorch-sparse 0.6.12, pyg 2.0.3, ogb 1.3.4, numpy 1.19.5, torchvision 0.10.0, and CUDA 11.1.

\subsection{Hyperparameter Details}
\label{appendix:hyperparameter}
We provide details about hyparatemeters of \name and R\name in Table~\ref{tab:citation_hyper}, \ref{tab:heterophily_hyper}, \ref{tab:co-author_hyper}, \ref{tab:products_hyper}, and \ref{tab:citation_hyper_flip}.
\begin{table}[t]
\setlength{\tabcolsep}{0.6mm}
\caption{The hyper-parameters for \name and R\name on three citation datasets.}
\label{tab:citation_hyper}
\vskip 0.15in
    \centering
    \begin{tabular}{l|c|c|c|c|c|c|c|c|c|c}
    \toprule
    Model & dataset & runs  & lr & epochs & wight decay & hidden & dropout & $S$ & $\lambda$ & $\epsilon$\\
    \midrule
    \name & Cora & 100 & 0.2 & 100 & 1e-5 & 0 & 0 & 16 & 32 & - \\
    \name & Citeseer & 100 & 0.2 & 100 & 1e-5 & 0 & 0 & 16 & 32 & - \\
    \name & Pubmed & 100 & 0.2 & 100 & 1e-5 & 0 & 0 & 16 & 32 & - \\
    \hline
    R\name & Cora & 100 & 0.2 & 100 & 1e-5 & 0 & 0 & 16 & 32 & 1 \\
    R\name & Citeseer & 100 & 0.2 & 100 & 1e-5 & 0 & 0 & 16 & 32 & 1 \\
    R\name & Pubmed & 100 & 0.2 & 100 & 1e-5 & 0 & 0 & 16 & 32 & 1 \\
    \bottomrule
    \end{tabular}
\end{table}

\begin{table}[t]
\setlength{\tabcolsep}{0.6mm}
\caption{The hyper-parameters for \name and R\name on four heterophily graphs.}
\label{tab:heterophily_hyper}
\vskip 0.15in
    \centering
    \begin{tabular}{l|c|c|c|c|c|c|c|c|c|c|c|c}
    \toprule
    Model & dataset & noise level & runs  & lr & epochs & wight decay & hidden & dropout & $S$ & $\lambda$ & $\epsilon$ & +MLP\\
    \midrule
    \name & Cornell & 0.01 & 10 & 0.2 & 200 & 5e-4 & 16 & 0.5 & 16 & 1 & - & y \\
    \name & Cornell & 1 & 10 & 0.2 & 200 & 5e-4 & 16 & 0.5 & 16 & 1024 & - & y \\
    \name & Texas & 0.01 & 10 & 0.2 & 200 & 5e-4 & 16 & 0.5 & 16 & 1 & - & y \\
    \name & Texas & 1 & 10 & 0.2 & 200 & 5e-4 & 16 & 0.5 & 16 & 1024 & - & y \\
    \name & Wisconsin & 0.01 & 10 & 0.2 & 1000 & 5e-4 & 16 & 0.5 & 2 & 1 & - & y \\
    \name & Wisconsin & 1 & 10 & 0.2 & 1000 & 5e-4 & 16 & 0.5 & 2 & 1024 & - & y \\
    \name & Actor & 0.01 & 10 & 0.2 & 1000 & 5e-4 & 16 & 0.5 & 2 & 1 & - & y \\
    \name & Actor & 1 & 10 & 0.2 & 1000 & 5e-4 & 16 & 0.5 & 2 & 1024 & - & y \\
    \hline
    R\name & Cornell & 0.01 & 10 & 0.2 & 200 & 5e-4 & 16 & 0.5 & 16 & 1 & 1 & y \\
    R\name & Cornell & 1 & 10 & 0.2 & 200 & 5e-4 & 16 & 0.5 & 16 & 1024 & 1 & y \\
    R\name & Texas & 0.01 & 10 & 0.2 & 200 & 5e-4 & 16 & 0.5 & 16 & 1 & 1 & y \\
    R\name & Texas & 1 & 10 & 0.2 & 200 & 5e-4 & 16 & 0.5 & 16 & 1024 & 1 & y \\
    R\name & Wisconsin & 0.01 & 10 & 0.2 & 1000 & 5e-4 & 16 & 0.5 & 2 & 1 & 1e-5 & y \\
    R\name & Wisconsin & 1 & 10 & 0.2 & 1000 & 5e-4 & 16 & 0.5 & 2 & 1024 & 1e-5 & y \\
    R\name & Actor & 0.01 & 10 & 0.2 & 1000 & 5e-4 & 16 & 0.5 & 2 & 1 & 1e-5 & y \\
    R\name & Actor & 1 & 10 & 0.2 & 1000 & 5e-4 & 16 & 0.5 & 2 & 1024 & 1e-5 & y \\
    \bottomrule
    \end{tabular}
\end{table}

\begin{table}[t]
\setlength{\tabcolsep}{0.6mm}
\caption{The hyper-parameters for \name and R\name on two co-author datasets.}
\label{tab:co-author_hyper}
\vskip 0.15in
    \centering
    \begin{tabular}{l|c|c|c|c|c|c|c|c|c|c|c}
    \toprule
    Model & dataset & noise level & runs  & lr & epochs & wight decay & hidden & dropout & $S$ & $\lambda$ & $\epsilon$ \\
    \midrule
    \name & Coauthor-CS & 0.1 & 10 & 0.2 & 1000 & 1e-7 & 0 & 0 & 16 & 1 & -  \\
    \name & Coauthor-CS & 1 & 10 & 0.2 & 1000 & 1e-7 & 0 & 0 & 16 & 128 & -  \\
    \name & Coauthor-Phy & 0.1 & 10 & 0.2 & 200 & 5e-4 & 16 & 0.5 & 16 & 1 & -  \\
    \name & Coauthor-Phy & 1 & 10 & 0.2 & 200 & 5e-4 & 16 & 0.5 & 16 & 1024 & -  \\
    \hline
    R\name & Coauthor-CS & 0.1 & 10 & 0.2 & 1000 & 1e-7 & 0 & 0 & 16 & 1 & 1  \\
    R\name & Coauthor-CS & 1 & 10 & 0.2 & 1000 & 1e-7 & 0 & 0 & 16 & 128 & 1  \\
    R\name & Coauthor-Phy & 0.1 & 10 & 0.2 & 200 & 5e-4 & 16 & 0.5 & 16 & 1 & 1  \\
    R\name & Coauthor-Phy & 1 & 10 & 0.2 & 200 & 5e-4 & 16 & 0.5 & 16 & 1024 & 1  \\
    \bottomrule
    \end{tabular}
\end{table}

\begin{table}[t]
\setlength{\tabcolsep}{0.6mm}
\caption{The hyper-parameters for \name and R\name on ogbn-products dataset.}
\label{tab:products_hyper}
\vskip 0.15in
    \centering
    \begin{tabular}{l|c|c|c|c|c|c|c|c|c|c|c}
    \toprule
    Model  & noise level & runs  & lr & epochs & hidden & dropout & $S$ & $\lambda$ & $\epsilon$ & layers & +MLP \\
    \midrule
    \name & 0.1 & 10 & 0.01 & 300 & 256 & 0.5 & 128 & 32 & -  & 3 & y \\
    \name & 1 & 10 & 0.01 & 300 & 256 & 0.5 & 128 & 256 & -  & 3 & y\\
    \hline
    R\name & 0.1 & 10 & 0.01 & 300 & 256 & 0.5 & 128 & 32 & 1e-2  & 3 & y \\
    R\name & 1 & 10 & 0.01 & 300 & 256 & 0.5 & 128 & 256 & 1e-2  & 3 & y\\
    \bottomrule
    \end{tabular}
\end{table}

\begin{table}[t]
\setlength{\tabcolsep}{0.6mm}
\caption{The hyper-parameters for \name and R\name on three citation datasets of the flipping experiments.}
\label{tab:citation_hyper_flip}
\vskip 0.15in
    \centering
    \begin{tabular}{l|c|c|c|c|c|c|c|c|c|c|c}
    \toprule
    Model & dataset & flip probability & runs  & lr & epochs & wight decay & hidden & dropout & $S$ & $\lambda$ & $\epsilon$\\
    \midrule
    \name & Cora  & 0.1 & 100 & 0.2 & 100 & 1e-5 & 0 & 0 & 32 & 64 & - \\
    \name & Cora  & 0.2 & 100 & 0.2 & 100 & 1e-5 & 0 & 0 & 16 & 32 & - \\
    \name & Cora  & 0.4 & 100 & 0.2 & 100 & 1e-5 & 0 & 0 & 16 & 32 & - \\
    \name & Citeseer  & 0.1 & 100 & 0.2 & 100 & 1e-5 & 0 & 0 & 16 & 32 & - \\
    \name & Citeseer  & 0.2 & 100 & 0.2 & 100 & 1e-5 & 0 & 0 & 16 & 32 & - \\
    \name & Citeseer  & 0.4 & 100 & 0.2 & 100 & 1e-5 & 0 & 0 & 16 & 32 & - \\
    \name & Pubmed  & 0.1 & 100 & 0.2 & 100 & 1e-5 & 0 & 0 & 16 & 32 & - \\
    \name & Pubmed  & 0.2 & 100 & 0.2 & 100 & 1e-5 & 0 & 0 & 16 & 32 & - \\
    \name & Pubmed  & 0.4 & 100 & 0.2 & 100 & 1e-5 & 0 & 0 & 16 & 32 & - \\
    \hline
    R\name & Cora  & 0.1 & 100 & 0.2 & 100 & 1e-5 & 0 & 0 & 32 & 64 & 1e-5 \\
    R\name & Cora  & 0.2 & 100 & 0.2 & 100 & 1e-5 & 0 & 0 & 16 & 32 & 1e-5 \\
    R\name & Cora  & 0.4 & 100 & 0.2 & 100 & 1e-5 & 0 & 0 & 16 & 32 & 1e-1 \\
    R\name & Citeseer  & 0.1 & 100 & 0.2 & 100 & 1e-5 & 0 & 0 & 16 & 32 & 1e-5 \\
    R\name & Citeseer  & 0.2 & 100 & 0.2 & 100 & 1e-5 & 0 & 0 & 16 & 32 & 1e-5 \\
    R\name & Citeseer  & 0.4 & 100 & 0.2 & 100 & 1e-5 & 0 & 0 & 16 & 32 & 1e-5 \\
    R\name & Pubmed  & 0.1 & 100 & 0.2 & 100 & 1e-5 & 0 & 0 & 16 & 32 & 1e-1 \\
    R\name & Pubmed  & 0.2 & 100 & 0.2 & 100 & 1e-5 & 0 & 0 & 16 & 32 & 1e-1 \\
    R\name & Pubmed  & 0.4 & 100 & 0.2 & 100 & 1e-5 & 0 & 0 & 16 & 32 & 1e-1 \\
    \bottomrule
    \end{tabular}
\end{table}

\section{Additional Experiments}

\subsection{Analysis on Row Normalization}
\label{appendix:row norm}

\begin{table}[htbp]
\normalsize
\caption{Summary of results of NGC w/o raw normalization on three datasets in terms of classification accuracy (\%)}
\label{tab:row normalization}
\setlength{\tabcolsep}{2.5mm}
\begin{center}
\begin{tabular}{cccccccccc}
\toprule
\multirow{2}{*}{Noise Level} & \multicolumn{3}{c}{Cora} & \multicolumn{3}{c}{Citeseer} & \multicolumn{3}{c}{Pubmed} \\
\cmidrule(r){2-4} \cmidrule(r){5-7} \cmidrule(r){8-10}
&  1     &  10  &   100
&  1     &  10  &   100
&  1     &  10  &   100 \\
\midrule
w/o RN   &68.3  &59.7  &56.1   &43.5  &40.4  &37.6  &43.1  &38.8  &37.4 \\
w RN  &66.1  &65.5  &66.2  &45.3 &45.1  &44.8  &62.3  &62.7  &62.1 \\
\bottomrule
\end{tabular}
\end{center}
\end{table}

In this section, we analyze the influence of row normalization on denoising performance. The noise level $\xi$ controls the magnitude of the Gaussian noise we add to the feature matrix: $\mathbf{X}+\xi\bm{\eta}$ where $\bm{\eta}$ is sampled from standard i.i.d., Gaussian distribution. For Cora, Citeseer, and Pubmed, we test $\xi \in \{1, 10, 100\}$. From Table~\ref{tab:row normalization}, we can observe that the denoising performance of w/ row normalization is better than w/o row normalization. Since row normalization can shrink the value of elements in $\bm{\eta}$, thus reducing the variance $\sigma$. In other words, row normalization make $\left\|\widetilde{\bm{\mathcal{A}}}_S\bm{\eta}\right\|_{F}^{2}$ converge to zero faster.

\subsection{Analysis on the Depth of \name and R\name}
\label{appendix:depth analysis}
In this section, we analyze the influence of the depth of \name and R\name model on denoising performance by testing the classification accuracy on semi-supervised node classification tasks. We conduct two sets of experiments: with/without noise in feature matrix. For experiment with feature noise, we simple fix the noise level $\xi =1$. In each set of experiments, we evaluate the test accuracy with respect to \name and R\name model depth, which corresponding to the value of $S$ in $\widetilde{\mathcal{A}}_{S}$. From Figure~\ref{fig:depth_ngc} and \ref{fig:depth_rngc}, we can observe that the test accuracy barely changes with depth if the model is trained on the clean features on Cora and Pubmed but changes greatly if the model is trained on the clean feature on Citeseer. In this regard, the over-smoothing issue exists in R\name model on citeseer. However, the denoising performance of shallow R\name is not good as deeper R\name models, especially on the large graph like Pubmed. This suggests that we do need to increase the depth of GNN model to include more higher-order neighbors for better denoising performances. 

 \begin{figure}[t]
    \begin{center}
        \includegraphics[width=0.325\textwidth]{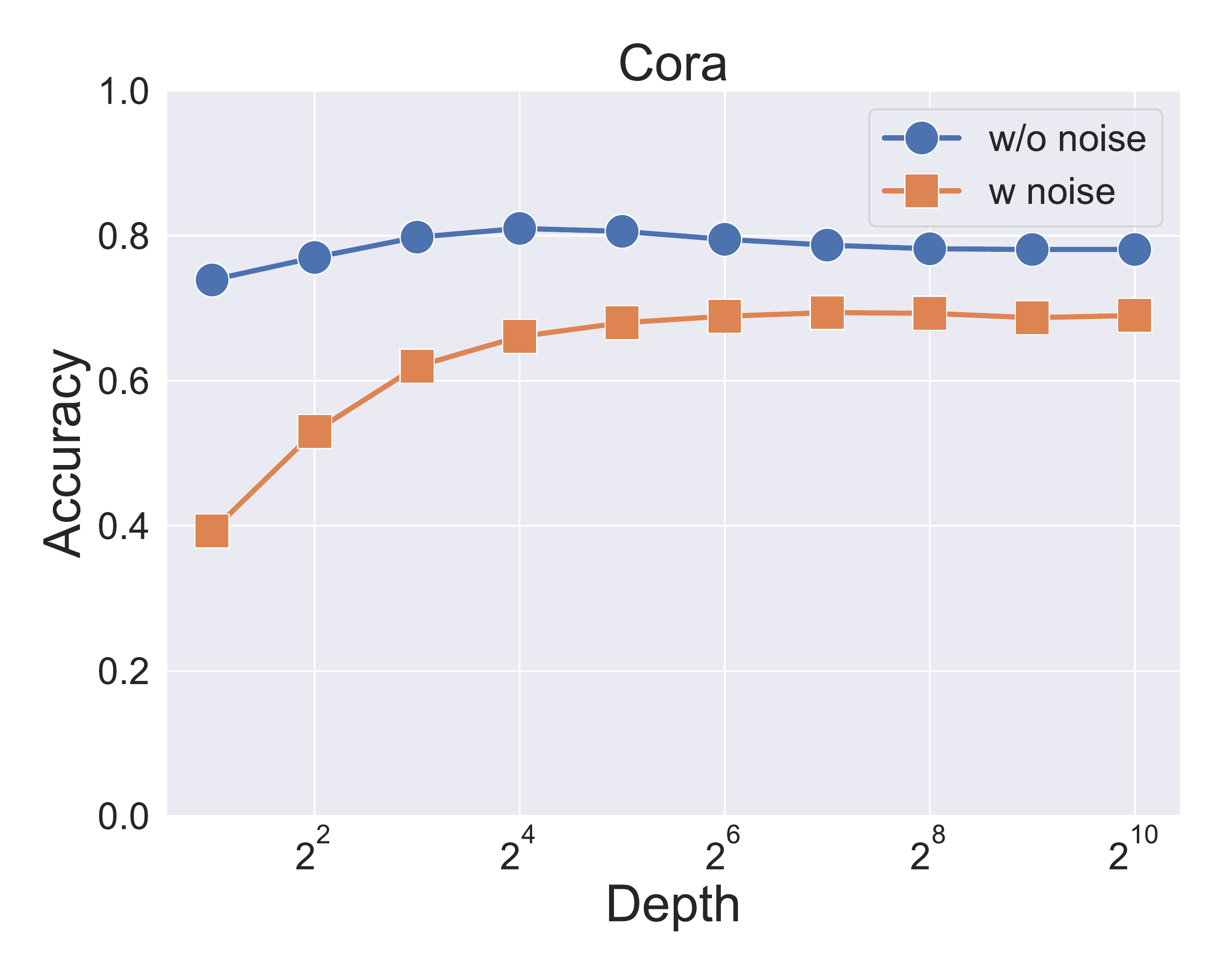}
        \includegraphics[width=0.325\textwidth]{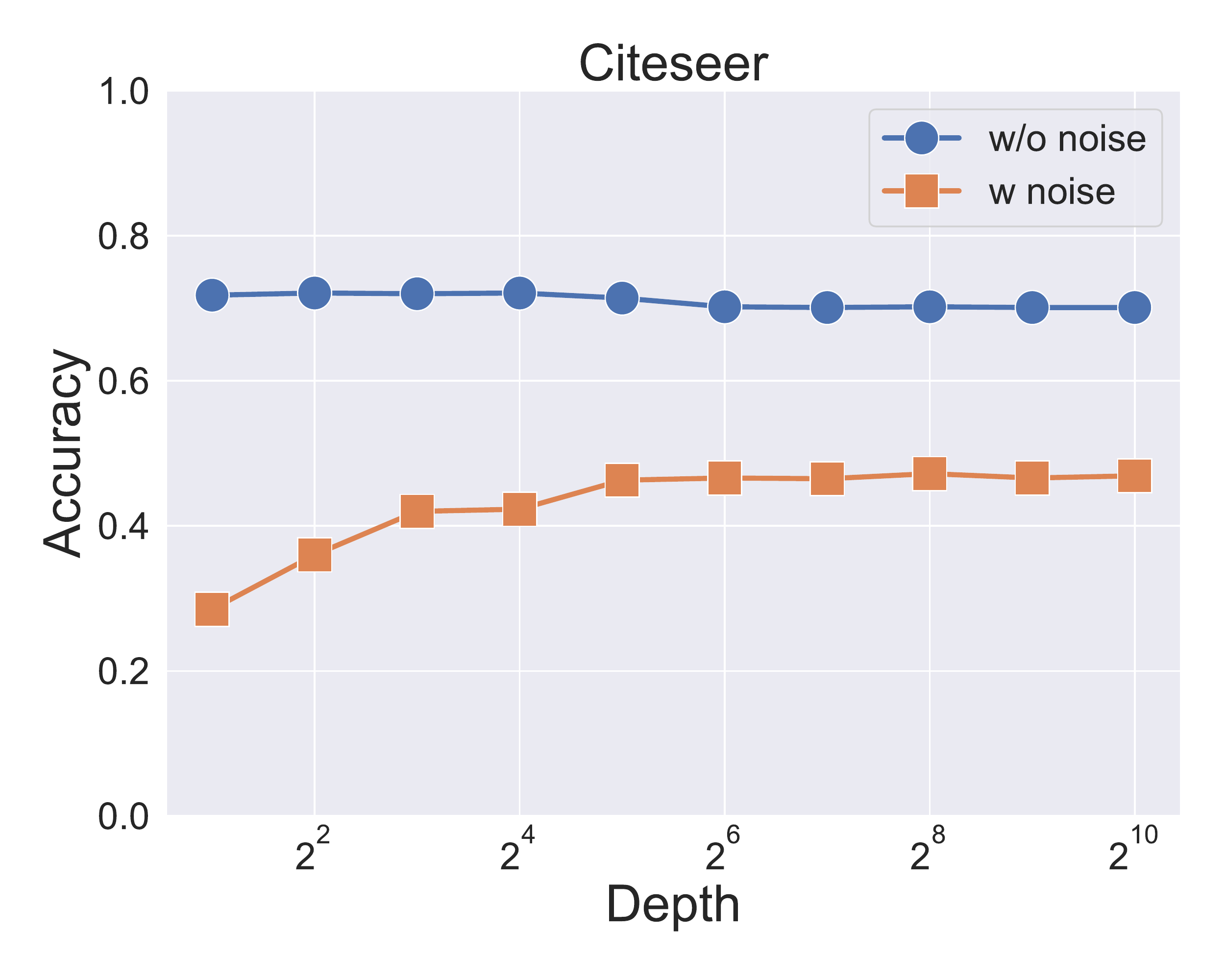}
        \includegraphics[width=0.325\textwidth]{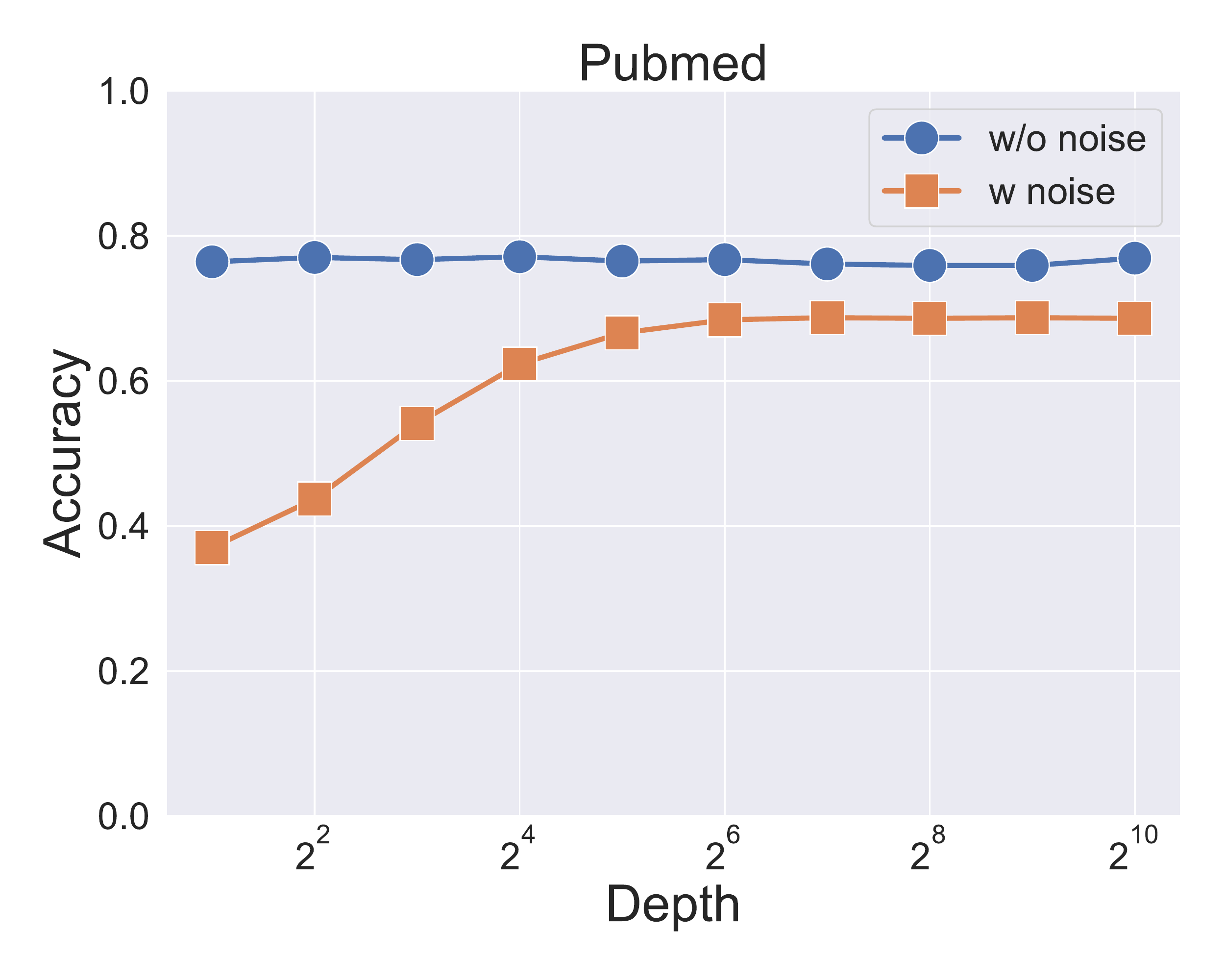}
        \end{center}
        \caption{Comparison of classification accuracy v.s. \name model depth on semi-supervised node classification tasks. The experiments are conducted on clean and noisy features.} 
        \label{fig:depth_ngc}
 \end{figure}
 
 \begin{figure}[t]
    \begin{center}
        \includegraphics[width=0.325\textwidth]{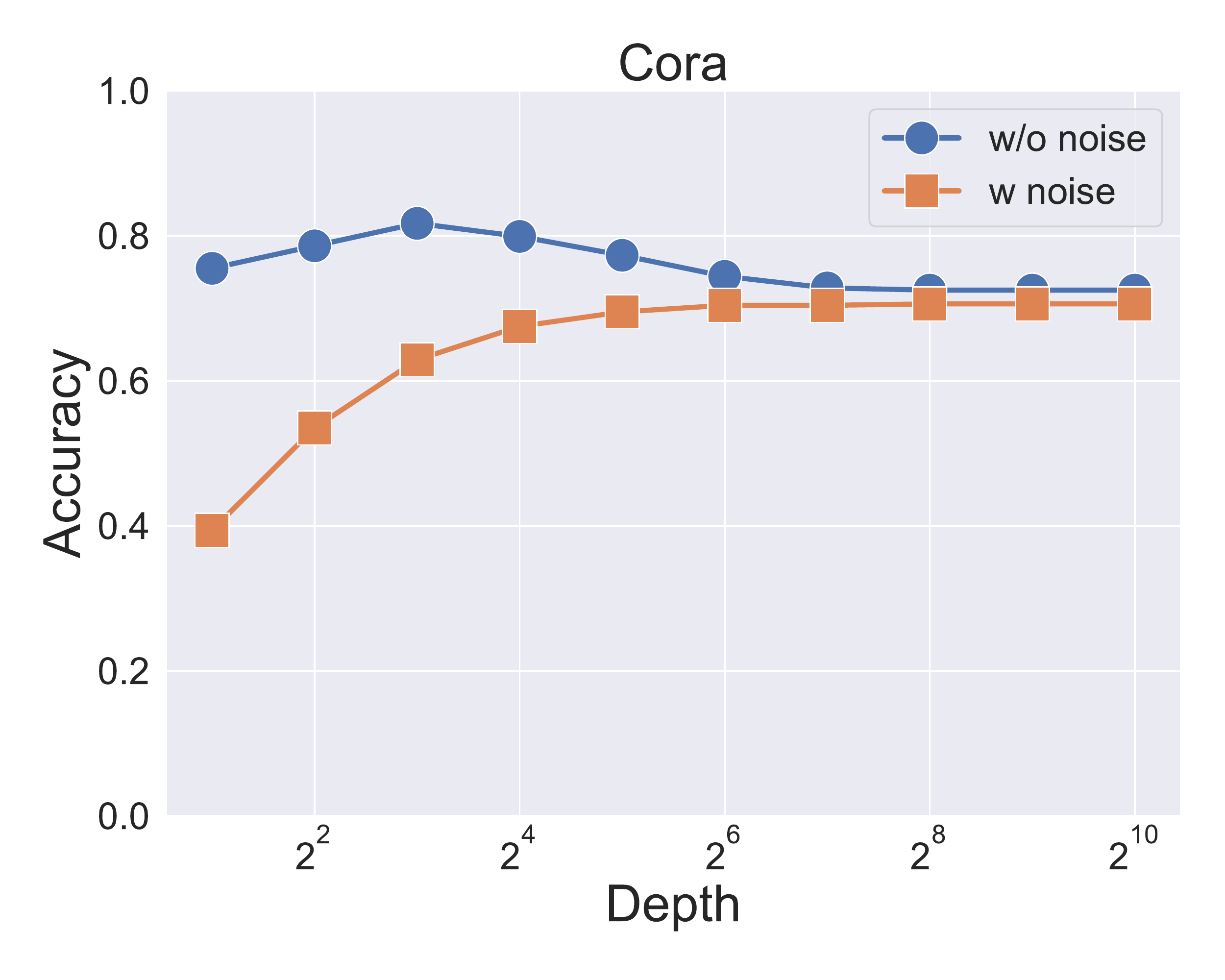}
        \includegraphics[width=0.325\textwidth]{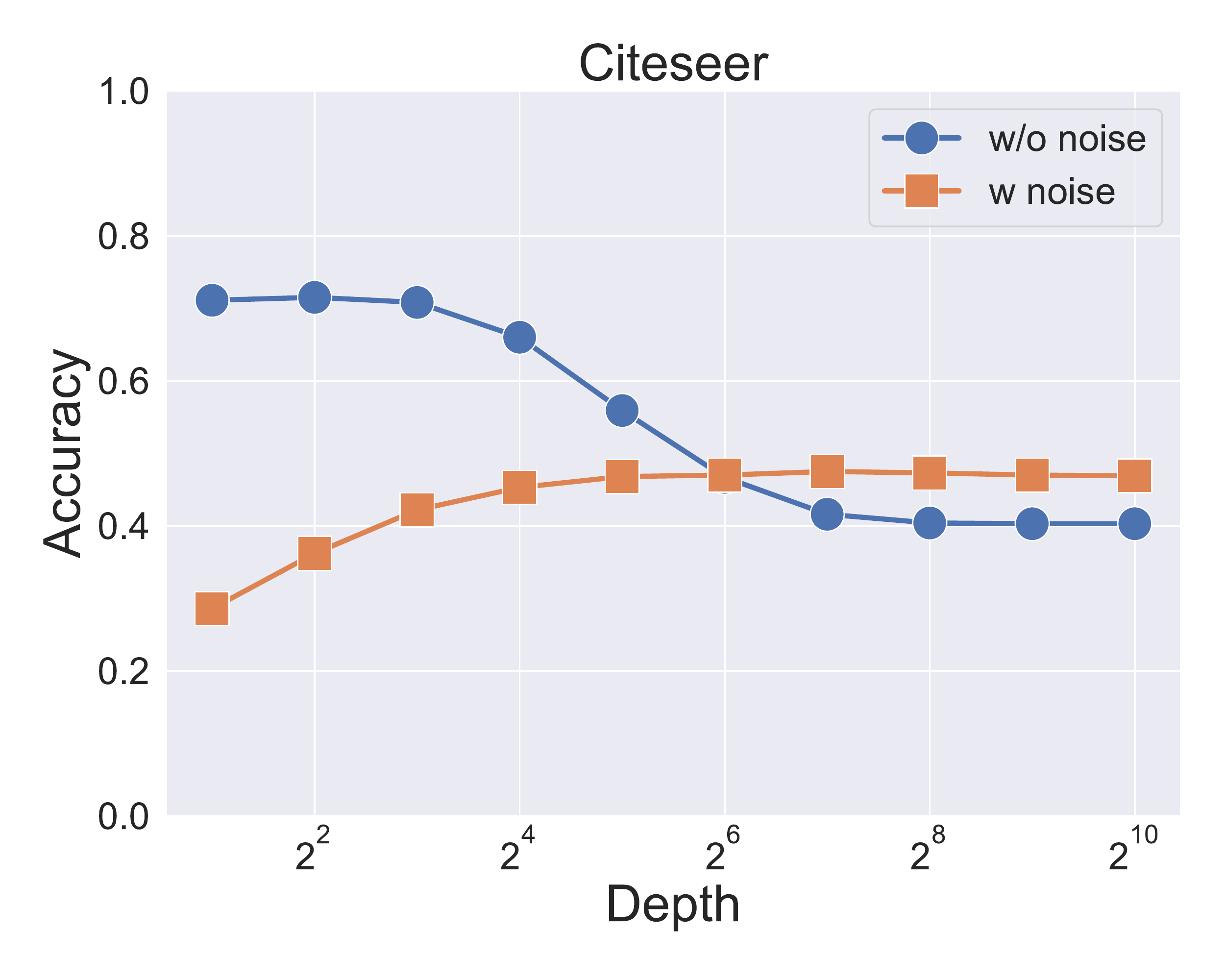}
        \includegraphics[width=0.325\textwidth]{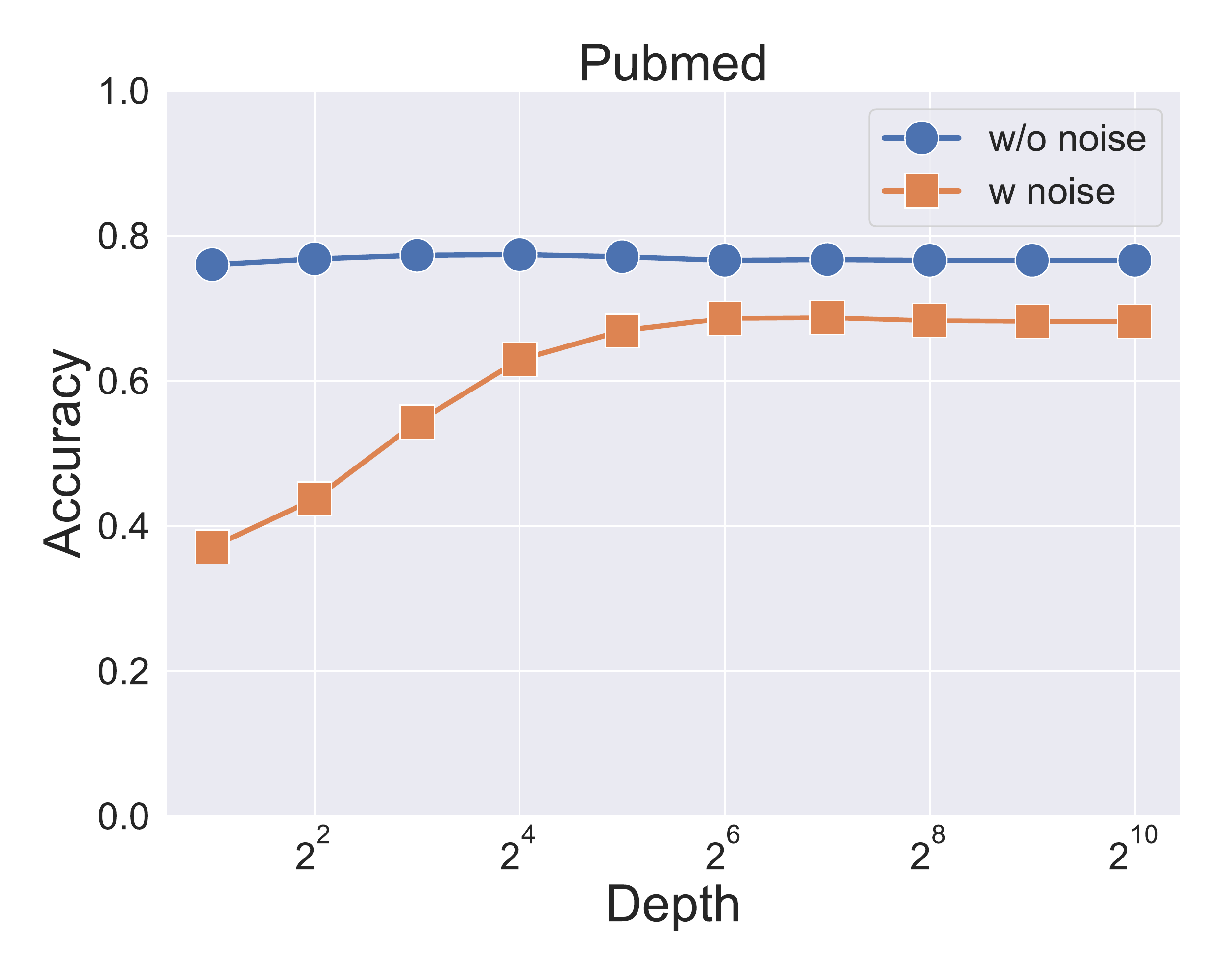}
        \end{center}
        \caption{Comparison of classification accuracy v.s. R\name model depth on semi-supervised node classification tasks. The experiments are conducted on clean and noisy features.} 
        \label{fig:depth_rngc}
 \end{figure}

\end{document}